\renewcommand{\cite}{\citep}
\newcommand{\emattn}{\textup{EMA}\xspace}
\newcommand{\annattn}{\textup{ANNA}\xspace}
\newcommand{\attn}{\textup{Attn}\xspace}
\newcommand{\Kernelattn}{\textup{Low-rank attention}\xspace}
\newcommand{\kernelattn}{\textup{low-rank attention}\xspace}
\newcommand{\matchtwo}{\textup{Match2}\xspace}
\newcommand{\SUM}{\textup{SUM}\xspace}
\newcommand\T{{\scriptscriptstyle{\mathsf{T}}}}
\renewcommand\top{\T}
\newcommand\wordinput{\mathtt{input}}
\newtheorem{theorem}{Theorem}[section]
\newtheorem*{theorem*}{Theorem}
\newtheorem{corollary}[theorem]{Corollary}
\newtheorem{proposition}[theorem]{Proposition}
\newtheorem{lemma}[theorem]{Lemma}
\newtheorem{definition}[theorem]{Definition}
\newtheorem*{definition*}{Definition}
\newtheorem{remark}[theorem]{Remark}
\newcommand{\R}{\mathbb{R}}
\newcommand{\Z}{\mathbb{Z}}
\newcommand{\Pbb}{\mathbb{P}}
\newcommand{\eps}{\varepsilon}
\DeclareMathOperator{\poly}{poly}
\newcommand{\floor}[1]{\lfloor {#1} \rfloor}
\newcommand{\ceil}[1]{\lceil {#1} \rceil}
\DeclareMathOperator*{\argmin}{arg\,min}
\newcommand{\softmax}{\textup{softmax}}
\newcommand{\pluseq}{\mathrel{+}=}
\title{Fast attention mechanisms: a tale of parallelism}
\author[1]{Jingwen Liu}
\author[1]{Hantao Yu}
\author[2]{Clayton Sanford}
\author[1]{Alexandr Andoni}
\author[1]{Daniel Hsu}
\affil[1]{Department of Computer Science, Columbia University, New York, NY, USA}
\affil[2]{Google Research, San Franciso, CA, USA}
\date{}
\begin{document}

\maketitle

\begin{abstract}
Transformers have the representational capacity to simulate Massively Parallel Computation (MPC) algorithms, but they suffer from quadratic time complexity, which severely limits their scalability. We introduce an efficient attention mechanism called Approximate Nearest Neighbor Attention (ANNA) with sub-quadratic time complexity. We prove that ANNA-transformers (1) retain the expressive power previously established for standard attention in terms of matching the capabilities of MPC algorithms, and (2) can solve key reasoning tasks such as Match2 and $k$-hop with near-optimal depth. Using the MPC framework, we further prove that constant-depth ANNA-transformers can simulate constant-depth low-rank transformers, thereby providing a unified way to reason about a broad class of efficient attention approximations. 
\end{abstract}

\section{Introduction}
\label{sec:intro}

The transformer \cite{transformer} has become the dominant neural architecture in deep learning due to its ability to select and compose complex information structures from large inputs~\cite{gpt,deepseek,gemini}, which in turn enables capabilities such as ``in-context learning''~\cite{brown2020language} that are crucial for downstream applications.
At the core of transformers is the attention mechanism, which leverages parallelism to gain important advantages over non-parallel architectures (e.g., recurrent nets), including training stability~\cite{miller2018stable} and representational power~\cite{transformer-automata,sanford2023representational,merrill2024illusion,transformer-mpc,jelassi2024repeat,bhattamishra2024separations,wen2025rnns}.
One recently highlighted advantage comes from a coarse theoretical relationship between transformers and the Massively Parallel Computation (MPC) model~\cite{mpc} that captures the power of large-scale distributed computing frameworks like MapReduce~\cite{mapreduce}: efficient MPC algorithms can be simulated by small-size transformers, and vice versa~\cite{transformer-mpc,sanford2024understandingtransformerreasoningcapabilities}.
This correspondence suggests that a broad class of computational tasks can be efficiently solved by transformers.

Despite these advantages, transformers suffer from a key limitation: the quadratic time complexity of (standard) attention with respect to input size, which is likely to be unavoidable in the worst-case~\cite{keles2023computational,alman2023fast,alman2024fundamentallimitationssubquadraticalternatives}.
To address this limitation, a growing body of work proposes alternatives to the attention mechanism that are more computationally efficient (e.g., sub-quadratic time).
These alternatives employ a variety of techniques, ranging from low-rank approximation~\cite{performer,wang2020linformer,polysketchformer,based,scatterbrain} to efficient nearest neighbor search~\cite{reformer,hyperattention, kdeformer, routingtransformer}.

Although many of these techniques show promising empirical performance, it is unclear whether they preserve the representational advantages of attention.

In this work, we study a particular efficient (sub-quadratic time) attention mechanism, called Approximate Nearest Neighbor Attention (ANNA), and we prove that ANNA retains key representational advantages of standard attention over non-parallel architectures.

We additionally prove that ANNA-transformers can simulate a class of attention mechanisms based on low-rank approximation.
In doing so, we provide a unified way to reason about low-rank and nearest neighbor approaches to efficient attention.

\subsection{Standard transformers and MPC}
\label{sec:prior-mpc}

Prior work~\cite{transformer-mpc,sanford2024understandingtransformerreasoningcapabilities} established a coarse relationship between standard transformers and MPC by proving the following (for any constant $\delta \in (0,1)$):
\begin{enumerate}
  \item For any MPC algorithm with inputs of size $N$ that uses $R$ rounds (of computation and communication), $O(N)$ machines, and $O(N^\varepsilon)$ words of local memory per machine (for some constant $\varepsilon \in (0,1)$), there is an equivalent transformer of width $O(N^{\varepsilon+\delta})$ and depth $O(R)$.
    (By width, we mean number of heads per layer times the embedding dimension.)
    \label{item:mpc2tf}

  \item For any transformer operating on inputs of size $N$ that has $L$ layers and width $O(N^\varepsilon)$ (for some constant $\varepsilon \in (0,1)$), there is an equivalent MPC algorithm that uses $O(L)$ rounds, $O(N^2)$ machines, and $O(N^{\varepsilon+\delta})$ words of local memory per machine.
    \label{item:tf2mpc}

\end{enumerate}
Notice that the ``loss'' in the number of rounds or depth is only a constant factor, and the ``loss'' in the local memory size or width is only a $O(N^\delta)$ factor (where $\delta>0$ can be arbitrarily small).
Consequently, these results were sufficient to give new results about transformer representational power (e.g., for various graph reasoning tasks~\cite{transformer-mpc,sanford2024understandingtransformerreasoningcapabilities}), and also give a (conditional) logarithmic-depth lower bound for transformers that solve a multi-hop reasoning problem called $k$-hop~\cite{transformer-mpc}.

However, there is a important gap in the MPC algorithm for simulating a transformer (\Cref{item:tf2mpc} above): \emph{it may use up to $N^2$ machines}.
This gap suggests the possibility that transformers are strictly more powerful than MPC algorithms, so the relationship established in prior work is very coarse.
Moreover, this gap is likely to be inevitable.
Indeed, when the local computation in an MPC algorithm is fast (say, polynomial-time), and the local memory size is $N^\varepsilon$ for small $\varepsilon\in(0,1)$, then a single round of MPC on $M$ machines can be simulated on a sequential machine in roughly $MN^{O(\varepsilon)}$ time.
However, evaluation of attention over $N$ inputs is believed to require at least $N^{2-\delta}$ time (for every constant $\delta>0$) on a sequential machine~\cite{keles2023computational,alman2023fast,alman2024fundamentallimitationssubquadraticalternatives}.
So any such MPC algorithm that simulates attention in $\tilde O(1)$ rounds should use $N^{2-\delta-O(\varepsilon)}$ machines.

This gap naturally leads us to the following question: Is there an alternative attention mechanism that more tightly captures the computational power of MPC?

\subsection{Our contributions}
\label{sec:contributions}

We prove that ANNA-transformers more sharply capture the power of MPC algorithms than standard transformers do, and in particular avoid the aforementioned gap in prior works' characterization of transformers using MPC~\cite{transformer-mpc,sanford2024understandingtransformerreasoningcapabilities}.
The core ANNA mechanism is designed to perform $c$-approximate nearest neighbor search~\cite{kushilevitz1998efficient,LSH} for a given set of $N$ ``queries'' against a database of $N$ ``key-value'' pairs.
Here $c>1$ is a parameter of the ANNA mechanism that, for the purpose of this present informal description, should be thought of as being at least a large positive constant (e.g., $c \geq 10$).
Throughout, $N$ denotes the input size.

\begin{theorem}[Informal version of \Cref{thm:ANNA-simulate-mpc,thm:MPC-simulate-ANNA}]
  \label{thm:informal}
  Fix any constants $\varepsilon, \delta \in (0,1)$ and ANNA parameter $c>1$.
  For any $R$-round MPC algorithm with $N^{\varepsilon}$ words of local memory, there is an equivalent $O(R)$-layer ANNA-transformer of width $O(N^{\varepsilon+\delta})$.
  For any $L$-layer ANNA-transformer with width $N^{\varepsilon}$, there is an equivalent $O(L)$-round MPC algorithm that uses $N^{\varepsilon+\delta}$ words of local memory and $N^{1-\delta+O(1/c^2)}$ machines. 
\end{theorem}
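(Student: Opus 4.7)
The plan is to handle the two directions separately. For the forward direction (MPC-to-ANNA), I would follow the template established in prior work for standard attention and argue that a $c$-approximate nearest neighbor lookup still suffices to implement arbitrary MPC message routing. In each MPC round every machine sends messages tagged by a destination index; if these indices are encoded as well-separated vectors, for example scaled one-hot codes or a more succinct code that still fits inside width $O(N^{\varepsilon+\delta})$, then for each query the correct destination key is at distance $0$ while every other key is at distance bounded below by any desired factor strictly larger than $c$. Any $c$-approximate nearest neighbor call therefore coincides with exact routing, and each MPC round translates to $O(1)$ ANNA layers together with position-wise MLPs that carry out the local computation. Composing gives $O(R)$ layers.

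For the reverse direction (ANNA-to-MPC), the new ingredient is an efficient MPC implementation of $c$-approximate nearest neighbor search itself. I would use locality-sensitive hashing with $N^{O(1/c^2)}$ hash tables drawn from a family with LSH exponent $\rho = O(1/c^2)$, so that each query's true nearest neighbor collides with the query in some table with high probability while the total number of (query, candidate) collisions across all queries is $N^{1+O(1/c^2)}$. Implementing this in MPC reduces to two standard primitives: routing the $N^{1+O(1/c^2)}$ (key, bucket-id) pairs to machines indexed by bucket, and gathering for each query the list of colliding keys; both can be done in $O(1)$ rounds by sort-and-group primitives. Each resulting candidate pair requires an $O(N^\varepsilon)$-word distance computation, so the total layer work is $N^{1+\varepsilon+O(1/c^2)}$. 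Distributed across machines of $N^{\varepsilon+\delta}$ local memory, this is $N^{1-\delta+O(1/c^2)}$ machines, as claimed. The softmax/weighted-value aggregation over candidates per query is then a standard group-sum in $O(1)$ more rounds, and we repeat per layer to reach $O(L)$ rounds overall.

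The main obstacle will be the load-balancing analysis of the LSH step: a priori, some hash buckets could be much larger than $N^{\varepsilon+\delta}$ and overflow local memory. I expect to resolve this either (i) by choosing LSH parameters so that bucket sizes concentrate below $N^{\varepsilon+\delta}$ with high probability, exploiting that the average number of collisions per query is only $N^{O(1/c^2)}$, or (ii) by inserting a secondary distribution step that splits oversized buckets across extra machines while staying inside the stated machine and round budgets. A second, smaller subtlety is that an LSH-based procedure only returns a $c$-ANN with high probability per query; I would either formalize the ANNA definition to tolerate $1/\poly(N)$ failure per query, or boost success by repeating $O(\log N)$ independent LSH instances and folding the polylogarithmic overhead into $\delta$. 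Once these two points are handled, the remaining round, width, and memory accounting is routine, and both directions compose layerwise to yield the theorem.
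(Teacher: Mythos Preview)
Your forward direction is essentially the paper's argument: it observes that ANNA with $r=0$ reduces to exact-match retrieval, then routes MPC messages by setting each receiver's query to its own identifier and each sender's key to the destination identifier, so the intended match is the unique $r$-near neighbor. One point you gloss over is that a receiver may match \emph{several} senders' keys simultaneously, and ANNA then returns an average of their values; the paper recovers the individual messages with the error-correcting encoding from prior work, and uses one head per outgoing-message slot together with a bounded-fanout reduction to keep the width at $O(N^{\varepsilon+\delta})$. These are details to fill in, but the plan is sound.

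The reverse direction has a real gap. Your work estimate rests on the claim that the total number of (query, candidate) collision pairs is $N^{1+O(1/c^2)}$, but this is false in general: bucket occupancy is data-dependent, and if many keys lie within $cr$ of many queries (in the extreme, all keys identical) a single bucket can hold $\Theta(N)$ keys and the number of collision pairs across the $\ell$ tables is $\Theta(N^2\ell)$. No choice of LSH parameters prevents this, and your option~(ii) of splitting oversized buckets repairs load balance but does not reduce the total pair count, so the $N^{1-\delta+O(1/c^2)}$-machine bound does not follow from your accounting. The paper avoids the problem by exploiting that what must be simulated is specifically its LSH-based ANNA implementation, which never enumerates pairs or computes distances: per hash table and per bucket it needs only the \emph{sum} of the values and the \emph{count} of keys landing there. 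The MPC simulation therefore sorts the $N\ell$ (key, hash-code) items by hash code---sorting is automatically load-balanced regardless of bucket sizes---and then computes per-bucket sums and counts by an $(s/m)$-ary aggregation tree, propagating the aggregates back to queries symmetrically. The total data volume stays at $O(mN\ell)=O(N^{1+\varepsilon+O(1/c^2)})$ words irrespective of how many near neighbors exist, and that is what yields the stated machine bound.
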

Observe that \Cref{thm:informal} essentially mirrors \Cref{item:mpc2tf,item:tf2mpc} in \Cref{sec:prior-mpc} except for the resources needed by the MPC algorithm in \Cref{item:tf2mpc}: the local memory stays the same, but the number of machines required can be strongly sub-quadratic (and in fact, close to linear) in $N$.
This gives a positive answer to the question at the end of \Cref{sec:prior-mpc}.

We also study efficient attention mechanisms based on low-rank approximation, and by leveraging \Cref{thm:informal}, we prove that ANNA-transformers have at least the same representational power.

\begin{theorem}[Informal version of \Cref{thm:EMA/ANNA-simulate-lowrank}]
    For any $L$-layer \kernelattn transformer, there is an equivalent $O(L)$-layer ANNA-transformer (of comparable width).
\end{theorem}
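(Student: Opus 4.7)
The plan is to route the proof through the MPC framework and invoke \Cref{thm:informal}. First I would show that a single layer of \kernelattn can be simulated by a constant-round MPC algorithm whose local memory matches the transformer's width; chaining this simulation over $L$ layers produces an $O(L)$-round MPC algorithm, and applying the MPC-to-\annattn direction of \Cref{thm:informal} then yields the claimed $O(L)$-layer \annattn-transformer.

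The key step is the MPC simulation of a single \kernelattn layer. Such a layer takes inputs $X_1,\dots,X_N\in\R^d$, forms query/key/value vectors $Q_i,K_i,V_i$, and---via a low-rank factorization with feature map $\phi:\R^d\to\R^r$---computes, at each position $i$,
\[
  \mathrm{Output}_i \;=\; \phi(Q_i)^\top M,
  \qquad
  M \;=\; \sum_{j=1}^N \phi(K_j)\, V_j^\top \;\in\; \R^{r\times d}.
\]
If the mechanism is a normalized approximation of softmax attention, the same outer-product trick is applied in parallel to compute the scalar normalizer $\phi(Q_i)^\top \sum_j \phi(K_j)$. I would partition the $N$ tokens evenly across $O(N^{1-\varepsilon})$ machines, so each holds $N^\varepsilon$ tokens. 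In one round, each machine computes its local partial sum of the outer products $\phi(K_j) V_j^\top$; in $O(1)$ further rounds those partial sums are aggregated by a tree reduction and the completed $M$ is broadcast. Each machine then computes $\phi(Q_i)^\top M$ locally for its own tokens. Provided $r\cdot d = O(N^\varepsilon)$---which holds whenever the \kernelattn transformer has width $O(N^\varepsilon)$---this runs in $O(1)$ MPC rounds with $N^\varepsilon$ local memory.

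Repeating layer-by-layer produces an MPC algorithm with $O(L)$ rounds, $N^\varepsilon$ local memory, and near-linearly many machines that faithfully simulates the \kernelattn transformer. Invoking the MPC-to-\annattn direction of \Cref{thm:informal} (with arbitrarily small $\delta>0$ and any fixed $c>1$) yields an equivalent $O(L)$-layer \annattn-transformer of width $O(N^{\varepsilon+\delta})$, comparable to the original.

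The main obstacle I anticipate is ensuring that the feature map $\phi$---which for kernelized attentions can involve polynomial features, random Fourier features, or $\Exp$-type kernels---is itself computable within the limited per-token local computation afforded by each MPC machine (and subsequently by each \annattn layer's token-wise MLP). For standard low-rank attention variants these feature maps are simple enough that the verification is routine, but the generality of the statement requires pinning down exactly which class of low-rank mechanisms is in scope and checking that both $\phi$ and the normalization step fit inside the width budget.
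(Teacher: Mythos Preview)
Your proposal is correct and follows essentially the same route as the paper: simulate one \kernelattn layer by an $O(1)$-round MPC protocol via a tree reduction of the outer-product sum $M=\sum_j \phi(K_j)V_j^\top$ followed by broadcast, stack $L$ layers, and then invoke the MPC-to-\annattn direction of \Cref{thm:informal}. Your anticipated obstacle about computing $\phi$ dissolves in this model, since both MPC local computation and the transformer's element-wise operations $Q,K,V,\psi$ are explicitly allowed to be arbitrary functions (limited only by bit precision).
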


Finally, we illustrate the power of ANNA-transformers on two concrete reasoning tasks (Match2~\cite{sanford2023representational} and $k$-hop induction heads~\cite{bietti2023birth,transformer-mpc}).
We give theoretical constructions of ANNA transformers for these tasks that nearly match the efficiency achievable by standard transformers (\Cref{thm:ANNA solves match2,thm:ANNA solves khop}), and we also show empirically that ANNA-transformers can be trained to approximately solve these tasks (\Cref{sec:experiments}).

\subsection{Other related works}

The representational power of standard attention is relatively well understood from a variety of perspectives.
This includes: universal approximation properties in the large depth/width limit~\cite{yun2019transformers,wei2021statistically,merrill2022saturated,giannou2023looped}, ability to recognize formal languages~\cite{Hahn_2020,bhattamishra2020ability,yao2021self,transformer-automata,Strobl2023WhatFL}, computational bounds in terms of circuit classes~\cite{Hao_2022,merrill2022parallelism,transformer-automata,li2024chain} and other parallel computation models (discussed below)~\cite{transformer-mpc,sanford2024understandingtransformerreasoningcapabilities}, and bounds for specific compositional tasks~\cite{transformer-automata,bietti2023birth,sanford2024onelayer,peng2024limitations,alex2024lower,chen2024theoretical}.
In general, these and other prior works do not consider the representational power of transformers based on efficient (sub-quadratic time) alternatives to attention.
The exceptions are works that give lower bounds for low-rank attention and sparse attention~\cite{transformer-mpc,yang2024efficient}, sequential architectures~\cite{sanford2023representational,jelassi2024repeat,bhattamishra2024separations,merrill2024illusion,wen2025rnns}, more generally, any mechanism that can be evaluated in sub-quadratic time~\cite{keles2023computational,alman2023fast,alman2024fundamentallimitationssubquadraticalternatives}.
What is missing in these prior works, however, is a characterization (and, in particular, upper bounds) for an efficient architecture.

Massively Parallel Computation (MPC)~\cite{mpc,beame2017communication,mpc-sorting,andoni2014parallel,im2023massively,mpc-textbook} is a model of parallel computing intended to capture the power of MapReduce~\cite{mapreduce} and other large-scale distributed computing frameworks.
Many works, including those that originally helped to define the MPC model, gave efficient algorithms for a variety of basic data and graph processing tasks.
A connection between MPC and circuit models was given by~\cite{roughgarden2018shuffles} and used to formalize a barrier on proving lower bounds for certain graph problems (e.g., connectivity) in MPC.
Nevertheless, certain conjectured lower bounds in MPC are widely believed (e.g., the 1-vs-2 cycle conjecture~\cite{im2023massively}) and have been used to establish conditional lower bounds for other problems~\cite{Ghaffari2019ConditionalHR}.
The same conjectured lower bounds have also been used to establish depth lower bounds for transformers via the aforementioned relationship between MPC and transformers~\cite{transformer-mpc,sanford2024understandingtransformerreasoningcapabilities}.

Our main result for ANNA-transformers has an analogue in the context of message-passing graph neural networks (GNNs).
The computational power of such GNNs is characterized by the CONGEST model of distributed computing~\cite{peleg2000distributed} (a refinement of LOCAL~\cite{angluin1980local,linial1992locality,naor1993can}).
This fact was established by \cite{loukas2020what} and in turn used to give upper- and lower-bounds on the size of GNNs needed to solve various graph problems (e.g., subgraph detection).
The GNN/CONGEST equivalence is almost immediate, since the communication network is fixed by the input graph in both models, and the model definitions are semantically and syntactically similar.
In contrast, while the layered representations of transformers are reminiscent of the alternation between communication and computation rounds in MPC, the ``communication patterns'' themselves are dynamic, and this dynamism greatly complicates the simulation of MPC algorithms by either standard transformers or ANNA-transformers.

The ANNA mechanism that we study is inspired by many prior efficient attention mechanisms~\cite[e.g.,][]{reformer,hyperattention,kdeformer} based on locality-sensitive hashing~\cite{LSH}, which is one of the key techniques for nearest neighbor search.
Some of these mechanisms have guarantees about the quality of approximation under structural assumptions on the attention matrix they are meant to approximate.
The motivation of our analysis is largely orthogonal: we instead seek to characterize the representational power of ANNA-transformers in terms of other well-understood models of parallel computation.
We further compare the capabilities of LSH-based efficient attention to other sub-quadratic alternatives, including those based on low-rank approximations of self-attention matrices~\cite{performer,wang2020linformer,polysketchformer}.

\section{Preliminaries}

\subsection{Standard attention and transformers}
We first define the (standard) attention mechanism and transformers.
\begin{definition}[Attention] \label{defn:attn}
A \emph{(standard) attention head} $\attn_{Q,K,V}$ is specified by query, key, value embedding functions $Q,K,V: \R^d \to \R^m$.
On input $X \in \mathbb{R}^{N \times d}$, it computes
\[
\attn_{Q,K,V}(X) := \softmax (Q(X)K(X)^{\top})V(X) \in \mathbb{R}^{N \times m},
\]
where $Q$, $K$, $V$, and $\softmax$ are applied row-wise.
We say $N$ is the \emph{context length}, and $m$ is the \emph{embedding dimension}.
The rows of $Q(X)$ (resp., $K(X)$, $V(X)$) are the \emph{queries} (resp., \emph{keys}, \emph{values}).
If $q_i = Q(X)_i$, $k_j = K(X)_j$, and $v_j = V(X)_j$, then the $i$-th row of $\attn_{Q,K,V}(X)$ is
\[
\attn_{Q,K,V}(X)_i =
\sum_{j=1}^N w_{i,j} v_j , \quad \text{where} \quad w_{i,j} = \frac{\exp(\langle q_i,k_j\rangle)}{\sum_{j'=1}^{N}\exp(\langle q_i,k_{j'}\rangle)}
.
\]
An $H$-headed \emph{attention layer} $f \colon \mathbb{R}^{N \times d} \to \mathbb{R}^{N \times d}$ consists of attention heads $(\attn_{Q_h,K_h,V_h})_{h=1}^H$ and $m \times d$ matrices $(W_h)_{h=1}^H$; it computes
$f(X) := \sum_{h=1}^H \attn_{Q_h,K_h,V_h}(X) W_h$.
\end{definition}

\begin{definition}[Transformer] \label{defn:tf}
An $L$-layer \emph{transformer} $\mathcal{T}$ is specified by attention layers $f_1,\ldots,f_L: \mathbb{R}^{N \times d} \rightarrow \mathbb{R}^{N \times d}$ and an output function $\psi : \mathbb{R}^d \to \mathbb{R}^d$.
Given input $X \in \mathbb{R}^{N \times d}$, define
\[
X^{(0)} := X \quad \text{and} \quad
X^{(\ell)} := f_{\ell}(X^{(\ell-1)})
\quad \text{for $\ell=1,\dotsc,L$} .
\]
The output of $\mathcal{T}$ on input $X$ is $\psi(X^{(L)})$ with $\psi$ being applied row-wise.
\end{definition}

In this paper, we consider the context length $N$ (the number of input tokens) as the principal scaling parameter.
This reflects the modern paradigm of long-context LLMs, where the context-length can exceed $10^6$~\cite{gemini}, enabling book-length textual inputs.
Consequently, we typically want the size parameters $m$, $H$, and $L$ to be sub-linear in $N$ (and $L$ ideally constant). 
The $O(N^2)$ runtime of attention therefore remains the main bottleneck of the transformer architecture.

Following~\cite{transformer-mpc}, we allow the element-wise operations ($Q_h, K_h, V_h, \psi$) to be arbitrary functions \footnote{Such assumptions are necessary for establishing the equivalence with the MPC model, since the MPC model allows arbitrary computation on the local memory for each machine. That said, many concrete MPC algorithms do have a simple local algorithm which can be simulated by a small MLP.} (limited only by bit-precision; see \cite[Appendix A.1]{transformer-mpc}).
Therefore, we can view a transformer as a computational model that alternates between arbitrary per-token computation, and communication between tokens.
This motivates a connection to massively parallel computation, defined next.

\subsection{Massively Parallel Computation}

The Massively Parallel Computation (MPC) framework~\cite{im2023massively} models computation on large inputs by a distributed computing system that alternates between rounds of local computation and rounds of restricted all-to-all communication.

We formally state the definition of MPC protocols (with sublinear memory) as follows.

\begin{definition}[MPC]
    For constants $\gamma, \eps>0$, an $R$-round $(\gamma, \eps)$-MPC protocol specifies the following computation on inputs of $N$ words (where a word is $p = \Theta(\log N)$ bits, represented by an element of $\mathbb{Z}_{2^p}$) by $P = \Theta(N^{1+\gamma-\eps})$ machines, each with local memory $s = O(N^{\eps})$ words:
    \begin{enumerate}
        \item Initially, the input is arbitrarily distributed across the first $\ceil{\frac{N}{s}}$ machines.
        \item In each round, each machine prepares, as an arbitrary function of its local memory, messages to send to other machines.
        The total size of messages prepared by any machine is at most $s$ words.
        \item At the end of each round, the messages are placed in the local memory of the intended recipients.
        The protocol ensures that the messages received by any machine has total size at most $s$ words.
        \item After the $R$-th round, the output is stored in the memory of the first $\ceil{\frac{N}{s}}$ machines.
    \end{enumerate}
    We say an MPC protocol $\pi$ computes a function $f: \Z_{2^p}^N \rightarrow \Z_{2^p}^N$, if for any $X\in \Z_{2^p}^N, \pi (X)=f(X)$, where $\pi (X)$ is the output of $\pi$ with the input $X$. 
\end{definition} 
The primary measure of complexity considered in this paper is the number of rounds $R$.
The round complexity of numerous classical algorithmic problems is well understood.
For example, there are simple MPC protocols for
graph connectivity ($O(\log N)$ rounds)
and sorting ($O(1)$ rounds)~\cite{mpc-textbook}.

\section{Approximate Nearest Neighbor Attention}

In this section, we introduce Approximate Nearest Neighbor Attention (ANNA), an attention mechanism inspired by the approximate nearest neighbor (ANN) search problem. We first outline the approximate nearest neighbor search problem and present locality-sensitive hashing (LSH), a core technique for ANN (\Cref{sec:nns-lsh}). We then formally define ANNA and provide a sub-quadratic time algorithm based on LSH for computing ANNA with theoretical guarantees (\Cref{sec:ANNA}).

\subsection{Approximate nearest neighbor and locality sensitive hashing} \label{sec:nns-lsh}

We first define the Approximate Nearest Neighbor (ANN) search problem. 

\begin{definition}[ANN search problem~\cite{kushilevitz1998efficient,LSH}]
\label{def: ann}
Given a dataset $D$ of $N$ points lying in a metric space $Y$ and parameters $c,r>0$, build a data structure that, given a query $q \in Y$ within distance at most $r$ from $D$, returns any point in $D$ that is within distance $cr$ from $q$.
\end{definition}

In the modern machine learning setting, we want to develop fast algorithms for ANN search in the high-dimensional metric space with sub-linear query time. A well-known tool that achieves this runtime and provable approximation guarantees is locality sensitive hashing (LSH).
For simplicity, we assume the metric space is $m$-dimensional Euclidean space.

\begin{definition}[Locality Sensitive Hashing \cite{LSH}] \label{lsh}
Fix a parameter $r>0$, an approximation factor $c>1$ and a set $U$. Then a family $\mathcal{H}$ of hash functions $h:\R^m \rightarrow U$ is $(r, cr, p_1, p_2)$-sensitive if the following holds for any $x, y \in \R^m$:
\begin{itemize}
    \item \smash{if $\|x-y\|\leq r$, then $\Pr_{h\in \mathcal{H}}[h(x)=h(y)] \geq p_1$, and}
    \item \smash{if $\|x-y\| > cr$, then $\Pr_{h\in \mathcal{H}}[h(x)=h(y)] \leq p_2$.}
\end{itemize} The family $\mathcal{H}$ is called an LSH family with quality \smash{$\rho=\frac{\log (1/p_1)}{\log (1/p_2)}$}.
\end{definition}

A typical LSH-based algorithm can solve the ANN search problem with space $O(N^{1+\rho})$ and query time $O(N^{\rho})$, and $\rho$ can be as small as $1/c^2$~\cite{optimal-lsh}.

\subsection{Transformer based on Approximate Nearest Neighbor Attention} \label{sec:ANNA}

We first define a family of models where only tokens with sufficiently ``neighborly'' queries and keys attend to one another and then provide an efficient implementation of a subset of this class.
ANNA attention units treat attention query vectors as queries to approximate nearest neighbor search and key vectors as data points.
ANNA retrieves and weights value vectors according to approximate nearest neighbor thresholds.
The following definition formalizes this family of models.

\begin{definition}[ANN Attention]
An \emph{Approximate Nearest Neighbor Attention (ANNA)} mechanism $\annattn_{Q,K,V}$ with query, key, and value embedding functions $Q,K,V: \mathbb{R}^d\rightarrow \mathbb{R}^m$ and (non-negative) parameters $r, c, \ell, \eta$, is a (possibly randomized) mechanism that performs the following computation on an input $X \in \mathbb{R}^{N \times d}$:
\[
\annattn_{Q,K,V}(X)_i := \sum_{j=1}^N w_{i,j} v_j
\quad \text{for all $i \in [N]$} ,
\]
for some non-negative weights $w_{i,j} \geq 0$ with $\sum_j w_{i,j} = 1$ that satisfy the following.
With probability at least $1-\eta$, for all $i \in [N]$,
\begin{itemize}
    \item 
    \smash{$w_{i,j} > 0 \Rightarrow k_j \in \mathcal{N}(q_i,cr)$}
    \item
    \smash{$k_j \in \mathcal{N}(q_i,r) \Rightarrow w_{i,j} \geq \frac{1}{(|\mathcal{N}(q_i,cr)|-1)\ell + 1}$}
\end{itemize}

where \smash{$q_i := Q(X)_i$, $k_i := K(X)_i$, $v_i := V(X)_i$}, and
\smash{$\mathcal{N}(q,t) := \{ k \in \{k_j\}_{j=1}^N : \|q-k\| \leq t \}$}.

We define \emph{ANNA layers} and \emph{ANNA transformers} in a completely analogous fashion as (standard) attention layers and transformers are defined (\Cref{defn:attn,defn:tf}).
\end{definition}
The parameters $r, c$ have the same semantics as in ANN search.
The parameter $\ell$ captures how much ``attention weight'' is spread over keys that are not $r$-near neighbors of a query.
The failure probability $\eta$ allows for randomization, which is typical of ANN search algorithms like LSH.

The above definition represents a set of constraints that all ANNA units must satisfy rather than a specific algorithmic implementation.
As a result, a wide variety of models satisfy the definition, including softmax attention with bounded query and key vectors, and \emph{Exact-Match Attention (EMA)} where $w_{i,j} > 0$ if and only if $q_i = k_j$.
Not all such models admit computationally efficient implementations.
To identify sub-quadratic ANNA models, we present an LSH-based implementation of ANNA that computes satisfying weight vectors $w_{i,j}$ for specific choices of parameters $r, c, \ell, \eta$.

We define a hash function family 
$G=\{g: p \in \R^d \mapsto (h_1(p), h_2(p), \ldots, h_z(p)) \in U^z \ \mid \ h_i\in \mathcal{H}, \ \forall i\in [z] \}$
and sample $\ell$ hash functions, $g_1, \ldots, g_l$, from $G$ independently and uniformly at random, giving $\ell$ hash tables. Each hash code corresponds to a hash bucket in the hash table and each hash bucket maintains a sum of $v$'s and count of $k$'s that falls into this bucket. We preprocess all the key, value pairs by storing them in the hash tables. For each $(k_i, v_i), i\in [N]$, compute the hash codes of $k_i$, $g_1(k_i), g_2(k_i), \ldots, g_{\ell}(k_i)$, and update the sum and count for the buckets corresponding to $g_1(k_i), g_2(k_i), \ldots, g_{\ell}(k_i)$ respectively. For each query $q_i, i\in [N]$, search and retrieve all the values and counts from $g_1(q_i), g_2(q_i), \ldots, g_{\ell}(q_i)$. Then compute the averaged value by summing up all the values in the $\ell$ buckets, divided by the sum of counts. See Algorithm~\ref{ANN-algorithm} for the details.

\begin{algorithm}[t]
\caption{ANNA implementation with LSH family $\mathcal{H}$, $\ell$ hash tables, and $z$ hash functions/table}
\label{ANN-algorithm}
\begin{algorithmic}[1]
\Require
Input $X \in \mathbb{R}^{N \times d}$

\Ensure ANNA output for each of the query.

\State Let $q_i = Q(X)_i$, $k_i = K(X)_i$, and $v_i = V(X)_i$ for all $i \in [N]$.

\For{$u = 1$ to $\ell$}  \Comment{\textbf{Preprocessing phase}}
    \State Sample $z$ hash functions $h_{u,1}, h_{u,2}, \dots, h_{u,z}$ i.i.d.~from $\mathcal{H}$.
    \State Create empty hash table $T_u$ indexed by hash codes (below).
    \For{each key-value pair $(k_j, v_j)$}
        \State Compute hash code $g_u(k_j) = (h_{u,1}(k_j), h_{u,2}(k_j), \dots, h_{u,z}(k_j))$.

        \State \textbf{if} $T_u[g_u(k_j)]$ is empty, \textbf{then} $T_u[g_u(k_j)] := (v_j, 1)$  \textbf{else} $T_u[g_u(k_j)] \pluseq (v_j, 1)$.

    \EndFor
\EndFor

\State Initialize a dictionary $\textbf{attn} \gets \{(q_1, 0), (q_2, 0), \ldots, (q_N, 0)\}$
\For{each query $q_i$} \Comment{\textbf{Query phase}}
    \State $v_{\text{sum}} \gets 0; \text{count} \gets 0$
    \For{$u = 1$ to $\ell$}
        \State Compute hash code $g_u(q_i) = (h_{u,1}(q_i), \dots, h_{u,z}(q_i))$
        \State \textbf{if} $T_u[g_u(q_i)] = (v,a)$ is not empty, \textbf{then}
        $v_{\text{sum}} \pluseq v$ and $\text{count} \pluseq a$
    \EndFor
    \State $\textbf{attn}[q_i] \gets v_{\text{sum}} / \text{count}$
\EndFor
\State \Return \textbf{attn}
\end{algorithmic}
\end{algorithm}

\begin{theorem}[LSH algorithm guarantee for ANNA] \label{lsh-compute-ANN}
    Fix $c>\sqrt3$,
    LSH family $\mathcal{H}$ that is $(r,cr,p_1,p_2)$-sensitive with quality $\rho < 1/3$,
    $\ell = \Theta(N^{3\rho}\log N)$, and
    $z = \Theta(\log_{1/p_2} N)$.
    Then \Cref{ANN-algorithm} (with $\mathcal{H}$, $\ell$, and $z$) implements an ANNA mechanism with parameters $r, c, \ell$ and $\eta = \smash{O(1/N^{1-3\rho})}$.
\end{theorem}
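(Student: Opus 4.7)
The algorithm is cleanly analyzed via \emph{collision counts}: for each pair $(i,j)$, let
$C_{i,j} := |\{u \in [\ell] : g_u(k_j) = g_u(q_i)\}|$
be the number of hash tables in which $q_i$ and $k_j$ share a bucket. A direct inspection of Algorithm~\ref{ANN-algorithm} shows that the weight it implicitly assigns is $w_{i,j} = C_{i,j}/S_i$, where $S_i := \sum_{j'=1}^N C_{i,j'}$ is the total count aggregated into the buckets hit by $q_i$. The whole proof reduces to controlling these counts with high probability over the random hash functions, and then verifying the two ANNA constraints deterministically.

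I would isolate two bad events and union-bound over both. Let $E_1$ be the event that some \emph{far} pair ($\|q_i - k_j\| > cr$) collides in at least one of the $\ell$ tables, and let $E_2$ be the event that some \emph{near} pair ($\|q_i - k_j\| \leq r$) collides in \emph{none} of them. A single far pair collides in a given table with probability at most $p_2^z$; choosing $z = C_1 \log_{1/p_2} N$ with a constant $C_1 \geq 3$ makes $p_2^z \leq N^{-C_1}$, and a union bound over the $\leq N^2$ pairs and $\ell$ tables gives $\Pr[E_1] = O(N^{2+3\rho - C_1} \log N)$. For a single near pair, the per-table collision probability is at least $p_1^z = N^{-C_1 \rho}$, so the probability of collision in \emph{none} of $\ell$ independent tables is at most $\exp(-\ell p_1^z)$; with $\ell = K N^{3\rho} \log N$ and $C_1 = 3$ this equals $N^{-K}$, and a union bound over the $\leq N^2$ near pairs yields $\Pr[E_2] = O(N^{2-K})$. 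Taking $C_1$ and $K$ slightly above $3$ and $3-3\rho$ respectively (which the hypothesis $\rho < 1/3$ permits) gives the stated $\eta = O(N^{-(1-3\rho)})$ after combining.

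It then suffices to verify both ANNA properties pointwise on the event $\overline{E_1 \cup E_2}$. For the support condition, $w_{i,j} > 0$ forces $C_{i,j} \geq 1$, and $\overline{E_1}$ forbids any collision when $\|q_i - k_j\| > cr$, so $k_j \in \mathcal{N}(q_i, cr)$. For the weight lower bound, write $n := |\mathcal{N}(q_i, cr)|$; under $\overline{E_1}$ only $cr$-neighbors of $q_i$ contribute to $S_i$, and each such neighbor contributes at most $\ell$, yielding $S_i \leq C_{i,j} + (n-1)\ell$. Under $\overline{E_2}$, every $k_j \in \mathcal{N}(q_i, r)$ has $C_{i,j} \geq 1$, so $w_{i,j} = C_{i,j}/S_i \geq C_{i,j}/(C_{i,j}+(n-1)\ell) \geq 1/((n-1)\ell + 1)$ by monotonicity of $x \mapsto x/(x+a)$ in $x \geq 1$. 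I expect the main obstacle to be purely bookkeeping rather than conceptual: one must calibrate the hidden constants in $z$ and $\ell$ so that $\Pr[E_1]$ and $\Pr[E_2]$ simultaneously fall below $N^{-(1-3\rho)}$, and the hypothesis $\rho < 1/3$ is used essentially so that the far-collision union bound $N^{2+3\rho - C_1}$ and the near-pair bound $N^{2-K}$ can both be driven past $N^{-(1-3\rho)}$ with constant $C_1$ and $K$.
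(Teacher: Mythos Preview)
Your proposal is correct and follows essentially the same approach as the paper's proof: both isolate the far-collision and near-miss bad events, control them via the LSH sensitivity guarantees plus union bounds, and then verify the two ANNA constraints deterministically on the good event using the worst-case count $S_i \leq (|\mathcal{N}(q_i,cr)|-1)\ell + 1$. One small caveat on the bookkeeping: taking $C_1$ strictly greater than $3$ would make $\ell p_1^z = K N^{(3-C_1)\rho}\log N \to 0$ and break your $E_2$ bound when $\ell = \Theta(N^{3\rho}\log N)$, so you should keep $C_1 = 3$ and accept the stray $\log N$ factor in $\Pr[E_1]$ (which the paper also glosses over) rather than try to absorb it by perturbing $C_1$.
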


We leave the full proof of \Cref{lsh-compute-ANN} to \Cref{appendix:algo-gaurantee}.
The total runtime of \Cref{ANN-algorithm} is $O(mN^{1+3\rho}\log_{1/p_2} N)$, assuming sampling from the LSH family and evaluating a hash function requires $O(m)$ time and the numerical inputs to \Cref{ANN-algorithm} are specified with $p = \Theta(\log N)$ bits of precision.
The total space used is $\smash{\tilde{O}(mN^{1+3\rho})}$ bits.

\begin{remark}
    The memory complexity of \Cref{ANN-algorithm} can be further improved to $\smash{\tilde{O}(mN)}$ bits with the same time complexity by storing only one hash table with each entry keeping track of the value for each query. See \Cref{memory-efficient-ANN-algorithm} for the detailed implementation in Appendix \ref{appendix:algo-gaurantee}.
\end{remark}

\begin{remark}
    The weight $w_{i,j}$ depends on the number of hash collisions between $q_i$ and $k_j$, and is typically a function of the distance $\Delta := \|q_i-k_j\|$. For example, if we use the random hyperplane LSH from \cite{angular-lsh}, then $w_{i,j} \propto \exp(-\Delta^2\log(m)/(4-\Delta^2))$.
\end{remark}

In the remainder of this paper, our ANNA-based constructions are meant to refer to their efficient implementation by \Cref{ANN-algorithm} with a suitable choice of $r$ and arbitrarily large $c$.

\section{Efficient transformers and MPC}
We prove a sharp equivalence between ANNA-transformer and MPC in the regime of sub-linear local memory and sub-quadratic number of machines (\Cref{sec:ANNA-simulate-MPC,sec:MPC-simulate-ANNA}).
We also show that
ANNA subsumes alternative low-rank sub-quadratic attention mechanisms. 
(\Cref{sec:ANNA-simulate-lowrank}). 

\subsection{ANNA-transformer can simulate MPC} \label{sec:ANNA-simulate-MPC}

The following theorem shows that any $R$-round MPC protocol with sub-linear local memory can be simulated by an ANNA-transformer with $O(R)$ layers and sub-linear number of heads and embedding dimension. 
The full proof of Theorem~\ref{thm:ANNA-simulate-mpc} is in Appendix \ref{proof:ANNA-simulate-MPC}.

\begin{theorem}[ANNA simulates MPC] \label{thm:ANNA-simulate-mpc}
    Fix constants $0< \eps < \eps' <1$.
    For any deterministic $R$-round $(\eps,\eps)$-MPC protocol $\pi$,
    there exists an ANNA-transformer $T$ with
    $L = O(R)$ layers,
    $H=O(N^{(\eps'-\eps)/4})$ heads per layer,
    and embedding dimension $m=O(N^{\eps'})$,
    such that $T(\wordinput)=\pi(\wordinput)$
    for all $\wordinput \in \Z_{2^p}^N$.
\end{theorem}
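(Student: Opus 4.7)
The strategy is to simulate each MPC round by $O(1)$ ANNA layers, yielding $L = O(R)$ layers total. Assign one token to each of the $P = \Theta(N)$ machines, packing the machine's $s = O(N^\eps)$ words of local memory into part of an $m = O(N^{\eps'})$-dimensional embedding (which is feasible because $\eps' > \eps$), and reserving the remaining coordinates for routing metadata such as the machine's address and per-round slot indices. The initial input words, held by the first $\lceil N/s \rceil$ machines, are loaded by a simple elementwise embedding.

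The central tool is to use ANNA as exact-match attention. Encode each of the $P$ possible machine addresses as a vector in an error-correcting-style code whose pairwise distances are $\geq cr + 1$ (with identical pairs at distance $0$). Under this encoding, the ANNA retrieval rule of \Cref{lsh-compute-ANN} collapses (on its good event) to returning the sum of exactly those values whose keys equal the query. Local machine computation is arbitrary per the paper's convention and can be absorbed into the elementwise $Q, K, V, \psi$ maps, so it suffices to simulate a single communication round. For that, each sender token forms, on each of its $H$ heads, a key equal to the intended recipient's address and a value carrying the (possibly packed) message payload; each recipient queries its own address, and ANNA returns the aggregate of the inbound messages. By distributing each sender's up-to-$s$ outgoing messages across both the $H$ heads and the $m$-dimensional value vectors---placing messages from distinct senders into disjoint value coordinates on the recipient side---a constant number of ANNA layers suffice to realize any round's $O(N^{1+\eps})$ total message exchange.

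The exponent $(\eps'-\eps)/4$ in $H$ emerges from simultaneously balancing the sender-side budget (enough keys per token to cover all destinations without collision) and the receiver-side budget (enough disjoint decoding coordinates to separate summed messages), combined with slack for the address code and for union-bounding ANNA failures. Specifically, I pick LSH parameters in \Cref{lsh-compute-ANN} so that each ANNA call fails with probability $1/\poly(N)$; a union bound over $O(LH) \leq \poly(N)$ calls yields a single high-probability good event on which the simulation exactly matches $\pi$, and since $\pi$ is deterministic a standard derandomization (fixing the ANNA randomness to a witness of the good event) gives the deterministic guarantee in the theorem statement. The main obstacle is the routing decomposition: an MPC round permits arbitrary many-to-many traffic, but a single ANNA head produces one weighted aggregate per query, so the $s = N^\eps$ outgoing messages per sender and $s$ incoming messages per receiver must be spread over the $Hm$ ``routing slots'' available per layer without cross-talk. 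Adapting the sort-and-route construction used for standard transformers simulating MPC in prior work to the exact-match regime of ANNA, and verifying that the algebraic sum-of-slots trick survives the passage to (approximate) nearest-neighbor retrieval, is where the bulk of the technical work will lie.
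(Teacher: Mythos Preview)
Your high-level picture is right---tokens play the role of machines, elementwise maps absorb local computation, and ANNA in the exact-match regime handles routing---but two concrete ingredients that drive the stated parameters are missing, and your account of where the exponent $(\eps'-\eps)/4$ comes from is not the actual mechanism.

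First, you propose to simulate each round with $O(1)$ ANNA layers and $H=O(N^{(\eps'-\eps)/4})$ heads directly. But in a generic $(\eps,\eps)$-MPC round a machine may send to as many as $s=O(N^\eps)$ distinct recipients, and each head supplies only one key per token, so a single layer exposes at most $H$ destinations per sender. Spreading ``across heads and value coordinates'' does not help on the sender side: value coordinates do not create new keys. The paper closes this gap not by a clever packing but by first invoking Proposition~24 of \cite{sanford2024understandingtransformerreasoningcapabilities}, which converts any $R$-round $(\eps,\eps)$-MPC into an $O(R)$-round $(\eps,\eps,\rho)$-MPC in which every machine communicates with at most $N^\rho$ others. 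Only after this reduction does one head per outgoing destination suffice, giving $H=O(N^\rho)$; setting $\rho=(\eps'-\eps)/4$ is what produces the head bound you quote.

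Second, on the receiver side you write that ANNA ``returns the sum of exactly those values whose keys equal the query'' and that senders can place payloads in ``disjoint value coordinates.'' Neither holds: ANNA (and EMA) returns a \emph{weighted average} with $\sum_j w_{i,j}=1$, and senders targeting the same recipient have no way to coordinate disjoint slots in a single layer. The paper handles this with the hashing-based encoding of Lemma~3.2 in \cite{transformer-mpc}: each message is embedded so that an average of up to $\alpha$ encoded messages can be decoded elementwise. That lemma costs $m=O(\alpha^4\cdot s\cdot\log N)$, and with $\alpha=N^\rho$ and $s=N^\eps$ the constraint $m\le N^{\eps'}$ forces $\rho\le(\eps'-\eps)/4$. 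This $\alpha^4$ blow-up, not a ``sender/receiver budget balance,'' is the origin of the exponent. Finally, the paper avoids your union-bound/derandomization step entirely by taking $r=0$, $c\to\infty$, which makes the ANNA instance a deterministic exact-match mechanism; your error-correcting-code addresses are unnecessary once $r=0$.
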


\begin{proof}[Proof sketch]
    In fact, we show that the special case of ANNA-transformer whose approximation factor $c\to\infty$ and $r=0$ is already suffice to simulate MPC. In such case, ANNA for each query is equivalent to finding only the keys that exactly match the query; we call this Exact-Match Attention (EMA), and formally define it in \Cref{proof:ANNA-simulate-MPC}. 

    In our simulation, we treat each input token as the local machine, and all the local computation is handled by the element-wise functions $Q, K, V$. The bulk of the proof is to handle the message delivery between machines using EMA. 
    By Proposition~24 of \cite{sanford2024understandingtransformerreasoningcapabilities}, we can assume that each machine only sends messages to at most $\alpha=O(N^{\delta})$ machines for some $\delta<\eps$.
    
    We assign a unique positional encoding or identifier to each machine, and this encoding serves as a unique key to retrieve the message in each machine. The high level idea is to create a query for each machine and a key for each destination machine and the associated value is the embedding of the message sent to the destination machine in the protocol. Since each machine can send at most $\alpha$ messages to other machines, we create $\alpha$ EMA heads and each head is responsible for one outgoing message for all the $N$ machines. Each machine retrieves the message sent to them by having a query in each head. Since the messages are averaged together, we use the same embedding mechanism from Lemma 3.2 of \cite{transformer-mpc} to allow error correction in the element-wise operations. 
\end{proof}

This gives us a sub-quadratic time reduction from MPC to ANNA-transformer: i.e., the communication process can be implemented in near linear time, whereas it is quadratic for standard attention. In addition, this ties ANNA-transformer in the existing MPC hierarchy \cite{sanford2024understandingtransformerreasoningcapabilities}: any problem solvable by an $R$-round, \smash{$O(N^{\eps})$}-memory MPC protocol can be solved by $O(R)$-layer ANNA-transformer with \smash{$mH=O(N^{\eps+\delta})$}, for some $\delta>0$. For example, following Theorem 3.1 of \cite{mpcfinegrainedcomplexity}, $O(1)$-layer ANNA-transformer can solve 3-SUM with \smash{$mH=O(N^{{1/2+\delta}})$}. 

\subsection{MPC can simulate ANNA-transformer} \label{sec:MPC-simulate-ANNA}

The following theorem (proved in \Cref{proof:MPC-simulate-ANNA}) shows that any $L$-layer ANNA-transformer (as implemented by \Cref{ANN-algorithm}) can be simulated by a $O(L)$-round MPC protocol.
Since \Cref{ANN-algorithm} is randomized, it uses a random seed to sample the hash functions from the LSH family.
The simulation assumes access to the random seeds needed for all layers in the ANNA-transformer.

\begin{theorem}[MPC simulates ANNA] \label{thm:MPC-simulate-ANNA}
    Fix constants $0<\eps<\eps'<1$.
    For any $L$-layer ANNA-transformer $T$ (as implemented by \Cref{ANN-algorithm}) with $mH=O(N^{\eps})$, there exists a $O(L/(\eps'-\eps))$-round MPC protocol $\pi$ with local memory \smash{$s=O(N^{\eps'})$} and \smash{$P=O(N^{1+\eps-\eps'+\nicefrac{3}{c^2}})$} machines such that
    $\pi(\wordinput)=T(\wordinput)$ for all
    $\wordinput\in \Z_{2^p}^N$.
\end{theorem}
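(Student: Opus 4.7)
The goal is to simulate one layer of ANNA (as implemented by \Cref{ANN-algorithm}) using $O(1/(\eps'-\eps))$ MPC rounds; the total is then $O(L/(\eps'-\eps))$. At the start of each layer I maintain the invariant that every input token's current embedding (of dimension $m=O(N^{\eps})$) lives contiguously on one machine, so each machine holds roughly $N^{\eps'-\eps}$ tokens. Since the element-wise $Q, K, V$ maps and all $\ell = \Theta(N^{3\rho}\log N)$ hash codes of every token can be computed locally from a token's embedding, the first step of the layer is executed with zero communication.

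Next I simulate the preprocessing phase of \Cref{ANN-algorithm}. For every token $j$ and every hash table $u\in[\ell]$, I emit a tuple $((u, g_u(k_j)),\, v_j,\, 1)$. There are $N\ell = N^{1+3\rho}$ such tuples of size $O(m)$ each, for a total of $O(mN\ell) = O(N^{1+\eps+3/c^2})$ words, matching the aggregate capacity $Ps$ of the $P = N^{1+\eps-\eps'+3/c^2}$ machines of memory $s=N^{\eps'}$. A constant number of MPC sort/route rounds (standard in sublinear-memory MPC) groups tuples sharing the bucket label $(u,g)$ onto consecutive machines, and a tree summation over $(v_j,1)$ then produces the per-bucket $(\text{sum}, \text{count})$ pair. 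Because a single bucket may contain up to $N$ items, the tree depth is $O(\log N/\log s) = O(1/\eps') = O(1/(\eps'-\eps))$.

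The query phase mirrors the preprocessing phase. For every token $i$ and every table $u$, emit a request tuple $((u, g_u(q_i)), i)$ and route it to the machine(s) holding the corresponding bucket summary; attach the $(\text{sum}, \text{count})$ payload and route back to the originating token. Each token then sums the $\ell$ returned payloads and divides sum by count to produce the ANNA output for $q_i$; aggregating $\ell = N^{3\rho}$ payloads of size $m$ per token across machines is another tree reduction of depth $O(\log\ell/\log s) = O(1/\eps')$. Every routing step respects the per-machine in/out bound of $s$ words because the total tuple volume is at most $Ps$. At the end of the layer the outputs are redistributed so that each machine again holds its $\sim N^{\eps'-\eps}$ tokens contiguously, setting up the invariant for the next layer.

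The main obstacle is load balancing in the presence of hash-bucket skew: an adversarial input might put many more than $s$ key–value pairs into a single bucket $(u,g)$, or route many more than $s$ queries to it. I handle both cases by the standard group-by-aggregate trick. A bucket of size $B$ is spread across $\lceil B/s\rceil$ machines, its $(\text{sum},\text{count})$ is computed by a tree of depth $O(\log B/\log s) = O(1/(\eps'-\eps))$, and the resulting summary is then replicated to as many machines as are needed to serve the incoming queries; since the total volume across all buckets is bounded by $Ps$, the replication fits within the machine budget and no further round-complexity blowup is incurred. Putting the three phases together yields $O(1/(\eps'-\eps))$ rounds per layer and $O(L/(\eps'-\eps))$ rounds in total, as claimed.
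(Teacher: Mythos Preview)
Your proposal is correct and follows essentially the same approach as the paper: both simulate a single ANNA layer by emitting one $(\text{bucket-id}, v_j)$ tuple per (token, hash table) pair, sort/group these tuples by bucket id, aggregate $(\text{sum},\text{count})$ via an $O(1/(\eps'-\eps))$-depth tree, and then route the per-bucket summaries back to the queries through the same tree structure; the total tuple volume $O(mN\ell)=O(N^{1+\eps+3\rho})$ is exactly the aggregate capacity $Ps$, which is what forces the machine count $P=O(N^{1+\eps-\eps'+3/c^2})$. The paper's write-up is organized around explicit ``token / data-structure / meta-info'' machine types with separate sorted arrays for queries and for keys, whereas you invoke the group-by-aggregate and segmented-broadcast primitives more abstractly, but the underlying algorithm and the accounting are the same.
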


Observe that the number of machines used in the simulation of the ANNA-transformer can be strongly sub-quadratic (and in fact, close to linear when $c$ is large).
In contrast, the simulation of a standard transformer from \cite{transformer-mpc} requires $N^2$ machines.
As previously discussed (\Cref{sec:contributions}), this shows that ANNA-transformer more sharply characterizes efficient MPC protocols than standard transformers do.
On the other hand, by \Cref{thm:MPC-simulate-ANNA}, round-complexity lower bounds for MPC directly imply depth lower bounds for ANNA-transformers.
This argument was used in \cite{transformer-mpc} to establish (conditional) depth lower bounds for standard transformers on problems such as graph connectivity and $k$-hop induction heads; these lower bounds also hold for ANNA-transformers.

\subsection{ANNA-transformer can simulate low-rank transformers}
\label{sec:ANNA-simulate-lowrank}

As mentioned in \Cref{sec:intro}, there are many proposals for  efficient attention alternatives.
In this section, we focus on the sub-quadratic alternatives based on low-rank approximations of the attention matrix.
Specifically, we ask the following: \emph{what problems are intrinsically easy for ANNA but hard for low-rank approximation attention, and vice versa?}

\begin{definition}[\Kernelattn]
    A \kernelattn
    is specified by two feature maps $Q', K': \R^d \rightarrow \R^r$ for some $r \ll N$ (with the intention of approximating $\softmax (Q(X)K(X)^{\top})$ by $Q'(X)K'(X)^{\top}$).
    On input $X \in \R^{N \times d}$, it computes $Q'(X)K'(X)^\T V(X)$ by first computing $K'(X)^{\top} V(X)\in \R^{r\times m}$, and then left-multiplying by $Q'(X)$. 
\end{definition}

Note that \cite{transformer-mpc} gives a lower bound of any \kernelattn for the $k$-hop problem.
Later in \Cref{sec:reasoning-task}, we give a construction of $O(\log k)$-depth ANNA-transformer solving $k$-hop, and this directly gives a type of problem that is easy for \annattn but hard for \kernelattn.
However, is there any problem that is easy for \kernelattn but hard for \annattn?

The following theorem answers the question by showing any $L$-layer \kernelattn-transformer can be simulated by a $O(L)$-layer ANNA-transformer.
So, under the time and parameter-efficient regime (sub-linear rank and embedding dimension), \kernelattn-transformer is no stronger than ANNA-transformer. 

\begin{theorem}[ANNA simulates \kernelattn] 
\label{thm:EMA/ANNA-simulate-lowrank}
    For constants $0<\eps<\eps'<1$, any \kernelattn based transformer with depth $L$, rank $r$, embedding dimension $m$ and $rm=O(N^{\eps})$ can be simulated by an ANNA-transformer with depth $O(L)$, number of heads $H=O(N^{(\eps'-\eps)/4})$ and embedding dimension \smash{$m=O(N^{\eps'})$}. 
\end{theorem}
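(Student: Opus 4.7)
The plan is to leverage \Cref{thm:ANNA-simulate-mpc}: first show that a single \kernelattn\ layer can be simulated by a constant-round MPC protocol with sub-linear local memory, compose this across all $L$ layers, and then translate the resulting MPC protocol into an ANNA-transformer. The key observation is that a \kernelattn\ layer outputs $Q'(X) M$ where $M := K'(X)^\top V(X) \in \R^{r \times m}$ is a sum of $N$ rank-one outer products $K'(x_j) V(x_j)^\top$. Because $rm = O(N^{\eps})$, the whole matrix $M$ fits in any local memory of size $\Omega(N^{\eps})$, so it can be produced by a \SUM-style aggregation in MPC and then broadcast.

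Concretely, I would fix some intermediate $\eps^{*} \in (\eps, \eps')$ and set the local memory to $s = \Theta(N^{\eps^{*}})$. Distribute the $N$ input tokens across the first $\lceil N/s \rceil$ machines. Each such machine applies the element-wise maps $Q', K', V$ to its tokens and locally sums the corresponding outer products into an $r \times m$ matrix. Since each machine can receive and sum up to $s/(rm) = \Theta(N^{\eps^{*}-\eps})$ such matrices per round, tree-aggregation produces the global $M$ in $O(1/(\eps^{*}-\eps))$ rounds; the same tree run in reverse broadcasts $M$ back to all the token-holding machines within the same budget. A final local step computes $Q'(x_i) M$ on each token, leaving the next-layer tokens distributed in exactly the same layout as the input. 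Composing $L$ such simulations yields a deterministic MPC protocol of round complexity $R = O(L/(\eps^{*}-\eps))$ with local memory $s = O(N^{\eps^{*}})$. Invoking \Cref{thm:ANNA-simulate-mpc} with parameters $(\eps^{*}, \eps')$ then produces an ANNA-transformer of depth $O(R) = O(L)$, with $H = O(N^{(\eps'-\eps^{*})/4}) = O(N^{(\eps'-\eps)/4})$ heads per layer and embedding dimension $O(N^{\eps'})$, matching the statement.

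The main obstacle is verifying that the per-layer simulation composes cleanly across layers: the broadcast halves of the tree must restore the canonical token layout (token $i$ on machine $\lceil i/s \rceil$) so that layer $\ell+1$ can begin without any reshuffling, and the aggregation and broadcast trees must be routable within the per-machine bandwidth constraints of the MPC model. A secondary check is that entries of $M$ and of the subsequent products $Q'(x_i) M$ remain representable in $O(\log N)$-bit words; since $M$ aggregates $N$ bounded-precision products, only a constant blow-up in word size is needed and is absorbed into the MPC model's word length. Once these bookkeeping details are in place, \Cref{thm:ANNA-simulate-mpc} supplies the translation from MPC to ANNA and the proof is complete.
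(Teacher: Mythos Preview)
Your proposal is correct and follows essentially the same route as the paper: simulate each \kernelattn\ layer in MPC by computing $M = K'(X)^\top V(X)$ via an $\lfloor s/(rm)\rfloor$-ary tree aggregation, reverse the tree to broadcast $M$, finish locally with $Q'(x_i)M$, stack across the $L$ layers, and then invoke \Cref{thm:ANNA-simulate-mpc}. Your introduction of the intermediate parameter $\eps^*$ is a clean way to separate the MPC memory exponent from the final ANNA embedding exponent, and your bookkeeping about layout restoration and word-size is exactly the right set of details to check.
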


We prove \Cref{thm:EMA/ANNA-simulate-lowrank} by first using $O(L/(\eps'-\eps))$-round MPC to simulate $L$-layer low-rank transformer, and then \Cref{thm:ANNA-simulate-mpc} give us the simulation of $L$-layer low-rank transformer with ANNA-transformer through MPC.
The full proof is given in \Cref{proof:EMA/ANNA-simulate-lowrank}. 

\paragraph{Other efficient attention mechanisms based on nearest neighbor search.}
Reformer \cite{reformer} is another efficient attention based on LSH.
In Reformer, the input tokens are sorted by their (scalar) hash values.
Then, this sorted list is split into equal-sized chunks, each containing only $O(1)$-many tokens.
Standard attention is applied within each chunk.
We show that the expressive power of Reformer must come from the sorting operation: without sorting, the restriction of attention within each constant-size chunk prevents Reformer from even computing basic functions like ``average'' with $O(1)$ layers (regardless of the embedding dimension); details are given in \Cref{sec: discussion on reformer}.

KDEformer \cite{kdeformer} and HyperAttention \cite{hyperattention} approximate softmax attention matrices as a sum of a sparse matrix and a low-rank matrix.
They use
LSH techniques to find sparse elements (i.e., heavy elements in the attention matrix) and low-rank attention for the remaining components. 
\Cref{thm:EMA/ANNA-simulate-lowrank} indicates this low-rank part does not substantially increase the representational power. 

\section{ANNA-transformer for reasoning tasks} \label{sec:reasoning-task}

In this section, we study ANNA-transformer on two concrete reasoning tasks: Match2 \cite{sanford2023representational} and $k$-hop \cite{transformer-mpc}.
These tasks are benchmarks for evaluating the reasoning capabilities of transformers, and they separate different neural architectures in terms of their representational strengths.

\subsection{ANNA-transformer solves $\matchtwo$}
The $\matchtwo$ task \cite{sanford2023representational} measures the ability of a model to associate paired elements with one another. We show that a single ANNA mechanism can solve Match2.
\begin{definition}[$\matchtwo$]
Given an input sequence $X = (x_1,\ldots,x_N) \in [M]^N$ for some $M \leq \poly(N)$, the $i$-th output of $\matchtwo(X)$ is $\mathbbm{1}\{\exists j \centerdot x_i+x_j=0 \bmod M\}$ for all $i \in [N]$. 
\end{definition}  

\begin{theorem} \label{thm:ANNA solves match2}
For any $N, M=N^{O(1)}$, there exists an ANNA-transformer $T$ with one layer, one attention head, and embedding dimension $1$ such that $T(X)=\matchtwo(X)$ for all $X\in [M]^N$. 
\end{theorem}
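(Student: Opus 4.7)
The plan is to encode the Match2 condition as an exact-match event for ANNA so that the single attention output is already the desired indicator. View each input token as a scalar $x_i \in [M]$ (so $d = 1$), and take the scalar query, key, and value maps
\[
 Q(x_i) = (-x_i) \bmod M, \qquad K(x_j) = x_j \bmod M, \qquad V(x_j) = 1.
\]
Invoke ANNA with radius $r = 0$ and arbitrarily large approximation factor $c$. Under these parameters ANNA degenerates into the Exact-Match Attention special case mentioned right after the ANNA definition: a key $k_j$ receives positive weight against query $q_i$ iff $k_j = q_i$, which by construction is equivalent to $x_i + x_j \equiv 0 \pmod M$. Because $M = N^{O(1)}$, the scalars $Q(x_i), K(x_j) \in \{0,1,\dots,M-1\}$ fit in the standard $O(\log N)$-bit precision regime.

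Next I would check that the attention output at position $i$ equals $\matchtwo(X)_i$. The $\matchtwo$ definition requires output $1$ exactly when the set of collisions $\{j : x_i + x_j \equiv 0 \pmod M\}$ is nonempty. When it is nonempty, every matching $j$ contributes $v_j = 1$, and the ANNA constraint $\sum_j w_{i,j} = 1$ immediately gives attention output $1$. When it is empty, \Cref{ANN-algorithm} returns $v_{\text{sum}}/\text{count} = 0/0$, which I interpret as $0$. Thus the scalar output of the single head is already the correct Match2 bit, and the remaining components of the transformer may be taken trivially: the head-combining matrix is $W = [1]$ and the output map $\psi$ is the identity (or a small threshold to sanitize numerical issues), keeping $L = 1$, $H = 1$, and $m = 1$ as required.

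The main obstacle is the mild technicality that, when no collision exists, the ANNA definition's requirement $\sum_j w_{i,j} = 1$ cannot be satisfied. I resolve this by adopting the convention $0/0 = 0$ built into \Cref{ANN-algorithm}, which is the implementation the paper commits to using throughout. Equivalently, the ANNA output is extended to be $0$ whenever the near-neighbor set is empty; this is consistent with the other ANNA constraints (the first condition forces all $w_{i,j} = 0$ in that case anyway) and matches the natural semantics of the hash-table implementation. With this convention, the construction above gives $T(X) = \matchtwo(X)$ for every $X \in [M]^N$, completing the proof.
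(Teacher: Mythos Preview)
Your proposal is correct and takes essentially the same approach as the paper: set up scalar queries and keys so that an exact match is equivalent to $x_i + x_j \equiv 0 \pmod M$, use constant values $v_j = 1$, and invoke the $r=0$, $c\to\infty$ (Exact-Match) instantiation of ANNA so the output is $1$ when a match exists and $0$ otherwise. The paper uses $q_i = x_i$, $k_j = M - x_j$ rather than your $q_i = (-x_i)\bmod M$, $k_j = x_j \bmod M$, and it handles the empty-neighborhood case by appealing to the explicit EMA definition (which returns $\mathbf{0}$) rather than the $0/0$ convention of \Cref{ANN-algorithm}, but these are cosmetic differences.
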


\subsection{ANNA-transformer solves $k$-hop}

The induction heads (a.k.a.~associative recall) task~\citep{inductionhead} is a reasoning task that predicts the next token by completing the most recent bigram.
It has been identified as an important mechanism for the emergent ``in-context learning'' ability of LLMs.

\begin{definition}[Induction heads]
Let $\Sigma$ be a finite alphabet and $w \in \Sigma^N$. For each $i \in [N]$, define 
\[
\sigma(w,i) = \max\{\{0\} \cup \{j \in \mathbb{N}: j \leq i, w_{j-1} = w_i\}\}.
\] The \emph{induction head} task is to compute, for each $1 \leq i \leq N$, the value of $w_{\sigma(w,i)}$. 
\end{definition}
For example, let $\Sigma = \{\tt{a},\tt{b},\tt{c}\}$ and $w = \tt{aabcbabca}$. Then $w_{\sigma(w,9)} = \tt{b}$ because the 9th token is $\tt{a}$, and the last occurrence of $\tt{a}$ before position 9 (which is in position 6) is followed by $\tt{b}$.

The following theorem shows that our ANNA-transformer can solve induction heads problem using constant number of layers and sub-linear embedding dimension and number of heads. 

\begin{theorem}
\label{thm: ANN transformers solve induction head}
Fix constants $0<\eps<\eps'<1$.
There exists an ANNA-transformer $T$ with $L=O(1)$ layers, $H=O(N^{(\eps'-\eps)/4})$ heads per layer, and embedding dimension $m=O(N^{\eps'})$ such that $T(w)_i=w_{\sigma(w,i)}, \forall i\in [N]$, for all $w \in \Sigma^N$. 
\end{theorem}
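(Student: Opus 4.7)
The plan is to apply \Cref{thm:ANNA-simulate-mpc} by designing a constant-round MPC protocol that computes $w_{\sigma(w,\cdot)}$ with local memory $O(N^\eps)$. The stated ANNA-transformer bounds then follow immediately from the MPC-to-ANNA simulation with $R = O(1)$ and the given $\eps < \eps'$, so the task reduces entirely to exhibiting such a protocol.

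First I would reformulate induction heads as a ``last previous matching token'' lookup: each query at position $i$ asks for the largest $j \leq i$ with $w_{j-1} = w_i$, and returns $w_j$. To cast this into an MPC-friendly form, for each $j \in [N]$ create an occurrence record $A_j = (w_{j-1}, j, w_j)$, and for each $i \in [N]$ a query record $B_i = (w_i, i)$. Grouped by the shared first coordinate (the token), each query seeks the occurrence of largest second coordinate not exceeding its own. The MPC protocol then proceeds in three stages. First, sort all $2N$ records lexicographically by (first coord, second coord) using a standard constant-round sub-linear-memory MPC sort. After sorting, records sharing a token value form a contiguous run ordered by position. Second, perform a segmented prefix-max restricted to $A$-records to annotate each $B$-record with the latest preceding $A_j$ in the same group; using a balanced $N^\eps$-ary aggregation tree, this runs in $O(1/\eps) = O(1)$ rounds for constant $\eps$. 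Third, route each annotated $B_i$ back to its originating position $i$ and emit the associated $w_j$ (or the default symbol if the group has no predecessor). All three stages fit within $O(N^\eps)$ local memory, giving a constant-round $(\eps,\eps)$-MPC protocol.

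The main technical subtlety is that a single token group can span many machines, so the predecessor computation within a group cannot be done purely locally; one must invoke the standard $O(1)$-round segmented-aggregation machinery with careful attention to segment boundaries, so that prefix-maxes of one token group do not leak into the next. Once this protocol is in place, applying \Cref{thm:ANNA-simulate-mpc} yields the claimed ANNA-transformer of depth $O(1)$, $O(N^{(\eps'-\eps)/4})$ heads per layer, and embedding dimension $O(N^{\eps'})$.
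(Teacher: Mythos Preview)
Your proposal is correct and follows essentially the same approach as the paper: reduce to a constant-round $(\eps,\eps)$-MPC protocol by sorting on (token, position), looking up the predecessor within a token group, and then invoking \Cref{thm:ANNA-simulate-mpc}. The only minor difference is that the paper merges your $A$- and $B$-records into a single record type $(w_i, i, w_{i+1}, i+1)$, so that after sorting each record's immediate predecessor in the list is exactly the desired match; this replaces your segmented prefix-max with a single neighbor-to-neighbor message and sidesteps the boundary-leakage concern you flagged.
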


We prove \Cref{thm: ANN transformers solve induction head} (in \Cref{proof:anna solves k-hop}) by constructing a constant-round MPC algorithm for induction heads, and then applying \Cref{thm:ANNA-simulate-mpc} to convert it into an ANNA-transformer.

The induction heads task was generalized by
\cite{transformer-mpc} to a $k$-step variant called ``$k$-hop''.

\begin{definition}[$k$-hop induction heads]
Let $\Sigma$ be a finite alphabet and $w \in \Sigma^N$. 
Let $\sigma^k(w, i)$ denote a $k$-fold composition of $\sigma(w, \cdot)$ from the previous definition.
The \emph{$k$-hop induction head} task is to compute $w_{\sigma^k(w,i)}$ for each $1 \leq i \leq N$.
\end{definition}

Using the same example where $\Sigma = \{\tt{a},\tt{b},\tt{c}\}$, $w = \tt{aabcbabca}$ and $k = 2$, we have $w_{\sigma(\sigma(w,9))} = \tt{a}$
because the last occurrence of $\tt{b}$ before position 7 is followed by $\tt{a}$.

As was done in \cite{transformer-mpc}, we construct a $O(\log k)$-round MPC algorithm for $k$-hop using function composition, thus yielding a logarithmic depth scaling for ANNA-transformers on this task. 

\begin{theorem} \label{thm:ANNA solves khop}
Fix constants $0<\eps<\eps'<1$, any $k\in \mathbb{N}$ and alphabet $\Sigma$ with $|\Sigma|=O(N)$.
There exists an ANNA-transformer $T$ with $L=O(\log k)$ layers, \smash{$H=O(N^{(\eps'-\eps)/4})$} heads per layer, and embedding dimension \smash{$m=O(N^{\eps'})$} such that \smash{$T(w)_i=w_{\sigma^k(w,i)}, \forall i\in [N]$}, for all $w \in \Sigma^N$.
\end{theorem}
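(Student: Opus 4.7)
The plan is to follow the template already used in the proof sketch of \Cref{thm: ANN transformers solve induction head}: construct an $O(\log k)$-round MPC protocol with local memory $O(N^\varepsilon)$ that solves $k$-hop, and then invoke \Cref{thm:ANNA-simulate-mpc} to compile it into an ANNA-transformer of the claimed depth and width. The logarithmic-in-$k$ depth comes from pointer jumping (successive squaring) applied to the induction-head pointer $\sigma(w,\cdot)$.

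In more detail, I would proceed in three stages. In \emph{Stage 1 (base pointer)}, compute $p^{(0)}_i := \sigma(w,i)$ for every $i \in [N]$ using the constant-round MPC protocol underlying \Cref{thm: ANN transformers solve induction head}. In \emph{Stage 2 (successive squaring)}, for $t = 1, 2, \ldots, \lceil \log_2 k \rceil$, compute $p^{(t)}_i := p^{(t-1)}_{p^{(t-1)}_i}$, so that $p^{(t)}_i = \sigma^{2^t}(w,i)$. Each doubling step is an instance of a \emph{dereference} primitive: given an array $(u_j)_{j \in [N]}$ distributed across machines and a pointer array $(p_i)_{i \in [N]}$, produce $(u_{p_i})_{i \in [N]}$. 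This primitive is implementable in $O(1/(\varepsilon'-\varepsilon))$ MPC rounds via standard sublinear-memory sorting: create the records $(i, p_i)$ and $(j, u_j)$, sort both by the key $p_i$/$j$, perform a linear-scan join, and re-sort by the source index $i$. In \emph{Stage 3 (assembly)}, decompose $k = \sum_{t \in S} 2^t$ in binary and compose the intermediate pointer arrays $p^{(t)}$ for $t \in S$ by $O(\log k)$ further dereferences to obtain $\sigma^k(w,i)$ at each position $i$; a final dereference reads $w_{\sigma^k(w,i)}$.

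The main technical subtlety is hidden in the dereference primitive, specifically the load-balancing issue when many positions share the same pointer value (``hotspots''): a single machine cannot receive linearly many copies of the same value because of the $O(N^\varepsilon)$ memory bound. This is handled by the standard $O(1/\varepsilon)$-depth broadcast-tree construction embedded inside the sort-and-scan, which replicates hot values across enough machines to satisfy the per-machine message-size budget while only inflating the round count by a constant factor. All other ingredients (sorting, prefix sums, routing) are classical $O(1)$-round MPC primitives in the sublinear-memory regime.

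Once the MPC protocol is in hand, the output of \Cref{thm:ANNA-simulate-mpc} applied with parameters $(\varepsilon,\varepsilon')$ is an ANNA-transformer of depth $O(\log k)$, with $H = O(N^{(\varepsilon'-\varepsilon)/4})$ heads per layer and embedding dimension $m = O(N^{\varepsilon'})$, matching the theorem statement. The assumption $|\Sigma| = O(N)$ ensures that each character and each index fits in $O(\log N) = O(p)$ bits, so all records handled by the MPC protocol are $O(1)$ words and the memory accounting goes through unchanged.
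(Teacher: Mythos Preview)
Your proposal is correct and follows essentially the same route as the paper: build an $O(\log k)$-round MPC protocol for $k$-hop via pointer doubling on $\sigma(w,\cdot)$ together with the binary decomposition of $k$, then invoke \Cref{thm:ANNA-simulate-mpc}. The only notable difference is that the paper observes $\sigma(w,\cdot)$ is injective away from $\bot$, so each doubling step is just two rounds of direct message passing with no load-balancing concern, whereas your sorting-based dereference primitive is a heavier (but equally valid) generic solution.
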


The full construction is given in \Cref{proof:anna solves k-hop}.
We note that while prior works have given transformer constructions for $k$-hop~\cite{bietti2023birth,transformer-mpc} (and \matchtwo~\cite{sanford2023representational}), these results do not directly imply \annattn-transformers constructions, given the differences in the architectures.

Prior work~\cite{transformer-mpc} showed that multi-layer
recurrent nets and low-rank sub-quadratic attention \cite{performer, polysketchformer} are unable to solve $k$-hop unless the depth is $\Omega(k)$ or their memory size/embedding dimension is $\Omega(N/k^6)$.
On contrast, ANNA-transformer achieves both $O(\log k)$-depth and sublinear (in $N$) width.
In this sense, the $k$-hop task separates ANNA-transformer from these other efficient neural architectures.

\subsection{Experiments on Match2 and induction heads}
\label{sec:experiments}

We empirically test the performance of ANNA-transformer on the Match2 and induction heads tasks. 
Experimental details are given in \Cref{sec:exprimental details}.
Since \Cref{ANN-algorithm} is not differentiable, we train a softmax version of attention as a surrogate, and then distill from the trained model to an ANNA-transformer (based on \Cref{ANN-algorithm} with angular LSH \cite{angular-lsh}).
Our softmax attention normalizes all the queries and keys in $Q(X)$ and $K(X)$ to have unit norm, and computes $\softmax (\beta \cdot Q(X)K(X)^{\top})V(X)$ with a tunable temperature parameter $\beta>0$.

The Match2 dataset is generated the same way as \cite{strassen} with context length $N=32$ and upper bound $M=37$. One-layer ANNA-transformers are able to achieve zero error with $\ell=8$ hash tables and $z=1$ hash function per table. See \Cref{fig:match2} for the detailed performance. 
For induction heads, we use the dataset from \cite{transformer-mpc} with number of hops $k=1$, context length $N=100$ and alphabet size $|\Sigma|=4$.
A two-layer ANNA-transformer achieved highly nontrivial error with $\ell=32$ hash tables and $z=2$ hash functions per table; with more hash tables, the error rate was as low as $0.1$. See \Cref{fig:induction heads} for the detailed results.

\begin{figure}[t] 
    \centering
    \begin{subfigure}[t]{0.45\textwidth}
        \centering
        \includegraphics[width=\textwidth]{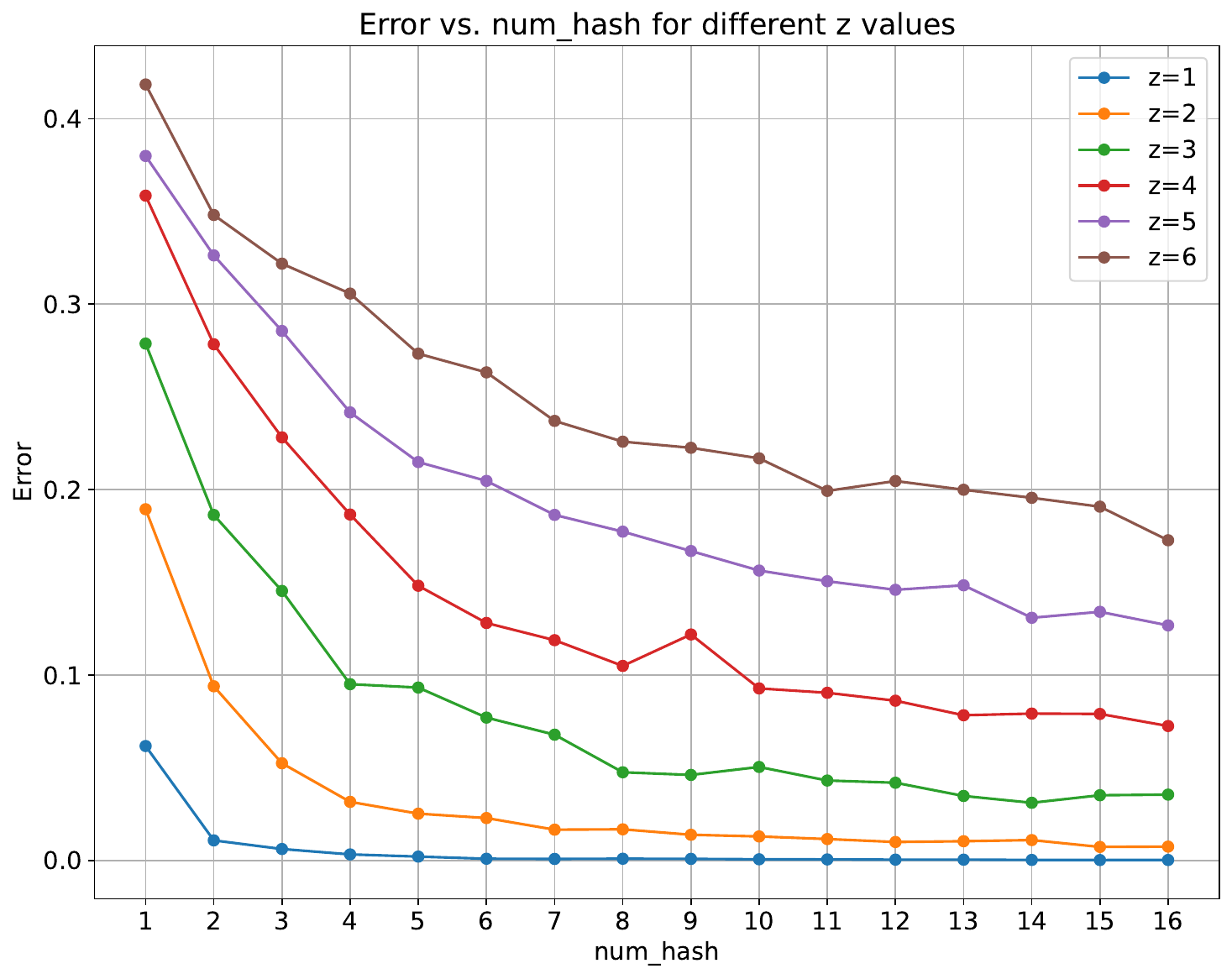}
        \caption{\matchtwo}
        \label{fig:match2}
    \end{subfigure}
    \hfill
    \begin{subfigure}[t]{0.45\textwidth}
        \centering
        \includegraphics[width=\textwidth]{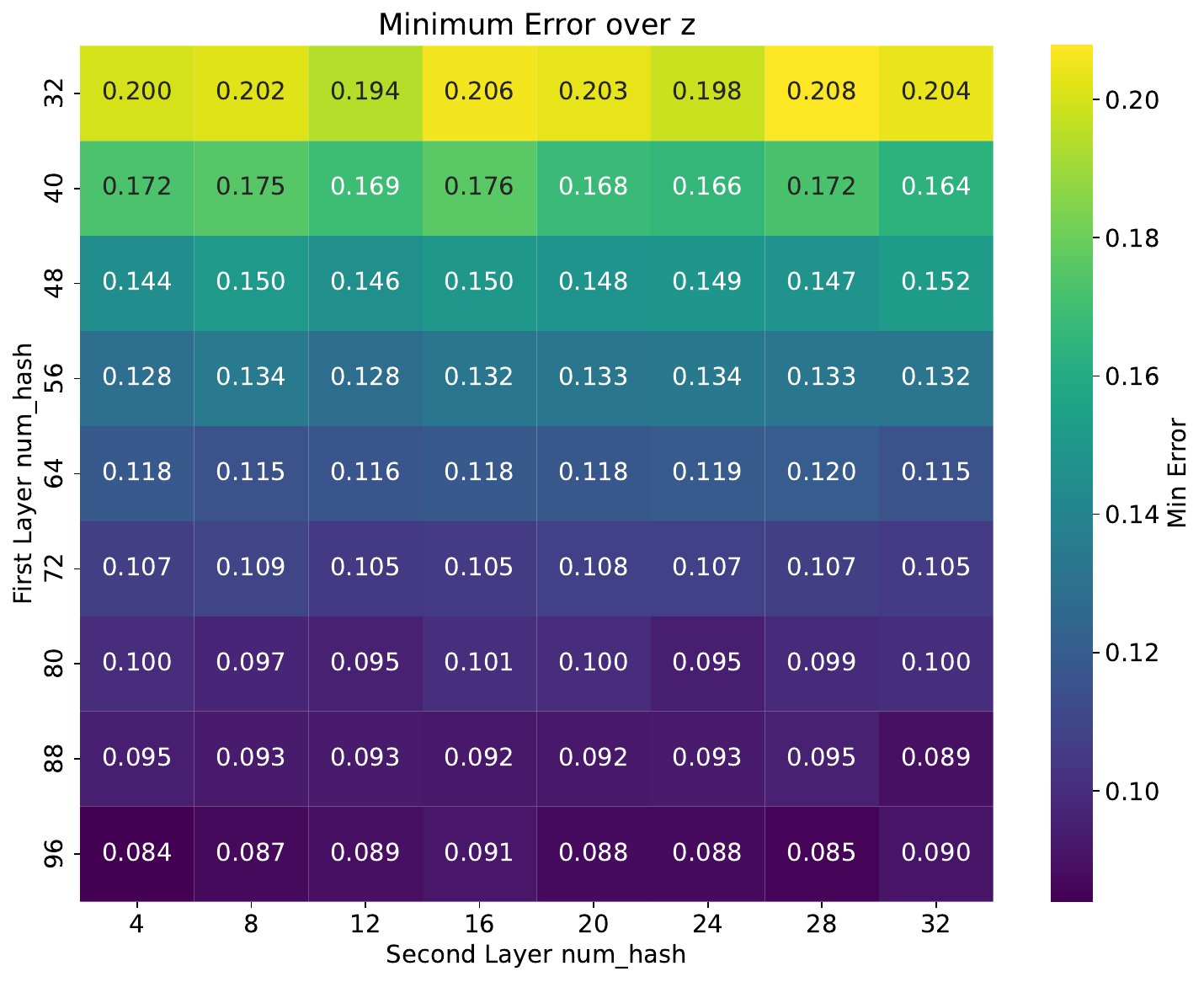}
        \caption{Induction heads}
  \label{fig:induction heads}
    \end{subfigure}
    \caption{All errors are averaged over 10 runs.
    (a) Error rate on Match2: x-axis denotes the number of hash tables $\ell$, and different colors correspond to different numbers $z$ of hash functions per hash table.
    (b) Error rate on induction heads: Rows correspond to the number of hash tables in the first layer, columns correspond to the number of hash tables in the second layer. The reported error rate is the best achieved over the choice of $z \in \{1, 2, 3, 4\}$.}
    \label{fig:experiments}
\end{figure}

\section{Conclusion and future work}
In this work, we propose a more efficient class of neural architecture, the ANNA-transformers, which not only preserve the representational power of standard transformer characterized by the MPC framework but also yield a tighter equivalence with the MPC model. Furthermore, we show that constant layers of ANNA-transformers can simulate constant layers of low-rank transformer, and can solve reasoning tasks such as \matchtwo and $k$-hop tasks in near-optimal depth. 

There are some interesting directions we leave for future work. While our ad hoc training method was effective as a proof-of-concept, it is desirable to develop a principled training method that directly optimizes the performance of an ANNA-transformer (or a differentiable variant thereof), rather than that of a surrogate model. Also, our empirical validation was limited to small synthetic datasets; extending these experiments to large-scale, real-world benchmarks is an important next step.

\subsubsection*{Acknowledgements}

Part of this work was done at the ``Modern Paradigms in Generalization'' and ``Special Year on Large Language Models and Transformers, Part 1'' programs at the Simons Institute for the Theory of Computing, Berkeley in 2024. We acknowledge support from the ONR under grants N00014-24-1-2700, N00014-22-1-2713,
from the NSF under grant CCF2008733,
and an award from the Columbia Center of AI Technology in collaboration with Amazon.

\bibliographystyle{plainnat}
\bibliography{sample} 

\appendix

\section{Proof of \Cref{lsh-compute-ANN}} \label{appendix:algo-gaurantee}
We restate \Cref{lsh-compute-ANN} here, which gives the theoretical guarantee for \Cref{ANN-algorithm}.

\begin{theorem}[LSH algorithm guarantee for ANNA; \Cref{lsh-compute-ANN}] 
Fix $c>\sqrt3$,
    LSH family $\mathcal{H}$ that is $(r,cr,p_1,p_2)$-sensitive with quality $\rho < 1/3$,
    $\ell = \Theta(N^{3\rho}\log N)$, and
    $z = \Theta(\log_{1/p_2} N)$.
    Then \Cref{ANN-algorithm} (with $\mathcal{H}$, $\ell$, and $z$) implements an ANNA mechanism with parameters $r, c, \ell$ and $\eta = \smash{O(1/N^{1-3\rho})}$.\footnote{
  Optimal (data-oblivious) LSH schemes achieve $\rho=1/c^2 + o(1)$~\cite{optimal-lsh}.
  Since we assume $c>\sqrt{3}$, the failure probability $1/\text{poly}(N)$ decreases to zero with $N$.
}
\end{theorem}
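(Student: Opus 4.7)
I would start by unravelling Algorithm~\ref{ANN-algorithm} to identify the weights. Let $c_{i,j} := |\{u \in [\ell] : g_u(q_i) = g_u(k_j)\}|$ be the number of hash tables in which query $q_i$ and key $k_j$ collide. Then a direct reading of the preprocessing/query loops shows that the algorithm outputs $\sum_j w_{i,j} v_j$ with
\[
w_{i,j} \;=\; \frac{c_{i,j}}{\sum_{j'} c_{i,j'}}.
\]
Thus the problem reduces to controlling two events simultaneously (over the random draw of $g_1,\dots,g_\ell$):
\begin{align*}
E_1 &: \text{for every } i,j, \ \|q_i - k_j\| > cr \ \Rightarrow\ c_{i,j}=0 \quad \text{(no far collisions),}\\
E_2 &: \text{for every } i,j, \ \|q_i - k_j\| \leq r \ \Rightarrow\ c_{i,j}\geq 1 \quad \text{(every near neighbor hit at least once).}
\end{align*}

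On $E_1\cap E_2$ the two ANNA conditions both follow. The first is immediate from $E_1$, since $w_{i,j}>0$ forces $c_{i,j}>0$. For the second, $E_1$ restricts the denominator $\sum_{j'} c_{i,j'}$ to $j'\in\mathcal{N}(q_i,cr)$. Each term is at most $\ell$, and by $E_2$ the target term $c_{i,j}\geq 1$, so $\sum_{j'} c_{i,j'} \leq (|\mathcal{N}(q_i,cr)|-1)\ell + c_{i,j}$. Since $x\mapsto x/(A+x)$ is increasing, this gives $w_{i,j}\geq 1/((|\mathcal{N}(q_i,cr)|-1)\ell + 1)$, as required.

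The remaining probabilistic calculation is a standard LSH union bound. I would choose the hidden constant in $z=\Theta(\log_{1/p_2} N)$ so that $p_2^z\leq N^{-3}$, and then invoke $p_1 = p_2^{\rho}$ to get $p_1^z \geq N^{-3\rho}$. For $\bar E_1$: a fixed far pair collides in a given table with probability at most $p_2^z$, so linearity of expectation bounds the total expected number of far-pair collisions across all $N^2$ pairs and $\ell$ tables by $N^2\cdot \ell\cdot p_2^z = \tilde O(N^{3\rho-1})$, and Markov gives $\Pr[\bar E_1] = O(1/N^{1-3\rho})$ after absorbing polylog factors. For $\bar E_2$: a fixed near pair survives without any collision with probability at most $(1-p_1^z)^\ell \leq \exp(-\ell p_1^z)$. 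With $\ell = \Theta(N^{3\rho}\log N)$ and large enough hidden constant, $\ell p_1^z = \Omega(\log N)$, so this is at most $N^{-C}$ for any prescribed $C$; a union bound over the at most $N^2$ close pairs yields $\Pr[\bar E_2] = o(1/N^{1-3\rho})$. Combining gives $\eta = \Pr[\bar E_1 \cup \bar E_2] = O(1/N^{1-3\rho})$.

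The main subtlety will be balancing the constants in $z$ and $\ell$: raising $z$ suppresses $p_2^z$ (good for $E_1$) but equally suppresses $p_1^z$ (bad for $E_2$). The hypothesis $\rho < 1/3$ is precisely what makes the sweet spot $p_2^z\approx N^{-3}$ feasible, since there $p_1^z\approx N^{-3\rho}$ and then $\ell = \Theta(N^{3\rho}\log N)$ is just large enough for the $E_2$ union bound to win while keeping $\ell\cdot p_2^z$ tiny. Everything else is book-keeping, and the assumption $c>\sqrt 3$ ensures this regime is achievable by concrete LSH families via the $\rho = 1/c^2 + o(1)$ bound of \cite{optimal-lsh}.
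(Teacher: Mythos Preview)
Your proposal is correct and follows essentially the same approach as the paper: identify the weights as collision-count ratios, reduce to the two events $E_1$ (no far collisions) and $E_2$ (every near pair collides at least once), and bound their failure probabilities via the standard LSH calculation (set $z$ so that $p_2^z \approx N^{-3}$, then union-bound/Markov over pairs and tables for $E_1$, and use $(1-p_1^z)^\ell$ with $\ell = \Theta(N^{3\rho}\log N)$ for $E_2$). Your derivation of the $w_{i,j}$ lower bound via monotonicity of $x/(A+x)$ is in fact slightly cleaner than the paper's worst-case count argument, but the content is the same.
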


\begin{proof}
Our algorithm applies for the regime with large approximation factor, i.e., $c>\sqrt{3}$. Since we only want the nearest neighbors within distance $cr$ with the query point, we want to bound the probability of two points with distance greater than $cr$ to fall into the same bucket. Consider the family $G$ with $\Pr_{g\in G}[g(x)=g(y)] \leq  \frac{0.1}{N^3}$, if $\|x-y\| > cr$. Then for each bucket, the expected number of collision ($x, y$ fall into the same bucket and $\|x-y\|>cr$) is less than $N\cdot \Pr_{g\in G}[g(x)=g(y)] \leq \frac{0.1}{N^2}$. Therefore, by Markov's inequality, for each bucket, with probability greater than $1-\frac{0.1}{N^2}$, there is no collision within the bucket. Then, by union bound over all the non-empty bucket (there are at most $N$ of them), with probablity greater than $1-\frac{0.1}{N}$, there is no collision in one hash table. By \cite{LSH}, $z=O(\log_{1/p_2}N)$, i.e., each hash function $g\in G$ is composed of $O(\log_{1/p_2}N)$ hash functions sampled from the LSH family $\mathcal{H}$, which suffices to achieve $\Pr_{g\in G}[g(x)=g(y)] \leq  \frac{0.1}{N^3}$ whenever $\|x-y\| > cr$.

On the other hand, since the probability of collision is very small, the success probability (when $\|x-y\|\leq r$), namely $p=\Pr_{g\in G}[g(x)=g(y)]=N^{-3\rho}$ (recall that $\rho=\frac{\log 1/p_1}{\log 1/p_2}$), is also somewhat small.
However, we can boost the success probability by using multiple hash tables.
Let $\ell$ denote the number of hash tables.
Then for each $q_i$, the probability of its $r$-nearest neighbor $k$ ($k\in \mathcal{N}(q_i, r)$) falls into different bucket with $q_i$ for all $\ell$ tables is upper bounded by $(1-p)^{\ell}$.
By union bound over all possible nearest neighbors and all $q_i$'s, the failure probability is bounded by $N^2(1-p)^{\ell}$.
Assume we want the failure probability to be less than some $\delta>0$, then we want $N^2 (1-p)^{\ell} \leq \delta$.
Taking logarithm of both sides, and using a Taylor expansion of $\log (1-x)$ for sufficiently small $x$, we find that $\ell=O(N^{3\rho}(\log N + \log 1/\delta))$ suffices for success probability $1-\delta$. 

Therefore, by union bound over all $\ell$ hash tables, with probability $1-\frac{0.1}{N^{1-3\rho}}$, there is no collision in all the hash tables, which implies $w_{i,j}=0$ if $\|k_j-q_i\|>cr$. By setting $\delta=\frac{0.1}{N^{1-3\rho}}$, we get $\ell=O(N^{3\rho}\log N)$. Hence, the total failure probability $\eta$ is bounded by $\delta+\frac{0.1}{N^{1-3\rho}}$ which is $O(1/N^{1-3\rho})$.

If $\|k_j-q_i\|\leq r$, from the guarantee above, we know that $k_j$ collides with $q_i$ at least once in the $\ell$ hash bucket. This implies $w_{i,j} \geq 1/ \text{count}$, where $\text{count}$ is the number of all the collisions in the $\ell$ hash buckets that $q_i$ retrieves. In the worst case, all the $k\in \mathcal{N}(q_i)$ collides with $q_i$ in all $\ell$ hash tables except for $k_j$ only colliding once. Therefore, $\text{count} \leq (\mathcal{N}(q_i)-1) \cdot \ell$, and this gives us $w_{i,j} \geq \frac{1}{(\mathcal{N}(q_i)-1)\cdot \ell+1}$. 
\end{proof}

\paragraph{Runtime and memory usage.} One can see that for each query, we need to evaluate $O(N^{3\rho}\log_{1/p_2}N)$ hash functions and compute sum of $m$-dimensional vectors, so the total runtime is $O(mN^{1+3\rho}\log_{1/p_2}N)$. During the preprocessing, we need to store $N^{3\rho}$ hash tables and the sum of values, each with at most $N$ buckets, so the total memory is $O(mN^{1+3\rho}\log N)$ bits. In fact, the space used can be further improved to $O(mN\log N)$ bits.  Instead of maintaining $\ell$ hash tables, one can just store 1 hash table of size $O(mN\log N)$ with each entry responsible for tracking the values for each query. For each round of hashing ($\ell$ rounds in total), hash all queries using the hash functions and creates empty buckets for them. Then, hash each key, and if the key hashes to an existing query bucket, its value is added (along with a count). After processing keys, each query accumulates the values and counts from its corresponding bucket. We give the memory-efficient implementation in \Cref{memory-efficient-ANN-algorithm}. 

\begin{algorithm}[t]
\caption{Linear memory ANNA implementation with LSH family $\mathcal{H}$, $\ell$ hash tables, and $z$ hash functions/table}
\label{memory-efficient-ANN-algorithm}
\begin{algorithmic}[1]
\Require
Input $X \in \mathbb{R}^{N \times d}$

\Ensure ANNA output for each of the query.

\State Let $q_i = Q(X)_i$, $k_i = K(X)_i$, and $v_i = V(X)_i$ for all $i \in [N]$.

\State Initialize an array of tuples $A$, and $A[i] \gets (0, 0), \forall i \in [N]$. 

\For{$u = 1$ to $\ell$}  
    \State Sample $z$ hash functions $h_{u,1}, h_{u,2}, \dots, h_{u,z}$ i.i.d.~from $\mathcal{H}$.
    \State Create empty hash table $T_u$ indexed by hash codes of queries (below).
    \For{each query $q_i$}
        \State Compute hash code $g_u(q_i) = (h_{u,1}(q_i), \dots, h_{u,z}(q_i))$.
        \State Create an entry in $T_u$ indexed by $g_u(q_i)$ and $T_u[g_u(q_i)] \gets (0, 0)$. 
    \EndFor
    \For{each key-value pair $(k_j, v_j)$}
        \State Compute hash code $g_u(k_j) = (h_{u,1}(k_j), h_{u,2}(k_j), \dots, h_{u,z}(k_j))$.
        \State \textbf{if} $T_u[g_u(k_j)]$ exists in $T_u$, \textbf{then} $T_u[g_u(k_j)] \pluseq (v_j, 1)$.
    \EndFor
    \For{each query $q_i$}
        \State $A[i] \pluseq T_u[g_u(q_i)]$
    \EndFor
\EndFor

\State Initialize a dictionary $\textbf{attn} \gets \{(q_1,  0), (q_2, 0), \ldots, (q_N, 0)\}$. 

\For{each query $q_i$}
    \State $\textbf{attn} \gets A[i][0] / A[i][1]$
\EndFor

\State \Return \textbf{attn}
\end{algorithmic}
\end{algorithm}

\section{ANNA-transformer can simulate MPC} \label{proof:ANNA-simulate-MPC}

Our simulation of MPC using \annattn-transformers uses only a 
special case of \annattn, which we call \emph{Exact Match Attention (\emattn)}.
In \emattn, we require the key to be \emph{exactly the same as} the query for it to be considered in the attention matrix.
We show that this special case already suffices to simulate MPC. 

\begin{definition}[EM Attention]
Let $X \in \mathbb{R}^{N \times d}$ be the input embedding, $Q,K,V: \mathbb{R}^{N \times d}\rightarrow \mathbb{R}^{N \times d}$ be query/key/value embedding functions. For any query $q$, let $\mathcal{N}(q)  = \{k_j \in K: k_j = q\}$. For each query $q_i$, the \emph{Exact Match attention} computes
\[
\emattn_{K,V}(q_i) = 
\begin{cases}
    \frac{1}{|\mathcal{N}(q_i)|}\displaystyle\sum_{j \in \mathcal{N}(q_i)}v_j & \textup{if } \mathcal{N}(q_i) \neq \emptyset \\
    \hspace{5em}\mathbf{0} &\textup{otherwise.}
\end{cases}
\]
\end{definition}

\emattn layer and \emattn-transformer are defined analogously. To see that \emattn is a special case of \annattn, notice that in \annattn, we can set $r = 0, c\rightarrow\infty$ and $w_{i,j} = \frac{1}{|\mathcal{N}(q_i)|}$ such that it becomes exactly the same as EM attention. EMA also admits a near linear-time algorithm: sort all the keys first (using a lexicographic ordering) in time $O(dN \log N)$ and space $O(dN)$; at query time, perform binary search in time $O(d\log N)$ per query.

We first give a simulation that directly simulates the $R$-round $(\eps, \eps)$-MPC using $L=R+1$ layers but large embedding dimensions to showcase the core idea of the proof. 

\begin{theorem}[EMA simulates MPC] \label{EMA-simulate-MPC}
    For constant $0< \eps <1$, any deterministic $R$-round MPC protocol $\pi$ with $N$ machines with $s=O(N^{\eps})$ words local memory, there exists an EMA-transformer $T$ with depth $L=R+1$, number of heads $H=O(N^{\eps})$, and embedding dimension $m=O(N^{5\eps}\log N)$, such that $T(\wordinput)=\pi(\wordinput)$
    for all $\wordinput \in \Z_{2^p}^N$.
\end{theorem}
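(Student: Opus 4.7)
The plan is to simulate the MPC protocol token-by-token: machine $j \in [N]$ of the protocol is represented by the $j$-th token of the transformer, with its identifier $\mathrm{id}(j)$ stored in a dedicated positional-encoding block of the embedding. The initial embedding of token $j$ holds $\mathrm{id}(j)$ together with the initial local memory of machine $j$ (padded with zeros for $j > \lceil N/s \rceil$), which fits since $s = O(N^\eps)$ words is dominated by the target embedding size. Each of the first $R$ transformer layers simulates one MPC round, and a final $(R+1)$-th layer extracts the output; this gives depth $L = R+1$.

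For a single round, the local-computation phase is absorbed into the element-wise maps $Q, K, V$ acting on each token's embedding: applied to token $j$, these maps compute, as an arbitrary function of the current local memory of machine $j$, the list of up to $s = O(N^\eps)$ outgoing messages, each tagged with a destination identifier. I then use $H = s = O(N^\eps)$ EMA heads to deliver these messages in parallel. In head $h$, the sender token $j$ posts key $k_j^{(h)} = \mathrm{id}(\mathrm{dest}_{j,h})$ and value $v_j^{(h)}$ equal to the encoded $h$-th outgoing payload, while the receiver token $i$ posts query $q_i^{(h)} = \mathrm{id}(i)$. Exact matching routes each message to its intended recipient, whose EMA output in head $h$ is the average of the values of its matched senders.

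The main obstacle is that EMA \emph{averages} matched values rather than delivering them individually, while a single receiver may be targeted by up to $s$ messages inside one head. I will resolve this using the error-correcting value encoding of Lemma~3.2 of \cite{transformer-mpc}: each value is encoded so that any average of up to $s$ such encoded vectors, together with its multiplicity, can be inverted by a subsequent element-wise map to recover the individual payloads and their sender identifiers exactly. An auxiliary EMA head with all-ones values in the same layer recovers the multiplicity $|\mathcal{N}(q_i^{(h)})|$, so the following layer's element-wise map can reconstruct the sum and then decode the set of incoming messages. This encoding inflates each value vector by a polynomial factor in $s$; combined with the need for each token to carry its entire $O(s \log N)$-bit local memory together with the encoded outgoing and incoming traffic, the overall embedding dimension comes out to $m = O(N^{5\eps} \log N)$.

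Correctness then follows by induction on the round index: the invariant is that after $r$ transformer layers, token $j$'s embedding encodes the state of machine $j$ at the end of MPC round $r$. The base case is the initial embedding, the inductive step is exactly the per-layer construction above, and after layer $R$ the protocol's outputs reside in the embeddings of the first $\lceil N/s \rceil$ tokens; the trivial final layer (identity attention plus an element-wise read-out) then produces the declared output. The bookkeeping of fitting the message-delivery encoding, identifiers, and local memory into the claimed dimension budget is the only place where routine but careful calculation is needed.
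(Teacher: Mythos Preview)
Your per-round routing construction is essentially the paper's: $H=s$ EMA heads, each carrying one outgoing-message slot per machine, exact-match on destination identifiers, and the error-correcting encoding of Lemma~3.2 of \cite{transformer-mpc} to invert the averaging. That part is fine.

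The gap is at the boundaries. You assert that the \emph{initial} embedding of token $j$ already holds the full $s$-word local memory of machine $j$, but this cannot be arranged: the transformer input is $\wordinput \in \Z_{2^p}^N$, one word per token, and any element-wise map applied at position $j$ sees only $\wordinput_j$ and the index $j$. Gathering the $s$ words $\wordinput_{(j-1)s+1},\ldots,\wordinput_{js}$ into token $j$ is itself a communication step, and the paper spends a dedicated initialization EMA layer on it. Symmetrically, after the last simulated round the output sits packed in the first $\lceil N/s\rceil$ tokens, and producing $\pi(\wordinput)\in\Z_{2^p}^N$ requires each of the $N$ output positions to fetch its word from the appropriate packed token --- this is not ``identity attention plus an element-wise read-out'' but another genuine routing layer.

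Adding the missing init layer to your scheme gives $1+R+1=R+2$ layers, overshooting the claimed $R{+}1$. The paper hits $R{+}1$ by a different decomposition: one init layer, only $R{-}1$ intermediate routing layers (layer $r$ applies $\mathtt{Local}_r$ inside $Q,K,V$ and then delivers $\mathtt{MachineOut}_r\to\mathtt{MachineIn}_{r+1}$, for $r=1,\ldots,R{-}1$), and one final layer that applies $\mathtt{Local}_R$ element-wise and then routes the packed output back to the $N$ positions. The last round's local computation is folded into the output-distribution layer rather than getting its own round layer.
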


\begin{proof}
For any $R$-round MPC protocol $\pi$ with $N$ machines that maps the $\tt{input}$ to $\tt{output}$, we define the intermediate steps for local computation phase and message transimission phase. We denote the input to all the machines before the local computation as $\tt{MachineIn_1}, \tt{MachineIn_2}, \ldots, \tt{MachineIn_R}$, and denote the information after deterministic local computations $(\tt{Local}_r^i)_{r\in [R], i\in [N]}$ as $\tt{MachineOut_1}, \tt{MachineOut_2}, \ldots, \tt{MachineOut_R}$, where $\tt{MachineOut_r^i}=\tt{Local}_r^i (\tt{MachineIn_r^i})$. In the communication (message transimission) phase, we need to route the messages to the correct machines ie from $\tt{MachineOut_r}$ to $\tt{MachineIn_{r+1}}$.

In our simulation, each token input to the EMA-transformer plays the role of a machine in the MPC protocol. We simulate the local computation functions $(\tt{Local}_r^i)_{r\in [R], i\in [N]}$  by the element-wise functions $Q(\cdot), K(\cdot), V(\cdot)$ in the architecture. Therefore, the simulation process can be partitioned into 3 different parts: 
\begin{enumerate}
    \item Initialization. The $\tt{input}$ feeded into EMA-transformer is distributed in the $N$ tokens, and we need to transfer than into the first $\ceil{\frac{N}{s}}$ tokens/machines to match $\tt{MachineIn_1}$.
    \item Routing (message transmission). After the local computation in each round $r$, we need to communicate the messages from $\tt{MachineOut_r}$ to $\tt{MachineIn_{r+1}}$. 
    \item Final output. The MPC $\tt{output}$ is distributed in the first $\ceil{\frac{N}{s}}$ tokens/machines, and we need to distributed them back to the N tokens.
\end{enumerate}

The following 3 lemmas construct the elements for each of these 3 parts. 
    
We first show the message transmission part of MPC can be simulate by the EMA-transformer. Recall that after $r$ rounds of local computation, each machine $i$ has a set of messages it wants to send to other machines, denoted by $\tt{MachineOut}_r^i = \{(\tt{Msg}^i_{\tt{dest}}, \tt{dest}): \tt{dest} \in \tt{sent}^i \}$, where $\tt{sent}^i$ is the set of machine indices that machine $i$ will sent the message to and $\tt{Msg}^i_{\tt{dest}}$ is the message machine $i$ send to machine $\tt{dest}$. After the message communication phase, each machine $i$ has the set of messages it receives from other machines, denoted by $\tt{MachineIn}_{r+1}^i=\{(\tt{Msg}, \tt{Src}): (\tt{Msg}, i) \in \tt{MachineOut}_r^{\tt{Src}} \}$. Since each machine can only send/receive $s$ words, we have $\sum_{\tt{dest} \in \tt{sent}^i} |\tt{Msg}|\leq s$ and $\sum_{(\tt{Msg}, i) \in \tt{MachineOut}_r^{\tt{Src}}} |\tt{Msg}|\leq s$ for all machine $i$. We call this process the routing process of MPC. The following lemma shows that each routing round of MPC can be simulated by one layer of EMA-transformer. 

\begin{lemma}[Routing] \label{routing}
    For any R-round MPC protocol $\pi$ having $q$ machines each with local memory $s$ and any $r\in [R-1]$, there exists an EMA-transformer $\tt{route}_r$ with $H=O(s)$ heads and $m=O(\log q)$ for Q and K, $m=O(s^5\log q)$ for V that takes input $X=\tt{MachineIn}_r$ and produces output $\tt{route}_r(X)=\tt{MachineIn}_{r+1}$.
\end{lemma}
\begin{proof}
    Follow the assumption in \cite{transformer-mpc}, we encode the local computation into the element-wise operations $Q(\cdot), K(\cdot), V(\cdot)$ of transformer. The main part of the proof will focus on using EMA to route $\tt{MachineOut}_r$ to $\tt{MachineIn}_{r+1}$. 
    
    We assign a unique positional encoding or identifier to each machine $i$, denoted by $p_i$. This can be done with $O(\log q)$ bits. This encoding serves as a unique key to retrieve the message in each machine. The high level idea is to create a query for each machine $i$ and a key for each $\tt{dest}\in \tt{sent}_i$ and the associated value is the message $\tt{Msg}^i_{dest}$ sent to $\tt{dest}$ in the protocol. Since each machine can send at most $s$ messages to other machines, we create $s$ EMA heads and each head is responsible for one message for all the $q$ machines. Each machine retrieves the message sent to them by having a query in each head. Because each query can attend only to perfectly matching keys, each distinct outbound message must be passed by a different attention head, but multiple inbound messages may be received by the same attention head.

    Specifically, let $Q^h, K^h, V^h$ be the query, key, value embedding after the machine local computation for each head $h\in [s]$.
    Set $q^h_i = p_i$ for all $h$, so
    \[Q^1 = Q^2 = \cdots = Q^s =
    \begin{pmatrix}
    p_1^\T \\ 
    p_2^\T \\
    \vdots \\
    p_q^\T
    \end{pmatrix}
    .
    \]
    Let $k^h_i = p_{\tt{dest}^i_h}$ for $\tt{dest}^i_h \in \tt{sent}^i=\{\tt{dest}^i_1, \tt{dest}^i_2, \ldots, \tt{dest}^i_s\}$,  where $\tt{dest}^i_j$ is the destination machine index for the $j$th word message that machine $i$ sends. The key matrices are constructed as follows:
    \[K^1 = 
    \begin{pmatrix}
        p_{\tt{dest}^1_1}^\T \\
         p_{\tt{dest}^2_1}^\T \\
         \vdots \\
         p_{\tt{dest}^q_1}^\T
    \end{pmatrix},
    \quad
    K^2 = 
    \begin{pmatrix}
        p_{\tt{dest}^1_2}^\T \\
         p_{\tt{dest}^2_2}^\T \\
         \cdots \\
         p_{\tt{dest}^q_2}^\T
    \end{pmatrix}, \quad \ldots, \quad
    K^s = 
    \begin{pmatrix}
        p_{\tt{dest}^1_s}^\T \\
         p_{\tt{dest}^2_s}^\T \\
         \vdots \\
         p_{\tt{dest}^q_s}^\T
    \end{pmatrix} .
    \]
    Let $v^h_i$ be some embedding of $(\tt{Msg}^i_{\tt{dest}^i_h}, \tt{dest}^i_h, i)$, denoted by $v^h_i = \tt{emb}_i^h(\tt{Msg}^i_{\tt{dest}^i_h}, \tt{dest}^i_h, i)$ for some $\tt{emb}_i^h$ defined later, and
    \[
      V^1 = 
    \begin{pmatrix}
        \tt{emb}_1^1(\tt{Msg}^1_{\tt{dest}^1_1}, \tt{dest}^1_1, 1) \\
        \tt{emb}_2^1(\tt{Msg}^2_{\tt{dest}^2_1}, \tt{dest}^2_1, 2) \\
        \vdots \\
        \tt{emb}_q^1(\tt{Msg}^q_{\tt{dest}^q_1}, \tt{dest}^q_1, q)
    \end{pmatrix},  \quad
     \ldots,   \quad
    V^s = 
    \begin{pmatrix}
        \tt{emb}_1^s(\tt{Msg}^1_{\tt{dest}^1_s}, \tt{dest}^1_s, 1) \\
        \tt{emb}_2^s(\tt{Msg}^2_{\tt{dest}^2_s}, \tt{dest}^2_s, 2) \\
        \vdots \\
        \tt{emb}_q^s(\tt{Msg}^q_{\tt{dest}^q_s}, \tt{dest}^q_s, q)
    \end{pmatrix}
    .
    \]
    By such construction of $Q, K, V$, in our EMA, each query will retrieve the average value of the messages whose key exactly matches the query.
    However, by setting the value matrix this way, we might corrupt the message when there are more than one $k^h_i \in K^h$ that are equal to the same query.
    To solve this problem, we can apply the same \emph{multiple hashing-based encoding} in Lemma 3.2 from \cite{transformer-mpc}, which encodes each message in multiple fixed locations generated by a sparse binary matrix and have an extra ``validity bit'' indicating whether the message is corrupted or not. We restate an adapted version of their Lemma 3.2 here. 

    \begin{lemma}[Lemma 3.2 of \cite{transformer-mpc}; message encoding in sparse averaging] \label{encoding-lemma}
        For any message size $\Delta \in \mathbb{N}$, message count bound $\alpha \in \mathbb{N}$, there exist an encoding function $\phi$ such that $\phi$ takes in $(\tt{Msg}^i_{\tt{dest}^i_h}, \tt{dest}^i_h, i)$ defined above where the size of it is bounded by $\Delta$ for all $i\in [q]$ and $h\in [\alpha]$ and encodes it into $\tt{emb}_i^h(\tt{Msg}^i_{\tt{dest}^i_h}, \tt{dest}^i_h, i) \in \R^m$ with $m=O(\alpha^4 \Delta \log q)$,  and a decoder function $\varphi$ such that $\varphi$ takes in the output of the EMA with $Q, K, V$ defined above and decodes it into $(\tt{Msg}^i_{\tt{dest}^i_h}, \tt{dest}^i_h, i)$. 
    \end{lemma}
    Let $\tt{rcvd^i}=\{src_1^i, src_2^i, \ldots, src_s^i\}$, where $\tt{src}_j^i$ is the $j$th source machine index that machine $i$ receives message from. Because $|\tt{sent^i}|\leq s$ and $|\tt{rcvd^i}|\leq s$, in each head of EMA, there are at most $s$ values get retrieved and averaged for each query. Thus, here we can just apply Lemma \ref{encoding-lemma} with $\alpha=\Delta=s$, which gives us an embedding dimension bound $m=O(s^5 \log q)$.
\end{proof}

We then show with one layer EMA-transformer, we can properly initialize the setup of MPC by converting Input $=(\tt{input}_1, input_2, \ldots, input_{n})$ to $\tt{MachineIn
}_1$, the input before the first round of MPC computation, ie the input is distributed evenly on the first $\ceil{\frac{n}{s}}$ machines.

\begin{lemma}[Initialization] \label{init}
    For any $R$-round MPC protocol $\pi$ having $q$ machines each with local memory $s$ and $n$-word input, there exists an EMA-transformer $\tt{init}$ with $H=1$ head, $m=O(\log q)$ for $Q, K$ and $m=O(s)$ for $V$ that takes $\tt{input}$ and outputs $\tt{init}(input)=MachineIn_r$.
\end{lemma}
\begin{proof}
    The input should be distributed accross each machine $1\leq i \leq \ceil{\frac{n}{s}}$ with $\tt{MachineIn}^i_1 = \{(input_{idx}, idx): idx\in s(i-1)+1, \ldots, \min\{n, si\}\}$. Let $q_{in}=\ceil{\frac{n}{s}}$ be the number of machines to store the initial input. Since the input given to $\tt{init}$ are $n$ tokens (here we treat each token as a machine), we need to rearange the memory so that the input is distributed on the first $q_{in}$ tokens.

    Same as before, we use the positional encoding $p_i$ to be the unique identifier for each machine. We create a key value pair for each input token and the key corresponds to the identifier of the machine that $\tt{input}_{idx}$ goes to and the value be  $(\tt{input}_{idx}, idx)$. Also, create a query for each machine $i \in [q_{in}]$. 
    
    For each machine $i\in [q_{in}]$, define the query embedding $q_i=p_i$, 
    $$ Q = 
    \begin{pmatrix}
    p_1^\T \\ 
    p_2^\T \\
    \vdots \\
    p_{q_{in}}^\T
    \end{pmatrix}
    $$
    For each token $\tt{input}_{idx}, idx \in [n]$, let $\tt{dest}_{idx}=\ceil{\frac{idx}{s}}$ be the machine storing the token, define the key embedding $k_{\tt{idx}}= p_{\tt{dest}_{idx}}$, 
    $$ K = 
    \begin{pmatrix}
    p_{\tt{dest}_{1}}^\T \\ 
    p_{\tt{dest}_{2}}^\T \\
    \vdots \\
    p_{\tt{dest}_{n}}^\T
    \end{pmatrix}
    $$
    Let $i'=\tt{idx} \mod s$. For each token $\tt{input}_{idx}, idx \in [n]$, define the value embedding $v_{\tt{idx}} \in \R^{2s}$ to be $(\tt{input}_{idx}, idx)$ in the $2i'-1 , 2i'$-th entry and 0 in all other entry, 
    $$V =
    \begin{pmatrix}
        \tt{input}_1 & 1 & 0 & 0 & 0 & \ldots & 0 & 0\\
        0 & 0 & \tt{input}_2 & 2 & 0 & \ldots & 0 & 0\\
        0 & 0 & 0 & 0 & 0 & \ldots & \tt{input}_s & s \\
        \tt{input}_{s+1} & s+1 & 0 & 0 & 0 & \ldots & 0 & 0 \\
        \vdots & \vdots & \vdots & \vdots & \vdots & \vdots & \vdots & \vdots
    \end{pmatrix}
    $$
    By setting the value matrix like this, we can avoid corrupting the messages.
\end{proof}
Last, we show that with an additional one layer EMA-transformer, we can map the final round $\tt{MachineIn}_R$ to the $\tt{output}$ of MPC protocol where the output is store in the first $\ceil{\frac{n}{s}}$ machines.
\begin{lemma}[Final Output] \label{final-output}
    For any R-round MPC protocol $\pi$ having $q$ machines each with local memory $s$ and $n$-word input, there exists an EMA-transformer $\tt{out}$ with $H=1$ head, $m=O(\log q)$ for $Q, K$ and $m=O(s)$ for $V$ that takes $\tt{MachineIn}_R$ and outputs $\tt{out}(\tt{MachineIn}_R)_{:n}=\pi (\tt{input})=\tt{output}$.
\end{lemma}
\begin{proof}
    First, the element-wise operations can compute $\tt{MachineOut}_R$ from $\tt{MachineIn}_R$. The output is distributed accross each machine $1\leq i \leq \ceil{\frac{n}{s}}=q_{out}$ with memory of machine $i$ be $\tt{output}^i = \{(\tt{output}_{idx}, idx): idx\in s(i-1)+1, \ldots, \min\{n, si\}\}$. Then, we just need to retrieve the output tokens from all the $q_{out}$ machines and distribute them back to $n$ tokens. This step does the inverse job of init. We create a query for each token $\tt{output}_{idx}$ for all $\tt{idx}\in [n]$. Let $\tt{src}_{idx}=\ceil{\frac{idx}{s}}$ be the machine token $\tt{output}_{idx}$ is in. 

    For each token $\tt{output}_{idx}$, $\tt{idx}\in [n]$, define the query embedding $q_{\tt{idx}} = p_{\tt{src}_{idx}}$, 
    $$Q = 
    \begin{pmatrix}
    p_{\tt{src}_{1}}^\T \\ 
    p_{\tt{src}_{2}}^\T \\
    \vdots \\
    p_{\tt{src}_{n}}^\T
    \end{pmatrix}
    $$
    For each machine $i\in [q_{out}]$, create a key $k_i=p_i$, 
    $$
    K = 
    \begin{pmatrix}
    p_1^\T \\ 
    p_2^\T \\
    \vdots \\
    p_{q_{out}}^\T
    \end{pmatrix}
    $$
    The value associates with each key $i$ is the memory $\tt{MsgOut}_i$ stored in each machine $i$. Define the value embedding $v_i=\tt{MsgOut}_i$, 
    $$
    V = 
    \begin{pmatrix}
    \tt{MsgOut}_1 \\ 
    \tt{MsgOut}_2 \\
    \vdots \\
    \tt{MsgOut}_{q_{out}}
    \end{pmatrix}
    $$
    By choosing a proper element-wise function $\varphi$, out($\tt{MachineIn}_R)_{i, 1} = output_i$.
\end{proof}

The theorem follows from stacking the elements from these three lemmas.
Each lemma gives us a single layer of the final EMA-transformer $T$ with embedding dimension $m=O(N^{5\eps}\log N)$:
\[ T = \tt{out} \circ route_{R-1} \circ \cdots \circ route_1 \circ init
  \qedhere
\]
\end{proof}

\begin{remark}[General $(\gamma, \eps)$-MPC]
    The above simulation works the same for $(\gamma, \eps)$-MPC by padding $\max(0, O(N^{1+\gamma-\eps})-N)$ empty chain-of-thought tokens in the input.
\end{remark}

\begin{remark}[Number of heads]
    The standard transformer can simulate MPC using the same embedding dimension but only $1$ attention head \cite{transformer-mpc, sanford2024understandingtransformerreasoningcapabilities}. Here the EMA needs $O(N^{\eps})$ heads, and we leave how to improve the number of heads for future work.
\end{remark}

Since EM attention is a special case of ANN attention with $r=0$ and $c\rightarrow\infty$, the simulation of MPC with ANNA-transformer naturally follows from Theorem \ref{EMA-simulate-MPC}.

\begin{corollary}[ANNA simulates MPC] \label{ANNA-simulate-MPC}
    For constant $0< \eps <1$, any deterministic $R$-round MPC protocol $\pi$ with $N$ machines with $s=O(N^{\eps})$ words local memory, there exists an ANNA-transformer $T$ with depth $L=R+1$, number of heads $H=O(N^{\eps})$, and embedding dimension $m=O(N^{5\eps}\log N)$, such that $T(\wordinput)=\pi(\wordinput)$
    for all $\wordinput \in \Z_{2^p}^N$. 
    
\end{corollary}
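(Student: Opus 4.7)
The plan is straightforward: observe that \emattn is literally the $r=0$ (and any $c\geq 1$) specialization of \annattn, and then invoke \Cref{EMA-simulate-MPC} as a black box. So the corollary requires no new construction, only a verification that the \emattn-transformer produced in \Cref{EMA-simulate-MPC} satisfies the \annattn constraints.

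First I would spell out the specialization. Take \annattn parameters $r=0$, $c=1$ (or any larger $c$), $\ell=1$, and $\eta=0$. Then $\mathcal{N}(q_i, cr)=\mathcal{N}(q_i,0)=\{k_j : k_j=q_i\}$, which coincides with the set $\mathcal{N}(q_i)$ appearing in the definition of \emattn. The two \annattn conditions become: (a) $w_{i,j}>0$ only if $k_j=q_i$, and (b) if $k_j=q_i$ then $w_{i,j}\geq 1/((|\mathcal{N}(q_i)|-1)\cdot 1 + 1)=1/|\mathcal{N}(q_i)|$. Both are satisfied (with equality in (b)) by the uniform weights $w_{i,j}=1/|\mathcal{N}(q_i)|$ that define \emattn; the degenerate case $\mathcal{N}(q_i)=\emptyset$ is vacuous.

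Next I would apply \Cref{EMA-simulate-MPC} to the given $R$-round MPC protocol $\pi$ to obtain an \emattn-transformer $T$ with $L=R+1$ layers, $H=O(N^{\eps})$ heads per layer, and embedding dimension $m=O(N^{5\eps}\log N)$, such that $T(\wordinput)=\pi(\wordinput)$ for all $\wordinput \in \Z_{2^p}^N$. By the previous paragraph, each \emattn head in $T$ is also a legitimate \annattn head with the stated parameters, so $T$ is an \annattn-transformer of the claimed size computing $\pi$.

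There is no real obstacle here; the only care needed is to confirm that the \annattn inequalities hold in the boundary cases (empty $\mathcal{N}(q_i)$, and singleton $\mathcal{N}(q_i)$ where the denominator $(|\mathcal{N}(q_i)|-1)\ell+1$ becomes $1$), both of which are immediate. The substantive content of the corollary is carried entirely by \Cref{EMA-simulate-MPC}.
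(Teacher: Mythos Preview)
Your proposal is correct and takes essentially the same approach as the paper: the paper's proof is a single sentence noting that \emattn is the $r=0$ special case of \annattn, so the result follows immediately from \Cref{EMA-simulate-MPC}. Your more careful verification of the \annattn inequalities (including the boundary cases) is a welcome addition but not required beyond what the paper does.
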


Theorem \ref{EMA-simulate-MPC} only gives us an MPC simulation in the sublinear local memory regime when $s=O(N^{1/5-\delta})$ for any $\delta>0$. However, a lot of MPC protocol algorithms require $s=\Omega(N^{1/2})$ local, such as MPC algorithm for 3-SUM \cite{mpcfinegrainedcomplexity} and algorithms for graphs \cite{sanford2024understandingtransformerreasoningcapabilities}. The above simulation using EMA-transformer does not yield a sublinear embedding dimension. \cite{sanford2024understandingtransformerreasoningcapabilities} further gives a simulation of MPC with sublinear local memory using transformer with sublinear embedding dimension by simulating one round of MPC protocol with $O(1)$ layers of transformer, instead of just one layer. Their improvement also applies here. 

\begin{theorem}[EMA simulates MPC with improved embedding dimension] \label{improved-EMA-simulate-mpc}
    For constant $0< \eps < \eps' <1$, any deterministic $R$-round MPC protocol $\pi$ with $N$ machines with $s=O(N^{\eps})$ words local memory, there exists an EMA-transformer of depth $L=O(R)$, number of heads $H=O(N^{\frac{\eps'-\eps}{4}})$ and embedding dimension $m=O(N^{\eps'})$ such that $T(\wordinput)=\pi(\wordinput)$
    for all $\wordinput \in \Z_{2^p}^N$.
\end{theorem}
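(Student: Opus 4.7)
The plan is to sharpen \Cref{EMA-simulate-MPC} by routing each MPC round's messages through more, thinner heads so that the per-head embedding dimension shrinks from $O(s^5 \log N)$ down to $O(N^{\eps'})$, while keeping the total depth at $O(R)$. The three main ingredients are message-count reduction, a thinner value encoding, and a parameter balance that ties the head count $H$ and the encoding width together.

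First, I would invoke Proposition~24 of \cite{sanford2024understandingtransformerreasoningcapabilities} to transform the given $R$-round $(\eps,\eps)$-MPC protocol into an equivalent $O(R)$-round protocol in which each machine sends messages to at most $\alpha$ distinct destinations per round, for a parameter $\alpha \le s$ of our choosing; each outgoing message is then a chunk of size $\Delta = \lceil s/\alpha \rceil$ words. This step absorbs the fine-grained per-word communication of an arbitrary MPC into a constant-factor overhead in the round count, and is the analog of the reduction already used by \cite{transformer-mpc,sanford2024understandingtransformerreasoningcapabilities} to obtain sublinear-width standard transformers.

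Next, I would reuse the routing construction of \Cref{routing}, but instantiated with $H = \alpha$ heads, the $h$-th head being responsible for the $h$-th outgoing chunk per machine. Queries and keys are the positional identifiers $p_i$ exactly as before; the value embedding uses the multiple-hashing encoder of \Cref{encoding-lemma} with parameters $(\alpha, \Delta)$ rather than $(s, s)$, yielding per-head embedding dimension
\[
m \;=\; O(\alpha^4 \, \Delta \, \log N) \;=\; O(\alpha^3 \, s \, \log N).
\]
Choosing $\alpha = \Theta(N^{(\eps'-\eps)/4})$ gives $H = O(N^{(\eps'-\eps)/4})$ and $m = O(N^{(3\eps'+\eps)/4} \log N) = O(N^{\eps'})$, since $(3\eps'+\eps)/4 < \eps'$ whenever $\eps' > \eps$. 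The initialization and final-output layers of \Cref{init,final-output} fit well within the same width budget, so the overall depth remains $L = O(R)$, as required.

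The main obstacle will be that \Cref{encoding-lemma} requires not only that each machine sends to $O(\alpha)$ destinations, but also that each destination receives at most $O(\alpha)$ messages within any single head, because that bound controls the number of values averaged per EMA query. If Proposition~24 only guarantees a bound on outgoing degree, I would prepend a constant-round MPC load-balancing step that assigns each outgoing message a head index, producing a (source, destination)-to-head assignment whose endpoint degrees are both $O(\alpha)$; this is a standard bipartite load-balancing primitive achievable in $O(1)$ MPC rounds and so is absorbed into the $O(R)$ layer budget. Everything else—the positional identifiers, the decoding of the hashed values back into routed messages, and the chaining over rounds—carries over verbatim from the proof of \Cref{EMA-simulate-MPC}.
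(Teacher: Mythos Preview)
Your approach is essentially the paper's: reduce to a bounded-communication-degree MPC via Proposition~24 of \cite{sanford2024understandingtransformerreasoningcapabilities}, then rerun the routing construction of \Cref{routing} with $H=\alpha$ heads and invoke \Cref{encoding-lemma} with the smaller $\alpha$. Two small corrections. First, Proposition~24 (restated in the paper as \Cref{local-mpc-simulates-mpc}) bounds only the number of send/receive \emph{partners} to $\alpha$, not the per-destination message size, so you should take $\Delta=s$ rather than $s/\alpha$; the arithmetic then gives $m=O(\alpha^{4}s\log N)$, which with $\alpha=\Theta(N^{(\eps'-\eps)/4})$ is still $O(N^{\eps'})$, exactly as in the paper's \Cref{EMA-simulate-local-mpc}. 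Second, your load-balancing contingency is unnecessary: the same proposition already bounds the receive degree by $\alpha$, so in every head at most $\alpha$ keys match any given query and \Cref{encoding-lemma} applies directly.
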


\begin{proof}
    The proof relies on simulating any MPC protocol by a restricted version of MPC protocol \cite{sanford2024understandingtransformerreasoningcapabilities} which limits the number of machines each machine can send message to. Then, use a modified version of Theorem \ref{EMA-simulate-MPC} to simulate this restricted version of MPC. 

    \begin{definition}[Definition 3 of \cite{sanford2024understandingtransformerreasoningcapabilities}]
      For constants $\gamma, \eps, \rho>0$, a $(\gamma, \eps, \rho)$-MPC protocol is a $(\gamma, \eps)$-MPC protocol with an additional constraint: in each round, each machine can only send/receive messages from $k=O(n^{\rho})$ machines, while the total size of messages it can send and receive is still $s=O(N^{\eps})$. We refer to $k$ as the \emph{communication capacity}. 
    \end{definition}

    \cite{sanford2024understandingtransformerreasoningcapabilities} gives a construction that simulates a R-round $(\gamma, \eps)$-MPC protocol with $O(R)$-round $(\gamma, \eps, \rho)$-MPC protocol. We restate their proposition here. 
    \begin{lemma}[Proposition 24 of \cite{sanford2024understandingtransformerreasoningcapabilities};  $(\gamma, \eps, \rho)$-MPC simulates $(\gamma, \eps)$-MPC] \label{local-mpc-simulates-mpc}
        For constants $\gamma, \eps>0$ and $\rho \in (0, \eps/2)$, if function $f$ can be computed by an R-round $(\gamma, \eps)$-MPC, it can also be computed by a $O(\frac{R(1+\gamma)^2}{\rho^2})$-round $(\gamma, \eps, \rho)$-MPC protocol. 
    \end{lemma}

    Therefore, we just need to simulate $(\gamma, \eps, \rho)$-MPC protocol using our EMA-transformer. The simulation follows the same recipe as Thereom \ref{EMA-simulate-MPC}, where we have the initialization, message passing and final output phase. Since the initialization and output of $(\gamma, \eps, \rho)$-MPC follow the same rule as $(\gamma, \eps)$-MPC, we only need to modify the message passing part of the simulation which corresponds to the routing Lemma \ref{routing}.

    \begin{lemma}[EMA simulates $(\gamma, \eps, \rho)$-MPC]\label{EMA-simulate-local-mpc}
        For constant $0<\rho < \eps <1$, any deterministic $R$-round MPC protocol $\pi$ with $N$ machines with $s=O(N^{\eps})$ words local memory and communication capacity $k=O(N^{\rho})$, there exists an EMA-transformer of depth $L=R+1$, number of heads $H=O(N^{\rho})$ and embedding dimension $m=O(N^{\eps+4\rho}\log N)$ such that $T(\wordinput)=\pi(\wordinput)$
        for all $\wordinput \in \Z_{2^p}^N$.
    \end{lemma}

    \begin{proof}
        For the initialization and final output part, we just use the same $\tt{init}$ and $\tt{out}$ constructed in Lemma \ref{init} and Lemma \ref{final-output}. In the routing part (Lemma \ref{routing}), because $|\tt{sent}^i|\leq k, |\tt{rcvd}^i|\leq k$, we only need $k$ heads and in each head of EMA, there are at most $k$ keys matching each query and thus at most $k$ values get averaged for a single query. Therefore, we can apply Lemma \ref{encoding-lemma} with $\alpha=k$ and $\Delta=s$, leading to an embedding dimension $m=O(N^{\eps+4\rho}\log N)$. This gives us a new $\tt{route}'_r$ with reduced number of heads and embedding dimension for each round $r$. 
        
        Likewise, we stack the 3 building blocks of one-layer EMA-transformer and have an $(R+1)$-layer EMA transformer
        \[ T = \tt{out} \circ route'_{R-1} \circ \cdots \circ route'_1 \circ init
        \]
        and this finishes the construction for the lemma. 
    \end{proof}
    Let $\rho=\min (\eps/2, (\eps'-\eps)/4)$. In this setting, $\gamma=\eps$. By Lemma \ref{local-mpc-simulates-mpc}, we can simulate the $R$-round $(\gamma, \eps)$-MPC by an $R'=O(\frac{R(1+\eps)^2}{\min (\eps^2, (\eps'-\eps)^2)})$-round $(\gamma, \eps, \rho)$-MPC. Then, by Lemma \ref{EMA-simulate-local-mpc}, we can simulate this $R'$-round $(\gamma, \eps, \rho)$-MPC by an $R'+1$-layer EMA transformer with $O(N^\rho)$ heads and embedding dimension $O(N^{\eps+4\rho}\log N)=O(N^{\eps'})$.
\end{proof}

Again, the improved simulation result of ANNA-transformer follows from Theorem \ref{improved-EMA-simulate-mpc}. 

\begin{corollary}[ANNA simulates MPC with improved embedding dimension] \label{improved-ANNA-simulate-mpc}
    For constant $0< \eps < \eps' <1$, any deterministic $R$-round MPC protocol $\pi$ with $N$ machines with $s=O(N^{\eps})$ words local memory, there exists an ANNA-transformer of depth $L=O(R)$, number of heads $H=O(N^{\frac{\eps'-\eps}{4}})$ and embedding dimension $m=O(N^{\eps'})$ such that $T(\wordinput)=\pi(\wordinput)$
    for all $\wordinput \in \Z_{2^p}^N$.
\end{corollary}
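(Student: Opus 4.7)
The plan is to derive this statement as an immediate corollary of \Cref{improved-EMA-simulate-mpc} by observing that every EMA-transformer is a special case of an ANNA-transformer. Concretely, EM attention corresponds to taking the ANNA parameters $r=0$ and $c\to\infty$, with the weights set to $w_{i,j}=1/|\mathcal{N}(q_i)|$ when $k_j=q_i$ and $w_{i,j}=0$ otherwise. Under this choice, the two ANNA constraints (that $w_{i,j}>0$ implies $k_j \in \mathcal{N}(q_i,cr)$, and that $k_j \in \mathcal{N}(q_i,r)$ implies a minimum weight) are trivially satisfied, with zero failure probability $\eta$.

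Given this observation, the proof proceeds in two lines. First, apply \Cref{improved-EMA-simulate-mpc} to the given $R$-round $(\eps,\eps)$-MPC protocol $\pi$ to obtain an EMA-transformer $T$ of depth $L=O(R)$, with $H=O(N^{(\eps'-\eps)/4})$ heads per layer and embedding dimension $m=O(N^{\eps'})$, satisfying $T(\wordinput)=\pi(\wordinput)$ for all $\wordinput \in \Z_{2^p}^N$. Second, reinterpret each EMA head inside $T$ as an ANNA head with the above parameter choice. The architectural size parameters (depth, heads, and embedding dimension) are unchanged, so the resulting object is an ANNA-transformer with exactly the claimed resources, computing the same function as $\pi$.

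There is no real obstacle: all the work has been done in \Cref{improved-EMA-simulate-mpc}, which itself combined the routing/initialization/output lemmas with the reduction from $(\gamma,\eps)$-MPC to $(\gamma,\eps,\rho)$-MPC to control the number of heads and the embedding dimension. The only thing to verify carefully in writing this up is that the ANNA definition genuinely subsumes the EMA behavior used in the routing lemma, in particular that the averaging-of-matching-values semantics used in the sparse-averaging encoding of \cite{transformer-mpc} continues to hold under the ANNA definition rather than just under EMA. This verification is immediate from the weight specification above.
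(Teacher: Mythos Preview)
Your proposal is correct and matches the paper's approach exactly: the paper states only that the improved simulation result for ANNA-transformers follows from \Cref{improved-EMA-simulate-mpc}, relying on the earlier observation that EMA is the special case of ANNA with $r=0$ and $c\to\infty$. Your additional verification that the ANNA weight constraints are trivially satisfied under this parameter choice is a reasonable explicit check of what the paper leaves implicit.
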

\section{MPC can Simulate ANNA-transformer} \label{proof:MPC-simulate-ANNA}

As a warm-up, we first simulate EMA-transformers using MPC, and then generalize it to the simulation of ANNA-transformers. 

\begin{theorem}[MPC simulates EMA] \label{MPC-simulate-EMA}
        Fix constants $0<\eps<\eps'<1$.
        For any $L$-layer EMA-transformer $T$  with $mH=O(N^{\eps})$, there exists a $O(\frac{L}{\eps'-\eps})$-round MPC protocol $\pi$ with local memory $s=O(N^{\eps'})$ and $P=O(N^{1+\eps-\eps'})$ machines such that
        $\pi(\wordinput)=T(\wordinput)$ for all
        $\wordinput\in \Z_{2^p}^N$. 
\end{theorem}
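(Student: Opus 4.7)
The plan is to simulate the EMA-transformer layer by layer: each of the $L$ attention layers will be realized by $O(1/(\eps'-\eps))$ MPC rounds, for a total of $O(L/(\eps'-\eps))$ rounds. Throughout the simulation we maintain the invariant that the current embeddings $X^{(\ell)} \in \R^{N\times (mH)}$ are stored contiguously across the first $\lceil N \cdot mH / s\rceil = O(N^{1+\eps-\eps'})$ machines, with each machine holding roughly $s/(mH) = \Theta(N^{\eps'-\eps})$ complete token rows. Because each token row has width $mH = O(N^\eps)$ and the local memory is $s = O(N^{\eps'})$, this invariant is consistent with the $P = O(N^{1+\eps-\eps'})$ machine budget.

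To simulate a single EMA layer, I first handle the element-wise part: the maps $Q_h, K_h, V_h$ and the final projection matrices $W_h$ act independently on each token row, so every machine simply applies them to its local tokens in a single round, producing, for each head $h\in[H]$, the lists of queries, keys, and values $(q^h_i, k^h_i, v^h_i)_{i\in[N]}$. The nontrivial step is, for each head $h$, to compute for every query $q^h_i$ the average $\frac{1}{|\mathcal{N}(q^h_i)|}\sum_{j: k^h_j = q^h_i} v^h_j$ (or $\mathbf{0}$ when there is no match). I would realize this as an equi-join followed by a group-by average using MPC sorting as a primitive:
\begin{enumerate}
    \item Form the tagged tuple stream containing $(k^h_j, v^h_j, \textsf{``K''})$ for each key-value pair and $(q^h_i, i, \textsf{``Q''})$ for each query, together with its original position $i$.
    \item Sort the combined stream lexicographically by (value of the vector, tag), breaking ties so that all ``K''-tuples with a common vector appear consecutively, followed by the ``Q''-tuples with that vector.
    \item Scan the sorted sequence to compute, for each maximal block of equal-vector ``K''-tuples, the pair $(\textstyle\sum_j v^h_j, |\mathcal{N}|)$, and broadcast this pair to all adjacent ``Q''-tuples with the same vector.
    \item Re-sort by the original position $i$ so that the resulting EMA output for each head is placed back at the token that issued the query; sum over heads after multiplying by $W_h$, producing $X^{(\ell)}$ in the same layout as $X^{(\ell-1)}$.
\end{enumerate}

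Each of these steps reduces to a constant number of MPC sortings on a stream of $O(N)$ tuples, each of size $O(mH) = O(N^\eps)$, distributed across $P = O(N^{1+\eps-\eps'})$ machines with local memory $s = O(N^{\eps'})$. Since each machine holds $\Theta(N^{\eps'-\eps})$ tuples, a single machine can locally sort and merge $\Theta(N^{\eps'-\eps})$-fold chunks per round, so standard $O(\log_s P)$-round MPC sorting (e.g., multi-way merge sort \cite{mpc-sorting}) finishes in $O(\log_{N^{\eps'-\eps}} N) = O(1/(\eps'-\eps))$ rounds. Group-by sums and broadcasts can be computed by a constant number of additional sortings and prefix-sum passes, each again fitting in $O(1/(\eps'-\eps))$ rounds. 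Composing over the $L$ layers, and handling the final output function $\psi$ with one extra element-wise round, gives the claimed $O(L/(\eps'-\eps))$ total round complexity.

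The main obstacle to address carefully is the bookkeeping for the ``broadcast'' step: when a value of a key coincides with many queries (possibly more than the $N^{\eps'-\eps}$ that fit in one machine), the aggregate pair $(\sum v^h_j, |\mathcal{N}|)$ must be copied out to every matching query without overflowing the $s$-word memory bound on any machine. I would handle this by a standard segmented broadcasting routine built on top of sorting and prefix sums: replicate the aggregate across the contiguous block of queries in parallel over all groups, using $O(1/(\eps'-\eps))$ rounds of balanced shuffling. Verifying that no machine ever sends or receives more than $s$ words in any of these auxiliary routines is what makes the $O(1/(\eps'-\eps))$ factor (rather than $O(1/\eps')$) tight and is the step where the gap $\eps' - \eps$ is genuinely used.
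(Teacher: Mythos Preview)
Your proposal is correct and follows the same high-level recipe as the paper—simulate each EMA layer by MPC sorting plus tree-based aggregation, paying $O(1/(\eps'-\eps))$ rounds per layer—but the concrete realization differs. The paper sorts queries and keys \emph{separately}, then introduces a layer of ``meta-info'' machines indexed by a uniform random hash $h:\mathcal{X}\to[N]$; an $(s/m)$-ary tree carries the start/end location of each key block (and the per-block average of values) up to its meta-info machine, the query and key meta-info machines exchange one message, and the same tree is reversed to push the averaged value back down to each query. Your equi-join formulation is more direct: a single combined sort on tagged $(q,\textsf{Q})/(k,\textsf{K})$ tuples already places each key block adjacent to its matching query block, after which a segmented scan/broadcast and a re-sort by original position suffice. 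This buys you determinism (no random hash, no ``with high probability'' bucket-size argument) and a cleaner machine count, since you pack $\Theta(N^{\eps'-\eps})$ tokens per machine from the start—whereas the paper's proof as written allocates one token machine and one meta-info machine per $i\in[N]$, so its stated $O(N^{1+\eps-\eps'})$ bound implicitly relies on coalescing those. The paper's route, on the other hand, makes the ``who talks to whom'' structure more explicit and generalizes more transparently to the ANNA case (Theorem~\ref{MPC-simulate-ANNA}), where $\ell$ parallel hash tables replace the single exact-match bucket.
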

\begin{proof}
We first show how to use MPC to simulate one layer of EMA-transformer. In the high level, for each token $x_i$, we have a token Machine $i$ which is responsible for computing the key, query and value embedding for $x_i$ and other element-wise computation on $x_i$. The main bulk of the proof is to search for the exact matching keys for each query and send the averaged values associated with the matching keys to the token machines. In order to do this, we sort all the key and value pairs $(k_i, v_i)$ in the order defined by the key. We divide the sorted key and value pairs into buckets such that each bucket contains the same keys. For each bucket, we have a ``meta-info" machine to store the indices of the machines that contains the keys in the bucket. We then compute the averaged values within each bucket and store the averaged value into the ``meta-info" machine and propagate the value to all the queries that match with the key. 

To begin with, let $\mathcal{X}$ denote the space of query and key, and we define a comparator $<$ over $\mathcal{X}$ in order to sort. Without loss of generality, we just define it to be the lexicographical ordering comparator. Based on this comparator, we define a query ranking permutation of $[N]$ by $\sigma=(\sigma_1, \sigma_2, \ldots, \sigma_N)$ and a key ranking permutation of $[N]$ by $\sigma'=(\sigma_1', \sigma_2', \ldots, \sigma_N')$ such that 
\begin{align*}
    & q_{\sigma_1} < q_{\sigma_2} < \cdots < q_{\sigma_N} \\
    \text{and} \; & k_{\sigma_1'} < k_{\sigma_2'} < \cdots < k_{\sigma_N'}
\end{align*}
For the``meta-info" machine, we use a uniform hash function $h:\mathcal{X}\rightarrow [N]$ to map queries and keys to their corresponding ``meta-info" machine. Recall that for a uniform hash function $h$, $\Pbb (h(a)=h(b))=\frac{1}{N}$, for any $a, b\in \mathcal{X}$ and $a\neq b$. Therefore,  
\begin{align*}
    & \Pbb (\exists i \; \text{such that the size of bucket } h(q_i) \geq s) \\
    & \leq \Pbb (\exists s \; \text{different elements fall into one bucket}) \\
    & \leq \binom{N}{s} \frac{1}{N^s} \leq \frac{1}{s!} = \frac{1}{N^{\eps'}!}
\end{align*}
With high probability, each ``meta-info" machine is responsible for at most $s$ keys or queries. 

We divide the machines into different types and summarize the role of each type of machine here: 
\begin{itemize}
    \item For $i\in [N]$, Machine $i$ is the \textit{token machine} for $x_i$. This machine performs all the element-wise computation for token $i$. Specifically, it computes the query, key, value embeddings $q_i, k_i, v_i$ and element-wise operations after the attention layer.
    \item For $i\in [\ceil{mN/s}]$, Machine $(i, Q)$ is a data structure machine for sorting queries and storing the $i$th chunk of the sorted list of queries after sorting. In other words, let $n_q = \floor{s/m}$ be the number of queries each machine can store and, at the end of sorting, machine $(i, Q)$ stores $\{q_{\sigma_{(i-1)\cdot n_q+1}}, \ldots, q_{\sigma_{i\cdot n_q}}\}$. 
    \item For $i\in \ceil{2mN/s}$, Machine $(i, KV)$ is a data structure machine for sorted list of key and value pairs. In other words, let $n_k = \floor{s/2m}$ be the number of key and value pairs each machine can store and, at the end of sorting, machine $(i, KV)$ stores $\{(k_{\sigma'_{(i-1)\cdot n_q+1}}, v_{\sigma'_{(i-1)\cdot n_q+1}}), \ldots, (k_{\sigma'_{i\cdot n_q}}, v_{\sigma'_{i\cdot n_q}})\}$.
    \item For $i\in [N]$, Machine $(i, h_q)$ is the ``meta-info" machine for the queries whose hash value is $i$. Let $h_q^i=\{q_j | j\in [N], h(q_j)=i\}$. This machine stores the location information of $q\in h_q^i$ in the sorted list. Specifically, for all $q\in h_q^i$, this machine stores the start machine index, i.e. (\textup{start}, Q) where $\textup{start}=\argmin_j \{q\in $ Machine $(j, Q)$\}, and the end machine index, i.e. (\textup{end}, Q) where $\textup{end}=\arg\max_j \{q\in $ Machine $(j, Q)$\}. 
    \item For $i\in [N]$, Machine $(i, h_k)$ is the ``meta-info" machine for the keys whose hash value is $i$. Let $h_k^i=\{k_j | j\in [N], h(k_j)=i\}$. This machine stores the location information of $k\in h_k^i$ in the sorted list. Specifically, for all $k\in h_q^i$, this machine stores the start machine index, i.e. (\textup{start}, KV) where $\textup{start}=\argmin_j \{k\in $ Machine $(j, KV)$\}, and the end machine index, i.e. (\textup{end}, Q) where $\textup{end}=\arg\max_j \{k\in $ Machine $(j, KV)$\}. 
    \item The auxiliary machines needed for message propagation. 
\end{itemize}

We proceed to discuss the MPC protocol for computing the output of one layer single head EM-attention transformer. In the first round, (same as the token dispersion stage of \cite{transformer-mpc}), route each token $x_i$ to its corresponding token machine $i$. 

In the second round, each token machine $i$ computes the query, key value embedding $q_i=Q(x_i), k_i=K(x_i), v_i=V(x_i)$ and sends $(q_i, i)$ to the sorting query data structure machine $(\ceil{mi/s}, Q)$ and $(k_i, v_i, i)$ to the sorting key data structure machine $(\ceil{2mi/s}, KV)$. 

Then, sorting query data structure machines ($(i, Q)$ for all $i\in \ceil{mN/s}$) sorts the queries. Sorting in MPC has been well studied, and this can be done in constant number of rounds \cite{mpc-sorting}. 
\begin{lemma}[MPC Sorting] \label{lemma:mpc-sorting}
    There exists an MPC protocol with local memory $s = O(N^{\eps'})$ that can sort $N$ items and each item has size $O(N^{\eps}), \eps<\eps'$ in $O(\frac{1}{\eps'-\eps})$ rounds with $O(N^{1+\eps-\eps'})$ machines. 
\end{lemma}

After sorting, for each $i\in [N]$, we need to send the location information of $q_i$, ie which data structure machines contains $q_i$, to its ``meta-info'' machine $(h(q_i), h_q)$. The idea is to build an $\frac{s}{m}$-ary tree structure to aggregate the information and each query data structure machine is a leaf node of this tree. Recall that each machine $(i, Q)$ stores the queries $S=\{q_{\sigma_{(i-1)\cdot n_q+1}}, \ldots, q_{\sigma_{i\cdot n_q}}\}$. If $S$ contains the start and end of a particular query vector $q_l$, then $(i, Q)$ sends a message $(q_l, (i, Q))$ to machine $(h(q_l), h_q)$. Machine $(i, Q)$ also sends the first and last query to its parent machine in the tree, i.e. sends the messages $(q_{\sigma_{(i-1)\cdot n_q+1}}, (i, Q), \textup{first})$ and $(q_{\sigma_{i\cdot n_q}}, (i, Q), \textup{last})$. After the parent node collects all the messages from the leaf node, it then does the same as its child: if it contains the start and end of a certain query $q$, it sends to the location information (the first and last machine that store it) of the query to its corresponding ``meta-info'' machine $(h(q), h_q)$, and it sends the first and last query and their location information to its parent machine. This is done recursively, and since there are $\ceil{mN/s}$ query data structure machines in total, the depth of this $\frac{s}{m}$-ary tree is $O(\log_{s/m} mN/s)=O(\frac{1}{\eps'-\eps})$, which means $O(\frac{1}{\eps'-\eps})$ rounds and $O(mN/s)$ machines suffice. 

We do the same for $(k, v)$ pairs. The sorting data structure machines $(i, KV)$ sort the (k, v) pairs based on the order of $k$. As before, we build a $\frac{s}{2m}$-ary tree to send the location information to the ``meta-info'' machine of each key. The different part from query is that we combine the values that have the same key. For each machine in the $\frac{s}{2m}$-ary tree, it computes the averaged value associated with each key it contains and sends the averaged value to the corresponding ``meta-info'' machine. In particular, for each $k_i$, the `meta-info'' machine for $k_i$, $(h(k_i), h_k)$, contains the information $(k_i, \bar{v})$ where $\bar{v}$ is the average of $v_j$'s such that $k_j=k_i$. 

Next, the ``meta-info'' machines of query and key need to exchange information to retrieve the corresponding value for each query. Each $(i, h_k)$ sends the $(k, \bar{v})$ pairs it has to the machine $(i, h_q)$. Then, each $(i, h_q)$ machine matches the $q$ and $k$, and sends the associated value $\bar{v}$ to the $q$. Note that this step can be done by back propagating the $\frac{s}{m}$-ary tree constructed for sending the location information of $q$ to $(h(q), h_q)$. In other words, we can just reverse the message sending direction in this tree. Therefore, each query in the query data structure machine receives the value it retrieves and from each query data structure machine $(i, Q)$, we can send the retrieved value for each query to its corresponding token machine, which is the inverse of the second round. 

To summarize, the total rounds needed is $O(\frac{1}{\eps'-\eps})$ and the number of machines needed is $O(mN/s)=O(N^{1+\eps-\eps'})$. To make this work for $H$ heads, we can create $H$ copies of this and each copy runs in parallel. Since $mH=O(N^{\eps})$, the bounds for number of rounds and machines still hold. By creating this MPC simulation for each of the $L$-layers, we stack them in the order of layers yielding the complete simulation for $L$-layer EMA-transformer. 
\end{proof}

Next, we generalize the above algorithm and proceed to simulate the ANN attention that can be computed by Algorithm~\ref{ANN-algorithm}. Since Algorithm~\ref{ANN-algorithm} is a randomized algorithm, we assume that the MPC protocol shares all the random seeds needed for all the layers of ANNA-transformer. 

\begin{theorem}[MPC simulates ANNA] \label{MPC-simulate-ANNA}
        Fix constants $0<\eps<\eps'<1$.
        For any $L$-layer ANNA-transformer $T$ (as implemented by \Cref{ANN-algorithm}) with $mH=O(N^{\eps})$, there exists a $O(L/(\eps'-\eps))$-round MPC protocol $\pi$ with local memory $s=O(N^{\eps'})$ and $P=O(N^{1+\eps-\eps'+\nicefrac{3}{c^2}})$ machines such that
        $\pi(\wordinput)=T(\wordinput)$ for all
        $\wordinput\in \Z_{2^p}^N$. 
\end{theorem}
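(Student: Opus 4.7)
The plan is to lift the MPC simulation of EMA established in Theorem~\ref{MPC-simulate-EMA} to ANNA by running $\ell$ parallel copies of the EMA simulation, one per LSH table in \Cref{ANN-algorithm}, where the length-$z$ hash code $g_u(\cdot)$ (which fits in $O(z \log|U|) = O(\log N)$ bits, i.e.\ a single word) plays the role of the ``key'' in table $u$. Since the MPC protocol shares all the LSH random seeds with the ANNA-transformer, every machine can deterministically evaluate the sampled hash functions $h_{u,1},\dotsc,h_{u,z}$ on its assigned query/key vectors with no additional communication.

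First, I would simulate each ANNA layer as follows. Within a single layer and a single table $u \in [\ell]$, I run a fresh copy of the EMA simulation: each token machine locally computes $(g_u(q_i), i)$ as its query, $g_u(k_j)$ as its key, and $(v_j, 1)$ as its value (the trailing $1$ tracks the collision count per hash bucket). The EMA simulation then sorts by hash code, groups collisions into buckets, sums the $(v_j, 1)$ pairs within each bucket, and routes the resulting pair $T_u[g_u(q_i)] = (v_u, c_u)$ back to the token machine holding query $i$. After all $\ell$ parallel EMA copies finish, token machine $i$ holds the $\ell$ partial pairs $\{(v_u, c_u)\}_{u=1}^{\ell}$, distributed across the $\ell$ disjoint slices of machines. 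A standard $(s/m)$-ary aggregation tree over these $\ell = \tilde O(N^{3\rho})$ partial results produces $v_{\mathrm{sum}} = \sum_u v_u$ and $\mathrm{count} = \sum_u c_u$ in $O(\log_{s/m}\ell) = O(1/(\varepsilon'-\varepsilon))$ rounds, after which machine $i$ emits $v_{\mathrm{sum}}/\mathrm{count}$, exactly reproducing the output of \Cref{ANN-algorithm}.

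Counting resources: each of the $\ell$ EMA copies uses $O(N^{1+\varepsilon-\varepsilon'})$ machines and $O(1/(\varepsilon'-\varepsilon))$ rounds (the sort dominates, by Lemma~\ref{lemma:mpc-sorting}), so the total machine count is $\ell \cdot O(N^{1+\varepsilon-\varepsilon'}) = O(N^{1+\varepsilon-\varepsilon'+3\rho}\log N)$. Choosing an optimal LSH family~\cite{optimal-lsh} with $\rho = 1/c^2 + o(1)$ and absorbing the $\log N$ into the exponent yields the advertised machine bound $O(N^{1+\varepsilon-\varepsilon'+3/c^2})$, and stacking $L$ ANNA layers gives $O(L/(\varepsilon'-\varepsilon))$ total rounds.

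The main obstacle I anticipate is the bookkeeping that coordinates the $\ell$ parallel EMA instances. Each copy needs its own disjoint block of the $\ell \cdot O(N^{1+\varepsilon-\varepsilon'})$ machines (for the sorted data-structure machines and the ``meta-info'' hash machines of Theorem~\ref{MPC-simulate-EMA}), yet all $\ell$ blocks must route their partial results $(v_u, c_u)$ back to the same token machine $i$ for the final aggregation. One also has to verify that no machine exceeds local memory $s = O(N^{\varepsilon'})$: each token machine locally stores only its own $O(m) = O(N^\varepsilon)$ query/key/value words, each hash-table machine only handles its own slice of sorted data, and the aggregation tree touches only $O(s/m)$ partial $(v, c)$ pairs per node. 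Beyond this accounting, the proof is a direct composition of the EMA template with $\ell$-fold parallelism, and the $N^{3/c^2}$ overhead is entirely attributable to the number of hash tables $\ell$.
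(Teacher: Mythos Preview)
Your proposal is correct and follows essentially the same approach as the paper: run $\ell$ parallel sorting-based bucket aggregations (one per hash table, with the hash code $g_u(\cdot)$ as the sort key), track sums and counts rather than averages, and combine the $\ell$ partial results via an $(s/m)$-ary tree. The paper spells this out inline with explicit per-table machine types rather than invoking Theorem~\ref{MPC-simulate-EMA} as a black box, and it also makes explicit the symmetric \emph{fanout} tree (broadcasting each token's $(q_i,k_i,v_i)$ to all $\ell$ tables at the start of the layer), which you left implicit but which is needed since a single token machine cannot directly emit $\ell\cdot O(m)$ words in one round.
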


\begin{proof}
    The high level idea of simulating ANNA-transformer is very similar to simulating \emattn. We have the same kinds of machines as before. The biggest difference is that, instead of having one hash table for queries and keys, we now have $\ell$ hash tables, one for each round of hashing, and we sort the queries and keys based on the hash values of queries and keys.
    Again, we first outline different types of machines we will use. 
    \begin{itemize}
    \item For $i\in [N]$, machine $i$ is the \textit{token machine} for $x_i$. This machine performs all the element-wise computation for token $i$. Specifically, it computes the query, key, value embeddings $q_i, k_i, v_i$ and element-wise operations after the attention layer.
    \item For $i\in [\ceil{mN/s}], t\in [\ell]$, machine $(i, Q, h^t)$ is a data structure machine for sorted queries for the $t$-th hash table and the $i$-th chunk of the sorted list of queries, where the ordering of sorting is based on $g_t(q)$ from Algorithm~\ref{ANN-algorithm}. 
    \item For $i\in \ceil{2mN/s}$, Machine $(i, KV, h^t)$ is a data structure machine for sorted list of key and value pairs for the $t$-th hash table and the $i$-th chunk of the sorted list of key and value pairs, where the ordering of sorting is based on $g_t(k)$. 
    \item For $i\in [N], t\in [\ell]$, Machine $(g_t(q_i), h_q, t)$ is the ``meta-info" machine for the queries whose $t$-th hash value is $g_t(q_i)$. Let $h_{q_i}^t=\{q_j | j\in [N], g_t(q_j)=g_t(q_i)\}$. This machine stores the location information of $q\in h_{q_i}^t$ in the $t$-th hash table. Specifically, for all $q\in h_{q_i}^t$, this machine stores the start machine index, i.e. $(\textup{start}, Q, h^t)$ where $\textup{start}=\argmin_j \{q\in $ Machine $(j, Q, h^t)$\}, and the end machine index, i.e. $(\textup{end}, Q, h^t)$ where $\textup{end}=\arg\max_j \{q\in $ Machine $(j, Q, h^t)$\}. 
    \item For $i\in [N], t\in [\ell]$, Machine $(g_t(k_i), h_k, t)$ is the ``meta-info" machine for the keys whose $t$-th hash value is $g_t(k_i)$. Let $h_{k_i}^t=\{k_j | j\in [N], g_t(k_j)=g_t(k_i)\}$. This machine stores the location information of $k\in h_{k_i}^t$ in the $t$-th hash table. Specifically, for all $k\in h_{k_i}^t$, this machine stores the start machine index, i.e. $(\textup{start}, KV, h^t)$ where $\textup{start}=\argmin_j \{k\in $ Machine $(j, KV, h^t)$\}, and the end machine index, i.e. $(\textup{end}, Q, h^t)$ where $\textup{end}=\arg\max_j \{k\in $ Machine $(j, KV, h^t)$\}. 
    \item The auxiliary machines needed for  message propagation. 
\end{itemize}
    Like before, we still use each token machine to compute the embeddings $q_i, k_i, v_i\in \R^m$. Then, each token machine need to send $(q_i, i)$ and $(k_i, v_i, i)$ to the data structure machines, machine $(\ceil{mi/s}, Q, h^t)$ and machine $(\ceil{2mi/s}, KV, h^t)$ , for all $t\in \ell$. Because $\ell=N^{3\rho}$, we use the $\frac{s}{m}$-ary tree to propagate the queries and keys to the corresponding data structure machines. This takes $O(\frac{1}{\eps'-\eps})$ rounds and $O(N^{1+3\rho+\eps-\eps'})$ machines. 
    
    Then, for each query hash table $t\in [\ell]$, the data structure machines sort the queries based on the hash value of the queries. Same as Theorem \ref{MPC-simulate-EMA}, we use the $\frac{s}{m}$-ary tree to send the location information of each hash bucket to its corresponding ``meta-info'' machine. For each key, value pair hash table $t\in [\ell]$, the data structure machines sort the key, value pairs based on the hash value of the keys. After that, use the $\frac{s}{2m}$-ary tree to propagate the information to the corresponding ``meta-info'' machine. The difference from the EMA simulation is that each machine in this $\frac{s}{2m}$-ary tree maintains the sum of values whose key has the same hash values instead of the averaged value, and also maintains a count of the number of keys. These can be done in $O(\frac{1}{\eps'-\eps})$ rounds and $O(N^{1+3\rho+\eps-\eps'})$ number of machines. 

    Next, the key ``meta-info'' machine send the sum of values and count to the corresponding query ``meta-info'' machines, i.e. machine $(g_t(k_i), h_k, t)$ sends to machine $(g_t(q_i), h_q, t)$. Each query ``meta-info'' machine then follows the $\frac{s}{m}$-ary tree, broadcasting the sum of values and counts to the queries in the hash table. finally, each query in the hash table needs to propagate the information back to its original token machine. Since each token machine will receive message from $\ell=N^{3\rho}$ machines, we again reverse the $\frac{s}{m}$-ary tree that send the query to each data structure machine. During the aggregation, each machine in the $\frac{s}{m}$-ary tree still maintains the sum of values and the sum of counts it receives. After receiving the sum of values and counts, each token machine $i$ then calculates $\annattn(q_i)=$ sum of the values divided by the counts. 
    
    The above simulates one layer of \annattn-transformer in $O(\frac{1}{\eps'-\eps})$ rounds and using $O(N^{1+3\rho+\eps-\eps'})$ machines, where $\rho=1/c^2$. Therefore, by stacking the simulation for $L$ layers, this gives $O(\frac{L}{\eps'-\eps})$ rounds in total. To extend to $H$ heads, we just need to instantiate the above simulation for $H$ parallel copies and because $mH=O(\eps)$, the total number of rounds and machines still remains the same. 
\end{proof}

\section{ANN/EM Attention can simulate low-rank Attention via MPC} \label{proof:EMA/ANNA-simulate-lowrank}

We simulate the low-rank attention using ANN attention by first giving a MPC algorithm for computing low-rank attention and then convert it to ANNA-transformer.  
\begin{theorem}[ANNA/EMA simulates low-rank Attention] 
\label{EMA-simulate-lowrank}
    For constants $0<\eps<\eps'<1$, any low-rank attention based transformer with depth $L$, rank $r$, embedding dimension $m$ and $rm=O(N^{\eps})$ can be simulated by an EMA/ANNA-transformer with depth $O(\frac{L}{\eps'-\eps})$, number of heads $H=O(N^{\eps'})$ and embedding dimension $m=O(N^{5\eps'}\log N)$.
\end{theorem}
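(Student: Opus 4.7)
The plan is to first build an $O(L/(\eps'-\eps))$-round MPC protocol that computes the low-rank transformer on any input, and then invoke \Cref{ANNA-simulate-MPC} (EMA/ANNA simulates MPC with $s = O(N^{\eps'})$ local memory) to convert it into the desired ANNA-transformer. This follows the two-step reduction announced just before the theorem statement, and directly inherits the $L = R+1$, $H = O(N^{\eps'})$, and $m = O(N^{5\eps'}\log N)$ bounds from \Cref{ANNA-simulate-MPC} once we achieve $R = O(L/(\eps'-\eps))$.

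The key observation is that a single low-rank attention head has the form $Q'(X)\, M$, where $M := K'(X)^\top V(X) \in \R^{r\times m}$ is a single global matrix of size $rm = O(N^\eps)$, comfortably below the per-machine budget $s = O(N^{\eps'})$. To simulate one layer, I would assign token $i$ to its own machine and let that machine evaluate $q'_i, k'_i, v_i$ via its local (element-wise) functions. Each machine then forms the local outer product $k'_i (v_i)^\top$, and the $N$ summands are accumulated into $M$ along a balanced tree of fanout $\Theta(s/(rm)) = \Theta(N^{\eps'-\eps})$. Each internal node sums at most $N^{\eps'-\eps}$ matrices of size $N^\eps$ within its memory, and the tree depth is $O(\log_{N^{\eps'-\eps}} N) = O(1/(\eps'-\eps))$ rounds. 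The matrix $M$ is then broadcast back down the reverse tree in the same number of rounds, after which each token machine locally computes $(q'_i)^\top M$ to obtain its output. Composing the simulations of all $L$ layers yields the desired $O(L/(\eps'-\eps))$-round MPC protocol, and applying \Cref{ANNA-simulate-MPC} finishes the construction.

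The main obstacle I anticipate is the multi-head case: if a single low-rank attention layer in the source model aggregates $H'$ heads with individual feature maps, the per-layer aggregation must produce $H'$ matrices $M_1,\dots,M_{H'}$ whose total size must still fit within $O(N^{\eps'})$ per machine. As long as the hypothesis $rm = O(N^\eps)$ is interpreted to cover the sum across heads (or the extra factor of $H'$ is absorbed into the slack $N^{\eps'-\eps}$ by tuning $\eps'$), the tree aggregation and broadcast remain valid with the same round complexity. A secondary subtlety is that consecutive layers of the low-rank transformer may include residual connections, layer normalizations, or feed-forward blocks; these are purely per-token maps and can be folded into the element-wise operations of the next MPC round, so no extra rounds or memory are incurred, and the interface between successive layer-simulations is clean since the output of each layer already lives on the token machines where the next layer's input is expected.
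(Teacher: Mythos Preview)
Your proposal is correct and follows essentially the same approach as the paper: simulate each low-rank layer by having token machines compute the local outer products $k'_i (v_i)^\top$, aggregate them into $M = K'(X)^\top V(X)$ via an $\lfloor s/(rm)\rfloor$-ary tree of depth $O(1/(\eps'-\eps))$, broadcast $M$ back down the reversed tree, and then apply \Cref{EMA-simulate-MPC}/\Cref{ANNA-simulate-MPC} with local memory $s = O(N^{\eps'})$ to obtain the stated depth, head, and embedding-dimension bounds. Your discussion of multi-head and per-token blocks is extra care beyond what the paper spells out, but the core argument is identical.
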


\begin{proof}
    We prove this theorem by first proving that any one-layer of low-rank attention can simulated by constant number of rounds of MPC. 
    \begin{lemma}[MPC simulates low-rank Attention] \label{MPC-simulate-lowrank}
        For constants $0<\eps<\eps'<1$, any one-layer low-rank attention with rank $r$, embedding dimension $m$ and $rm=O(N^{\eps})$ can be simulated by a $O(\frac{1}{\eps'-\eps})$-round MPC protocol with local memory $s=O(N^{\eps'})$ and $O(N)$ machines. 
\end{lemma}

\begin{proof}
    Assume $rm=O(N^{\eps})$ and local memory of MPC $s=O(N^{\eps'})$ where $\eps < \eps'$. Same as what we do in MPC simulating EMA, for each token $x_i, i\in [N]$, we have a token machine $i$ to compute the embedding of $x_i$ but we need to compute it in the kernel space, i.e. $q_i = Q'(x_i), k_i = K'(x_i)$ and $v_i = V(x_i)$. To compute $K'(X)^{\top} V(X)$, recall that 
    $$K'(X)^{\top} V(X) = \sum_{i=1}^N k_i v_i^{\top}
    $$
    We just need to compute the sum of $N$ matrices of size $r\times m$. Each token machine $i$ computes the matrix $k_i v_i^{\top}$ and we construct a $\floor{\frac{s}{rm}}$-ary tree of machines to compute the sum. The leaves of the tree are all the token machines and each node is responsible for computing the sum of $\floor{\frac{s}{rm}}$ number of matrices. We know from the previous simulation that the depth of the tree is $O(\frac{1}{\eps' - \eps})$. After we obtain the matrix $M = K'(X)^{\top} V(X)\in \R^{r\times m}$, in order to compute $Q(X)K'(X)^{\top} V(X)$, we just need to propagate the matrix $M$ to all the token machines. And each token machine $i$ computes $q_i^{\top}M$. By reversing the direction of message propagation in the computing sum tree, we can propagate $M$ to all the token machines in $O(\frac{1}{\eps'-\eps})$ rounds. Therefore, we can simulate kernel attention with $O(\frac{1}{\eps'-\eps})$ rounds in total. 
\end{proof}
    For $L$ layers of low-rank attention transformer, we construct the MPC for each layer using Lemma \ref{MPC-simulate-lowrank} and again we use the local computation of each token machine to simulate the element-wise computation. We stack the $L$ MPCs together, which has $O(\frac{L}{\eps'-\eps})$ rounds. The theorem follows from applying Theorem \ref{EMA-simulate-MPC} and Corollary \ref{ANNA-simulate-MPC}. 
\end{proof}

Since the core of the proof is through MPC simulating low-rank Attention, we can also apply Theorem \ref{improved-EMA-simulate-mpc} and Corollary \ref{improved-ANNA-simulate-mpc} which simulate MPC with better embedding dimension to get a improved embedding dimension for simulating low-rank attention transformer. 

\begin{corollary}[ANNA/EMA simulates low-rank Attention with improved embedding dimension]
    For constants $0<\eps<\eps'<1$, any low-rank attention based transformer with depth $L$, rank $r$, embedding dimension $m$ and $rm=O(N^{\eps})$ can be simulated by an EMA/ANNA-transformer with depth $O(\frac{L}{(\eps'-\eps)\cdot \min (\eps^2, (\eps'-\eps)^2)})$, number of heads $H=O(N^{\frac{\eps'-\eps}{4}})$ and embedding dimension $m=O(N^{\eps'})$.
\end{corollary}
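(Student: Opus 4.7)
The plan is to compose the two simulations that have already been established in the paper. First, I would invoke \Cref{MPC-simulate-lowrank}, which shows that any single low-rank attention layer with $rm=O(N^{\eps})$ can be implemented by an $O(1/(\eps'-\eps))$-round MPC protocol having $s=O(N^{\eps'})$ local memory and $O(N)$ machines. Stacking this construction across all $L$ layers of the low-rank transformer, with the element-wise operations (query/key/value feature maps $Q', K', V$, the post-attention aggregations, and any MLP-style transformations) absorbed into each token machine's arbitrary local computation between communication rounds, yields a single deterministic MPC protocol $\pi$ of $R=O(L/(\eps'-\eps))$ rounds that computes the same function as the $L$-layer low-rank transformer. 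I would check that intermediate states of size $O(N^{\eps})$ per token fit comfortably within the $O(N^{\eps'})$ local memory, so the composition does not inflate either the memory bound or the machine count.

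Second, I would feed this composed MPC protocol into \Cref{improved-EMA-simulate-mpc} (for EMA) or \Cref{improved-ANNA-simulate-mpc} (for ANNA). That result simulates an $R$-round $(\eps,\eps)$-MPC by an EMA/ANNA-transformer of depth $O(R)$, $H=O(N^{(\eps'-\eps)/4})$ heads per layer, and embedding dimension $O(N^{\eps'})$. To obtain the explicit $\min(\eps^{2},(\eps'-\eps)^{2})$ factor in the stated depth, I would unpack the proof of \Cref{improved-EMA-simulate-mpc}: it first converts the original protocol into a $(\gamma,\eps,\rho)$-restricted MPC with communication capacity $N^{\rho}$ for $\rho=\min(\eps/2,(\eps'-\eps)/4)$ via \Cref{local-mpc-simulates-mpc}, incurring a round blow-up of $O((1+\gamma)^{2}/\rho^{2})$, and then simulates each restricted round with $O(1)$ ANNA layers using the routing gadget of \Cref{EMA-simulate-local-mpc}. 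Substituting $R=O(L/(\eps'-\eps))$ gives the claimed depth $O(L/((\eps'-\eps)\cdot\min(\eps^{2},(\eps'-\eps)^{2})))$, while $H$ and $m$ are inherited directly from the MPC-to-ANNA/EMA step.

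There is essentially no new construction required; the main obstacle is bookkeeping. I would need to verify three items: that the chain of element-wise operations between consecutive low-rank layers can be folded into the per-token local computation of the composed MPC (so no extra rounds arise from the interface), that the MPC produced by stacking $L$ copies of \Cref{MPC-simulate-lowrank} genuinely fits the parameter regime required by \Cref{improved-EMA-simulate-mpc} (i.e., $(\eps,\eps)$-MPC with $s=O(N^{\eps'})$ and deterministic transitions), and that the parameter substitution $R=O(L/(\eps'-\eps))$ followed by the $1/\min(\eps^{2},(\eps'-\eps)^{2})$ round blow-up from $(\gamma,\eps,\rho)$-restriction reproduces the exact exponent in the corollary. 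Once these sanity checks are carried out, the corollary follows by direct composition, exactly as the paragraph preceding the corollary already outlines.
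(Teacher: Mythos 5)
Your proposal is correct and matches the paper's own argument: the paper establishes the corollary by exactly this two-step composition, first stacking $L$ applications of \Cref{MPC-simulate-lowrank} to obtain an $O(L/(\eps'-\eps))$-round MPC protocol and then feeding that protocol into \Cref{improved-EMA-simulate-mpc} / \Cref{improved-ANNA-simulate-mpc}, with the $\min(\eps^2,(\eps'-\eps)^2)$ factor emerging from the $\rho$-restriction step inside that theorem's proof. One small bookkeeping wrinkle you should make explicit (the paper also leaves it implicit): the MPC produced by \Cref{MPC-simulate-lowrank} has local memory $O(N^{\eps'})$, whereas \Cref{improved-EMA-simulate-mpc} is stated for an $(\eps,\eps)$-MPC with $O(N^{\eps})$ memory, so a fresh intermediate exponent $\eps_{\mathrm{int}}\in(\eps,\eps')$ must be introduced (e.g.~$\eps_{\mathrm{int}}=(\eps+\eps')/2$) for the two steps to compose cleanly; once this is done, the resulting depth, head-count, and embedding-dimension bounds are at least as strong as those stated in the corollary.
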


\section{Discussion on Reformer}
\label{sec: discussion on reformer}

We formally define Reformer as a computational model here. 

\begin{definition}[Reformer attention]
Given query, key, value embeddings $Q(X),K(X),V(X)\in \mathbb{R}^{N \times m}$ such that $q_i := k_i = Q(X)[i,:] = K(X)[i,:], v_i = V(X)[i,:]$, \emph{Reformer} attention proceeds as follows:
\begin{enumerate}
    \item Apply a hash function $h:\mathbb{R}^m \rightarrow U$ on $\{q_1,\ldots,q_N\}$;
    \item Sort all $q_i$'s (and thus $k_i$'s) by $h(q_i)$ and partition all $q_i$'s into chunks of size $B \leq O(1)$, and let $h'(q_i)$ be the label of the chunk that $q_i$ is in (the queries in each chunk can have different hash values);
    \item For each $q_i$, only attend to $k_j$'s such that they are in the same chunk.
\end{enumerate} The output embedding for $q_i$ is therefore 
\[
\sum_{j: h'(k_j) = h'(q_i)}\frac{\exp(\langle q_i,k_j\rangle)}{\displaystyle\sum_{j':h'(k_{j'}) = h'(q_i)}\exp(\langle q_i,k_{j'}\rangle)}\cdot v_j.
\]
    
\end{definition}

 We define $f_{\ell}:[N]\rightarrow [N]^{B}$ as the function that specifies the set of keys each query should compute inner product with in the $\ell$-th layer. From the Reformer constraints, we have $\forall i\in [N]$:
\begin{enumerate}
    \item $f_{\ell}(i)=\{a_1, a_2, \ldots, a_B\}\in [N]^B$ is a set (no repetition). 
    \item $i \in f_{\ell}(i)$.
    \item For any $j\in f_{\ell}(i)$, $f_{\ell}(j)=f_{\ell}(i)$. 
\end{enumerate}
In the $\ell$-th layer attention computation for each query $q_i$, Reformer computes 
$$\sum_{j \in f_{\ell} (i)}\frac{\exp(\langle q_i,k_j\rangle)}{\displaystyle\sum_{j'\in f_{\ell}(i)}\exp(\langle q_i,k_{j'}\rangle)}\cdot v_j.
$$

We first study a restricted version of Reformer that fix the communication pattern beforehand i.e. $f_{\ell}$ is input-independent for all $\ell\in [L]$, and show that it can not compute the sum of all the input tokens. 
\begin{definition}[\SUM]
    Given input $X=(x_1, x_2, \ldots, x_N), x_i\in [M]$, and $M=N^{O(1)}$, the \SUM task is defined as $\SUM(X)=\sum_{i=1}^N x_i$. We say a Reformer $T$ computes \SUM if for all $X$, $T(X)_N=\SUM(X)$. 
\end{definition}
Here $T(X)_N$ is the $N$-th output of T given the input $X$. One can choose any position to be the final output position, and here WLOG we choose the last token to follow the autoregressive generation model convention. 
\begin{proposition}
     Fix $L=O(1)$ and $\{f_\ell\}_{\ell=1}^L$. Any Reformer $T$ with $L$ layers and each layer the attention pattern is specified by $\{f_\ell\}_{\ell=1}^L$ can not compute $\SUM(X)$: there exists an $X$, $|T(X)_N-\SUM(X)|\geq \epsilon$, for any $0<\epsilon<M/2$. 
\end{proposition}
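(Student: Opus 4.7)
The plan is to exploit the fact that when the communication pattern $\{f_\ell\}_{\ell=1}^L$ is fixed in advance (input-independent) with per-position arity $B=O(1)$ and depth $L=O(1)$, the $N$-th output token can depend on only $O(1)$ input coordinates. Since $\SUM(X)$ genuinely depends on all $N$ coordinates, we can alter most of the input without changing $T(X)_N$ and thereby drive the two quantities apart.

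Concretely, I would define the receptive field $R_\ell(i) \subseteq [N]$ as the set of input positions that can influence the embedding $X^{(\ell)}_i$. Since each layer produces $X^{(\ell)}_i$ by first applying the per-token maps $Q,K,V$ to the previous embeddings, then averaging the value vectors at positions $j \in f_\ell(i)$, then applying a row-wise/linear post-processing, the receptive field satisfies
\[
R_0(i) = \{i\}, \qquad R_\ell(i) = \bigcup_{j \in f_\ell(i)} R_{\ell-1}(j),
\]
and a routine induction on $\ell$ gives $|R_\ell(i)| \leq B^\ell$. In particular $|R_L(N)| \leq B^L$, a constant independent of $N$.

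Let $S := R_L(N)$, pick $N$ large enough that $N - |S| \geq 2$, and construct two inputs $X, X' \in [M]^N$ that agree on $S$ (say, both equal to $0$ there) but disagree elsewhere: set $X_i = 0$ for $i \notin S$ and $X'_i = M-1$ for $i \notin S$. Then $T(X)_N = T(X')_N$, because the computation at position $N$ reads only coordinates in $S$; meanwhile $|\SUM(X) - \SUM(X')| = (M-1)(N - |S|)$, which can be made arbitrarily large by taking $N$ large. For any fixed $\epsilon < M/2$, choose $N$ so that $(M-1)(N - |S|) > 2\epsilon$; the triangle inequality then gives
\[
|T(X)_N - \SUM(X)| + |T(X')_N - \SUM(X')| \;\geq\; |\SUM(X) - \SUM(X')| \;>\; 2\epsilon,
\]
so at least one of $X, X'$ is the desired witness.

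I do not anticipate a genuine obstacle, since the argument is essentially a locality bound. The only subtlety worth checking is that none of the transformer internals (the arbitrary per-token maps $Q,K,V$, the head projections $W_h$, and the row-wise output function $\psi$) can enlarge the receptive field; all act position-wise, so they do not. The argument leans crucially on $f_\ell$ being input-independent: in the actual Reformer, $f_\ell$ is determined by hashing the input tokens, and even a single such layer could in principle route information from all positions to position $N$, so the receptive-field collapse would fail.
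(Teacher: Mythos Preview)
Your proposal is correct and follows essentially the same locality argument as the paper: both show by induction that the layer-$\ell$ embedding at position $i$ depends on at most $B^\ell$ input coordinates, so with $B,L=O(1)$ the output at position $N$ has a constant-size receptive field, and varying inputs outside it forces a large error. Your two-input triangle-inequality endgame is a minor variation on the paper's single-coordinate argument, and your remark that the argument leans on input-independence of $f_\ell$ is exactly the point the paper is making.
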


\begin{proof}
    We denote each layer's element-wise computation by $\{\phi_{\ell}\}_{\ell=1}^L$. Let $T^{\ell}(X)_i$ denote the $i$-th output of $T$ after $\ell$ layers of computation. We prove this proposition by induction. 

    Inductive hypothesis: $T^{\ell}(X)_i$ is a function of at most $B^{\ell}$ different $x_i\in X$. 
    
    Base case: $\ell=1$ 
    \begin{align*}
        T^1(X)_i &= \sum_{j \in f_{1} (i)}\frac{\exp(\langle q_i,k_j\rangle)}{\displaystyle\sum_{j'\in f_{1}(i)}\exp(\langle q_i,k_{j'}\rangle)}\cdot v_j \\
        & = \sum_{j \in f_{1} (i)}\frac{\exp(\langle Q(x_i),K(x_j)\rangle)}{\displaystyle\sum_{j'\in f_{1}(i)}\exp(\langle Q(x_i),K(x_{j'})\rangle)}\cdot V(x_j) \\
        & = \phi_1(x_{a_1}, x_{a_2}, \ldots, x_{a_B}) \; \text{where} f_1(i)=\{a_1, \ldots, a_B\}
    \end{align*}
    which is a function of at most $B$ $x_i$'s in $X$. 
    
    Inductive step: consider 
    \begin{align*}
        T^{\ell+1}(X)_i &= \sum_{j \in f_{\ell+1} (i)}\frac{\exp(\langle Q(T^{\ell}(X)_i),K(T^{\ell}(X)_j)\rangle)}{\displaystyle\sum_{j'\in f_{\ell+1}(i)}\exp(\langle Q(T^{\ell}(X)_i),K(T^{\ell}(X)_{j'})\rangle)}\cdot V(T^{\ell}(X)_j) \\
        &= \phi_{\ell+1}(T^{\ell}(X)_{a_1}, \ldots, T^{\ell}(X)_{a_B}) \; \text{where} f_1(i)=\{a_1, \ldots, a_B\}
    \end{align*}
    Since each $T^{\ell}(X)_{a_j}$ is a function of at most $B^{\ell}$ variables from $X$, $T^{\ell+1}(X)_i$ is a function of at most $B \cdot B^{\ell}=B^{\ell+1}$ variables from $X$. 

    Therefore, if $T^L(X)_i$ is a function of all $\{x_1, \ldots, x_N\}$, we need $B^L\geq N$ and thus $L = \Omega(\log_{B} N)$. In the case $B=O(1)$ and $L=O(1)$, $T^L(X)_i$ is a function of $B^L \ll N$ variables. WLOG, consider $T^L(X)_N$ is a function of $\{x_1, \ldots, x_{B^L}\}$. Then, $x_{B^L+1}$ can be any number in $[M]$ that makes $T^L(X)_N$ far from $\SUM(X)$. 
\end{proof}

Therefore, if Reformer has any power, it must come from the sorting part, because the sorting algorithm have access to the information of all the token inputs. 

Although constant-layer Reformer can not compute \SUM, one can easily show that one layer of ANNA-transformer can compute \SUM by setting $v_i=Nx_i$ and $k_1=k_2=\cdots=k_N=q_N$ for all $i\in [N]$, thereby retrieving all the $v_i$'s and averaging them. 

\section{ANNA-transformer Solves $k$-hop and $\matchtwo$}

\subsection{ANNA/EMA-transformer Solves $\matchtwo$} \label{proof:anna solves match2}

\begin{theorem} \label{EMA solves match2}
For any $N, M=N^{O(1)}$, there exists an EMA-transformer $T$ with one layer, one attention head, and embedding dimension $1$ such that $T(X)=\matchtwo(X)$ for all $X\in [M]^N$. 
\end{theorem}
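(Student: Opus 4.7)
The plan is to exploit the fact that the Match2 condition $x_i + x_j \equiv 0 \pmod M$ is equivalent to exact equality between $-x_i \bmod M$ and $x_j \bmod M$. Since EMA retrieves precisely those keys that exactly match the query, this rewriting makes EMA a natural fit for Match2. We only need scalar (dimension $1$) embeddings, since a single residue mod $M \leq \poly(N)$ fits in one word of precision.

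Concretely, I would set $d = 1$ and take the input embedding to be $x_i$ itself. Define the three element-wise functions $Q, K, V : \mathbb{R} \to \mathbb{R}$ by
\[
Q(x) := (-x) \bmod M, \qquad K(x) := x \bmod M, \qquad V(x) := 1.
\]
Then $q_i = k_j$ if and only if $x_i + x_j \equiv 0 \pmod M$, so $\mathcal{N}(q_i) = \{k_j : x_i + x_j \equiv 0 \pmod M\}$. Applying a single EMA head gives
\[
\emattn_{K,V}(q_i) =
\begin{cases}
\frac{1}{|\mathcal{N}(q_i)|} \sum_{j \in \mathcal{N}(q_i)} 1 = 1 & \text{if } \exists\, j \text{ with } x_i + x_j \equiv 0 \pmod M,\\
0 & \text{otherwise},
\end{cases}
\]
which is exactly $\matchtwo(X)_i$. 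The output function $\psi$ can be the identity. This uses one layer, one head, and embedding dimension $m = 1$, as required.

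There is essentially no obstacle here: the only thing to verify is that $Q, K, V$ are admissible element-wise functions, which is immediate from the framework's convention (following \cite{transformer-mpc}) of allowing arbitrary element-wise maps limited only by bit-precision; since $M = N^{O(1)}$, the residues $x \bmod M$ and $(-x) \bmod M$ are representable in $O(\log N)$ bits. Finally, since EMA is the special case of ANNA with $r=0$ and $c \to \infty$, the construction immediately yields the corresponding one-layer, one-head, dimension-$1$ ANNA-transformer claimed in \Cref{thm:ANNA solves match2}.
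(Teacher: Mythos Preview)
Your proposal is correct and follows essentially the same idea as the paper: set the query and key embeddings so that $q_i = k_j$ exactly when $x_i + x_j \equiv 0 \pmod M$, set all values to $1$, and read off the EMA output. The only cosmetic difference is that the paper realizes the embeddings as affine maps on $(x,1)$ (so $q_i = x_i$ and $k_j = M - x_j$), whereas you use arbitrary element-wise maps with an explicit $\bmod\ M$; both are admissible under the stated framework.
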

\begin{proof}
Given input $X \in [0,M]^{N \times 1}$. Let $Q(X)=\phi(X)Q, K(X)=\phi(X)K, V(X)=\phi(X)V$, where $Q, K, V$ are matrices in $\R^{2\times 1}$. 
Define $\phi$ by $\phi(x) = (x,1)$ and 
\[
Q = 
\begin{pmatrix}
    1 \\
    0
\end{pmatrix},
K = 
\begin{pmatrix}
    -1\\
    M
\end{pmatrix},
V = 
\begin{pmatrix}
    0\\
    1
\end{pmatrix}
\] such that 
\[
\phi(X)Q = 
\begin{pmatrix}
    x_1\\
    x_2\\
    \vdots\\
    x_N
\end{pmatrix},
\phi(X)K = 
\begin{pmatrix}
    M-x_1\\
    M-x_2\\
    \vdots\\
    M-x_N
\end{pmatrix},
\phi(X)V = 
\begin{pmatrix}
    1\\
    1\\
    \vdots\\
    1
\end{pmatrix}.
\] As a result, for each $1 \leq i\leq N$, if there exists $1 \leq j \leq N$ such that $x_i+x_j = M$, then 
\[
((\phi(X)Q)(\phi(X)K)^{\top})[i,j] = \frac{1}{|\{j \in [N]: x_i+x_j = M\}|}.
\] Otherwise, $((\phi(X)Q)(\phi(X)K)^{\top})[i,j] = 0$ for all $1 \leq j \leq N$. Finally, we can calculate that if $|\{j \in [N]: x_i+x_j = M\}| \neq 0$, then
\[
\emattn(\phi(X)Q,\phi(X)K,\phi(X)V)[i] = \frac{1}{|\{j \in [N]: x_i+x_j = M\}|}\cdot |\{j \in [N]: x_i+x_j = M\}| = 1,
\] and if $|\{j \in [N]: x_i+x_j = M\}| = 0$, then 
\[
\emattn(\phi(X)Q,\phi(X)K,\phi(X)V)[i] = 0. 
\qedhere
\]
\end{proof}

That gives the same result for ANNA-transformers.
\begin{corollary}
For any $N, M=N^{O(1)}$, there exists an ANNA-transformer $T$ with one layer, one attention head, and embedding dimension $1$ such that $T(X)=\matchtwo(X)$ for all $X\in [M]^N$. 
\end{corollary}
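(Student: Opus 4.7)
The plan is to invoke the preceding theorem on \emattn-transformers and appeal to the observation, already made in the paper, that \emattn is a special case of \annattn. Concretely, I would proceed in two short steps.

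First, I would recall the construction from the preceding \emattn theorem: with the feature map $\phi(x) = (x, 1)$ and the rank-one query/key/value matrices chosen so that $q_i = x_i$ and $k_j = M - x_j$ with $v_j = 1$, we have $q_i = k_j$ iff $x_i + x_j \equiv 0 \pmod{M}$. The \emattn head then outputs exactly $\ind\{\exists j\colon x_i + x_j = 0 \bmod M\}$ at position $i$, which is the desired \matchtwo output. This uses one layer, one head, and embedding dimension $1$.

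Second, I would verify that this \emattn-transformer is literally a valid \annattn-transformer. The remark in Section 3.2 and the discussion at the top of Appendix B already note that taking $r = 0$, $c \to \infty$, $\ell = 1$, and $\eta = 0$ with the weight choice $w_{i,j} = 1/|\{j' : k_{j'} = q_i\}|$ when $k_j = q_i$ (and zero otherwise) satisfies both bullet conditions in the definition of \annattn: the first is vacuous because $\mathcal{N}(q_i, cr)$ can be taken to be all of the key set, and the second reduces to $w_{i,j} \geq 1/|\mathcal{N}(q_i, cr)|$ for $k_j \in \mathcal{N}(q_i, 0) = \{k : k = q_i\}$, which holds by construction. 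Hence the \emattn transformer built above is already an \annattn transformer with the claimed one-layer, one-head, embedding-dimension-one resources.

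There is essentially no obstacle: the only thing to be slightly careful about is matching the \annattn parameterization $(r, c, \ell, \eta)$ with the \emattn weights, which I address in the second step. Because the reduction is deterministic and exact, no approximation or failure probability needs to be propagated, and the $M = N^{O(1)}$ assumption is only used through the preceding \emattn theorem (so that $\phi(x) = (x,1)$ and the constant $M$ fit in bounded precision).
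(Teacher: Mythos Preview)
Your proposal is correct and matches the paper's approach exactly: the paper simply remarks ``That gives the same result for ANNA-transformers'' after the \emattn construction, relying on the earlier observation that \emattn is a special case of \annattn (with $r=0$, $c\to\infty$, uniform weights on exact matches). Your two-step plan is a faithful expansion of this, and if anything is more explicit than what the paper writes; the only minor imprecision is that with $r=0$ the more natural reading is $cr=0$ so that $\mathcal{N}(q_i,cr)$ is the exact-match set (making the first bullet hold substantively rather than vacuously), but either reading validates the \annattn constraints.
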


\subsection{ANN transformer Solves $k$-hop} \label{proof:anna solves k-hop}

We first show that ANNA transformers can solve induction head ($1$-hop).

\begin{lemma}
\label{lem: ANN transformers solve induction head}

Fix constants $0<\eps<\eps'<1$, and $|\Sigma|\leq N$. 
There exists an ANNA-transformer $T$ with $L=O(\frac{1}{\eps \cdot (\eps'-\eps)^2})$ layers, $H=O(N^{(\eps'-\eps)/4})$ heads per layer, and embedding dimension $m=O(N^{\eps'})$ such that $T(w)_i=w_{\sigma(w,i)}$, if $\sigma(w,i)\neq 0$;  $T(w)_i=\bot$, if $\sigma(w,i)= 0$ $\forall i\in [N]$, for all $w \in \Sigma^N$. 

\end{lemma}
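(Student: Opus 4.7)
The plan is to build a constant-round MPC protocol for the 1-hop induction heads task and then invoke Theorem~\ref{thm:ANNA-simulate-mpc} (more precisely, Corollary~\ref{improved-ANNA-simulate-mpc}) to convert it into an ANNA-transformer. The core idea is to reduce the combinatorial structure of $\sigma(w,\cdot)$ to one lexicographic sort followed by one routing step, both of which are standard sub-linear memory MPC primitives.

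First, reformulate the task. Substituting $j' = j - 1$ in the definition of $\sigma$, we have
\[
\sigma(w, i) \;=\; j'(i) + 1, \qquad \text{where } j'(i) \;:=\; \max\{j' \in \{1, \ldots, i-1\} : w_{j'} = w_i\},
\]
with $\sigma(w,i) = 0$ when the set is empty. The desired output is $w_{j'(i)+1}$ (or $\bot$ when $\sigma(w,i)=0$). The key observation is that $j'(i)$ can be read off from a single sort: if we lexicographically sort all positions by $(w_i, i)$, then the immediate predecessor of $i$ in the sorted order, call it $i'$, satisfies $w_{i'} = w_i$ if and only if $j'(i)$ exists, in which case $j'(i) = i'$.

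The MPC protocol proceeds as follows. (i) Each machine creates records $(w_i, i)$ for the positions $i$ it holds. (ii) Sort all $N$ records lexicographically by $(w_i, i)$ using the sub-linear memory MPC sort of Lemma~\ref{lemma:mpc-sorting}, which takes $O(1/(\eps' - \eps))$ rounds with $O(N^{1+\eps-\eps'})$ machines. (iii) Each sorted record inspects its immediate predecessor in the sorted order to compute $j'(i)$ (or flag $\sigma(w,i) = 0$); this is a local operation plus a single boundary exchange between adjacent machines, taking $O(1)$ rounds. (iv) For each $i$ with a valid $j'(i)$, route a request to the machine holding position $j'(i) + 1$, fetch $w_{j'(i)+1}$, and return the answer to position $i$; this is standard token routing, taking $O(1)$ rounds.

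Applying Corollary~\ref{improved-ANNA-simulate-mpc} to this $O(1/(\eps' - \eps))$-round protocol yields an ANNA-transformer of depth $O(1/(\eps \cdot (\eps'-\eps)^2))$, with $H = O(N^{(\eps'-\eps)/4})$ heads per layer and embedding dimension $m = O(N^{\eps'})$, matching the stated bounds. The $\bot$ case is handled by the arbitrary per-token element-wise output function, which emits $\bot$ for the positions flagged in step (iii). I expect the main technical nuisance to be verifying that all intermediate messages fit within the $O(N^{\eps'})$ local memory budget (in particular at the boundaries between adjacent machines in the sorted order), but this is handled in a black-box fashion by the standard MPC sorting and routing primitives, so no essentially new idea is needed beyond the reduction itself.
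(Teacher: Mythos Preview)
Your approach is correct and essentially identical to the paper's: sort the records lexicographically by $(w_i,i)$, read $\sigma(w,i)$ off the immediate predecessor in sorted order, and then apply Corollary~\ref{improved-ANNA-simulate-mpc}; the only cosmetic difference is that the paper attaches $w_{i+1}$ to each record \emph{before} sorting, so your step~(iv) is absorbed into step~(i). One parameter nit: take the MPC local memory to be $O(N^{\eps})$ rather than $O(N^{\eps'})$, so that sorting the $O(\log N)$-sized records costs $O(1/\eps)$ rounds and the parameters feed directly into Corollary~\ref{improved-ANNA-simulate-mpc} to yield the stated depth $O(1/(\eps(\eps'-\eps)^2))$.
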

\begin{proof}
    We prove this lemma by designing a constant-round MPC algorithm with local memory $s=O(N^{\eps})$ and $N/s$ machines to solve $1$-hop. Since $|\Sigma|\leq N$, each token can be embedded with $O(\log N)$ bits ($O(1)$ words). Denote the input $w^N=(x_1, x_2, \ldots, x_N)$. The MPC algorithm works as following:
    \begin{enumerate}
        \item For each $x_i$, retrieve the next token $x_{i+1}$ and each token on the machine is stored as the embedding of $(x_i, i, x_{i+1}, i+1)$. 
        \item Define a comparator $<$ for the object $(x_i, i, x_{i+1}, i+1)$. For two tuples $(x_i, i, x_{i+1}, i+1)$ and $(x_j, j, x_{j+1}, j+1)$, if $x_i\neq x_j$, then $x_i<x_j \Rightarrow (x_i, i, x_{i+1}, i+1)<(x_j, j, x_{j+1}, j+1)$; if $x_i= x_j$, then $i<j \Rightarrow (x_i, i, x_{i+1}, i+1)<(x_j, j, x_{j+1}, j+1)$. Sort $(x_i, i, x_{i+1}, i+1)$ by the comparator $<$. 
        \item Each token $(x_i, i, x_{i+1}, i+1)$ in the sorted list retrieves the token  before it in the sorted list, denoted by $(x_j, j, x_{j+1}, j+1)$. Update the embedding of token: if $x_j=x_i$, the embedding of the token $x_i$ becomes $(i, x_{j+1}, j+1)$ i.e. $(i, w_{\sigma(w, i)}, \sigma(w, i))$; if $w_j \neq w_i$, then the embedding of the token $x_i$ becomes $(i, \bot, 0)$.
        \item Send each $(i, w_{\sigma(w, i)}, \sigma(w, i))$ to the correct output machine $\ceil{\frac{i}{s}}$ and output $w_{\sigma(w, i)}$ for token $i$. 
    \end{enumerate}
    For step 1, each machine only needs to send message to its neighbor machine: machine $i$ sends message to machine $i-1$, and this only takes 1 round. In step 2, each tuple is only $O(\log N)$ bits, so by \Cref{lemma:mpc-sorting}, the sorting takes $O(\frac{1}{\eps})$ rounds. In step 3, again each machine only needs to send message to its neighbor machine: machine $i$ sends message to machine $i+1$, and this only takes 1 round. In step 4, each machine for the sorted list sends at most $s$ tuples stored in it to the correct output machine which takes 1 round. Thus, the MPC algorithm has $O(\frac{1}{\eps})$ rounds in total. 

    Then, we convert this MPC algorithm to an ANNA-transformer. By \Cref{improved-ANNA-simulate-mpc}, this gives us an ANNA-transformer with number of heads $H=O(N^{(\eps'-\eps)/4})$, embedding dimension $m=O(N^{\eps'})$ and number of layers $L=O(\frac{1}{\eps \cdot (\eps'-\eps)^2})$.
\end{proof}

Now we show that ANNA-transformers can solve $k$-hop with $O(\log k)$ layers.

\begin{theorem}
Fix constants $0<\eps<\eps'<1$, $|\Sigma|\leq N$ and any $k\in \mathbb{N}$.
There exists an ANNA-transformer $T$ with $L=O\left(\frac{1}{\eps \cdot (\eps'-\eps)^2}+\frac{\log k}{(\eps'-\eps)^2}\right)$ layers, \smash{$H=O(N^{(\eps'-\eps)/4})$} heads per layer, and embedding dimension \smash{$m=O(N^{\eps'})$} such that \smash{$T(w)_i=w_{\sigma^k(w,i)}, \forall i\in [N]$}, for all $w \in \Sigma^N$.
\end{theorem}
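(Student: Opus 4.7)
The plan is to reduce the $k$-hop task to $O(\log k)$ rounds of MPC on top of the $1$-hop algorithm from \Cref{lem: ANN transformers solve induction head}, and then invoke \Cref{improved-ANNA-simulate-mpc} to turn the MPC protocol into an ANNA-transformer. The central idea is the standard pointer-jumping (repeated-squaring) trick: if each machine knows $\sigma^{2^j}(w,i)$ for position $i$, then one additional ``lookup'' round lets it learn $\sigma^{2^{j+1}}(w,i) = \sigma^{2^j}\!\bigl(w,\sigma^{2^j}(w,i)\bigr)$. Doing this $\lceil\log_2 k\rceil$ times and then combining the appropriate powers according to the binary expansion of $k$ yields $\sigma^k(w,i)$.

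Concretely, I would first run the MPC algorithm in the proof of \Cref{lem: ANN transformers solve induction head} to compute, for each $i\in[N]$, the pair $(i,\sigma(w,i))$, using the sentinel value $0$ when $\sigma(w,i)$ is undefined (and extending $\sigma^k$ so that $0$ is a fixed point). This costs $O(1/\eps)$ rounds. Next, for $j=0,1,\dotsc,\lceil\log_2 k\rceil-1$, I carry out a pointer-jumping round: each machine currently holding $(i,s_i)$ with $s_i=\sigma^{2^j}(w,i)$ sends a ``lookup request'' asking for $s_{s_i}$. This request-response pattern is exactly an MPC routing problem where tokens must be matched by destination index, solvable in $O(1/(\eps'-\eps))$ rounds by sorting the pairs $(i,s_i)$ by $s_i$, merging with the sorted list of $(j,s_j)$ by key $j$, and routing the answer back to $i$ (which is precisely the same sort-and-route primitive used in \Cref{lem: ANN transformers solve induction head} step 2--3, itself a consequence of \Cref{lemma:mpc-sorting}). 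After $\lceil\log_2 k\rceil$ such doublings, each machine holds $\sigma^{2^j}(w,i)$ for every $j\le\lceil\log_2 k\rceil$. To obtain $\sigma^k$, I compose these powers in the order dictated by the binary digits of $k$; this is another $O(\log k)$ lookup rounds of the same type. Finally, one more lookup round maps $\sigma^k(w,i)$ to $w_{\sigma^k(w,i)}$, and the output is routed to the original token positions.

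The total round complexity is $O(1/\eps)+O(\log k/(\eps'-\eps))$, with local memory $s=O(N^\eps)$ and $O(N/s)$ machines, so \Cref{improved-ANNA-simulate-mpc} yields an ANNA-transformer with $L=O\!\bigl(\tfrac{1}{\eps(\eps'-\eps)^2}+\tfrac{\log k}{(\eps'-\eps)^2}\bigr)$ layers, $H=O(N^{(\eps'-\eps)/4})$ heads per layer, and embedding dimension $m=O(N^{\eps'})$, matching the claimed bound. The main technical obstacle I expect is bookkeeping rather than any deep argument: I need to make sure that (i) the fixed-point convention $\sigma(w,0)=0$ is consistently carried through repeated squaring so that ``short chains'' are handled correctly; (ii) the state maintained at each machine across the $\log k$ rounds fits within $O(N^{\eps})$ local memory, which holds because at any moment each position only needs to store $O(\log k)=O(\log N)$ previously computed powers, each of $O(\log N)$ bits; and (iii) the composition step that combines powers according to the binary expansion of $k$ is implemented as a sequence of the same lookup primitive, so its round cost is absorbed into the $O(\log k/(\eps'-\eps))$ budget rather than multiplying it.
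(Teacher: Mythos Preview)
Your overall strategy---build an MPC protocol via pointer-jumping on top of the $1$-hop algorithm, then invoke \Cref{improved-ANNA-simulate-mpc}---is exactly the paper's. The organization is slightly different (the paper interleaves the doubling and the partial-product accumulation, maintaining both $\sigma^{2^\ell}(w,i)$ and $\sigma^{k_{:\ell}}(w,i)$ at each step, whereas you do all doublings first and then compose), but that is cosmetic.

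The quantitative accounting, however, does not work out. In your MPC protocol the local memory is $s=O(N^{\eps})$ and each item is $O(1)$ words, so \Cref{lemma:mpc-sorting} gives $O(1/\eps)$ rounds per sort, not $O(1/(\eps'-\eps))$; the parameter $\eps'$ only enters when you pass to the ANNA simulation. With a sort per lookup, your MPC round count is $O((1+\log k)/\eps)$, and after the $O(1/(\eps'-\eps)^2)$ blowup from \Cref{improved-ANNA-simulate-mpc} you get $L=O\bigl((1+\log k)/(\eps(\eps'-\eps)^2)\bigr)$, which carries an extra $1/\eps$ factor on the $\log k$ term compared to the stated bound.

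The paper avoids this by observing that each doubling step can be done in exactly two MPC rounds by \emph{direct addressing} rather than sorting: the machine holding token $i$ sends $(i,\sigma^{2^\ell}(w,i))$ to machine $\lceil \sigma^{2^\ell}(w,i)/s\rceil$, which replies with its stored tuple. The reason this respects the $s$-word receive bound is that $\sigma(w,\cdot)$ is injective on its nonzero values (if $\sigma(w,i)=\sigma(w,i')=j>0$ with $i<i'$, then $w_i=w_{j-1}=w_{i'}$, so position $i+1>j$ is a valid candidate for $\sigma(w,i')$, a contradiction), hence so is $\sigma^{2^\ell}$, and no machine is targeted by more than $s$ requests. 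This gives $O(1/\eps)+O(\log k)$ MPC rounds, and after the simulation blowup the two terms become $O(1/(\eps(\eps'-\eps)^2))$ and $O(\log k/(\eps'-\eps)^2)$ as claimed. Your sort-based routing is correct and more general (it would work even without injectivity), but to match the theorem's constants you need this injectivity observation and the $O(1)$-round direct lookup it enables.
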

\begin{proof}
    We prove this theorem by constructing an $O(\log k)$-rounds MPC with $s=O(N^{\eps})$ local memory and $O(N/s)$ machines. We show that this MPC algorithm can compute $k$-hop by induction. Let $k = \sum_{j=0}^{\lfloor \log k \rfloor}k_j2^j$ and $k_{:\ell} = \sum_{j=0}^{\ell-1}k_j2^j$, where $k_j\in \{0, 1\}$. 

    Inductive hypothesis: after $O(\frac{1}{\eps})+2\ell$ rounds of MPC computation, the token embedding for each token $i$ encodes the information of this tuple
    $$ (i, w_{\sigma^{2^{\ell}}(w,i)}, \sigma^{2^{\ell}}(w,i), w_{\sigma^{k_{:\ell}}(w,i)}, \sigma^{k_{:\ell}}(w,i))
    $$
    Base case: $\ell=0, k=1$ is implied by \Cref{lem: ANN transformers solve induction head}. After step 3, we have $(i, w_{\sigma(w, i)}, \sigma(w, i))$. 
    Now consider $k=\ell+1$. For each $i$, the machine (Machine $\ceil{i/s}$) that contains $ (i, w_{\sigma^{2^{\ell}}(w,i)}, \sigma^{2^{\ell}}(w,i), w_{\sigma^{k_{:\ell}}(w,i)}, \sigma^{k_{:\ell}}(w,i))$ sends the message $(i, \sigma^{2^{\ell}}(w,i))$ to machine that contains $\sigma^{2^{\ell}}(w,i)$ as the first entry of the tuple which is machine $\ceil{\frac{\sigma^{2^{\ell}}(w,i)}{s}}$. Machine $\ceil{\frac{\sigma^{2^{\ell}}(w,i)}{s}}$ then send the following tuple to machine $\ceil{i/s}$:
    \begin{align*}
        & (\sigma^{2^{\ell}}(w,i), w_{\sigma^{2^{\ell}}(w,\sigma^{2^{\ell}}(w,i))}, \sigma^{2^{\ell}}(w,\sigma^{2^{\ell}}(w,i)), w_{\sigma^{k_{:\ell}}(w,\sigma^{2^{\ell}}(w,i))}, \sigma^{k_{:\ell}}(w,\sigma^{2^{\ell}}(w,i))) \\
        &= (\sigma^{2^{\ell}}(w,i), w_{\sigma^{2^{\ell+1}}(w,i)}, \sigma^{2^{\ell+1}}(w,i), w_{\sigma^{k_{:\ell}+2^{\ell}}(w,i)}, \sigma^{k_{:\ell}+2^{\ell}}(w,i)) \\
        &:= (\sigma^{2^{\ell}}(w,i), t_1, t_2, t_3, t_4)
    \end{align*}
    Since each machine has at most $s$ tuples and the function $\sigma(w, i)$ is one-to-one except for $\bot$, the number of messages each machine sends and receive is bounded by $s$. After machine $\ceil{i/s}$ receiving the above message, it update the tuple for the token $i$:
    \begin{enumerate}
        \item if $k_{\ell}=0$, token $i$ is updated as: $(i, t_1, t_2, w_{\sigma^{k_{:\ell}}(w,i)}, \sigma^{k_{:\ell}}(w,i))$
        \item if $k_{\ell}=1$, token $i$ is updated as: $(i,t_1, t_2, t_3, t_4)$
    \end{enumerate}
    By definition, the embedding of token $i$ now is:
    $$ (i, w_{\sigma^{2^{\ell+1}}(w,i)}, \sigma^{2^{\ell+1}}(w,i), w_{\sigma^{k_{:\ell+1}}(w,i)}, \sigma^{k_{:\ell+1}}(w,i))
    $$
    The above inductive step only takes 2 rounds of MPC. Therefore, the total round is $O(\frac{1}{\eps})+2(\ell+1)$. When $\ell=\lfloor \log k \rfloor+1$, this algorithm compute the output for $k$-hop. 
    
    Again, we can convert this MPC algorithm to an ANNA-transformer. By \Cref{improved-ANNA-simulate-mpc}, this gives us an ANNA-transformer with number of heads $H=O(N^{(\eps'-\eps)/4})$, embedding dimension $m=O(N^{\eps'})$ and number of layers $L=O(\frac{1}{\eps \cdot (\eps'-\eps)^2}+\frac{\log k}{(\eps'-\eps)^2})$.
\end{proof}

\section{Experimental details} \label{sec:exprimental details}

Here are the details of the experimental setup. All the experiments are launched on 2 GPUs: NIVIDIA Titan RTX and NVIDIA Titan Xp.

We train a modified version of the attention matrix and then distill from the trained model using ANNA implemented by the angular distance LSH family from \cite{angular-lsh}. Our softmax attention normalizes all the queries and keys in $Q(X)$ and $K(X)$ to have unit norm, and computes $\softmax (\beta \cdot Q(X)K(X)^{\top})V(X)$ with a hyperparameter $\beta>0$.

\subsection{\matchtwo experiments}

\paragraph{Dataset generation.} Inspired by the way \cite{strassen} generating data for Match3 task, a triple-wise version of \matchtwo, we generate the data for \matchtwo using the same algorithm but change to pair-wise relation when computing the label. Each sample is a tuple $(X, Y)$, where $X=(x_1, x_2, \ldots, x_N)$, and each $x_i$ is an integer sampled from $\{1, 2, \ldots, 36\}$; $Y=(y_1, y_2, \ldots, y_N)$, and each $y_i=\mathbbm{1}\{\exists j \centerdot x_i+x_j=0 \bmod 37 \}$. The sequence length $N$ is set to $32$. When sampling the data, we ensure that each batch is balanced by having the distribution of one's in $Y$ the same: each batch has $4$ bins and each bin corresponds to each percentage $[0, 25\%), [25\%, 50\%), [50\%, 75\%), [75\%, 100\%]$ of one's in $Y$; each bin size is $1/4$ of the batch size. See \Cref{algo:match2-dataset} for details. 

\begin{algorithm}
\caption{Match2 Dataset Generation} \label{algo:match2-dataset}
\begin{algorithmic}[1]
\Require $N = 32$, $M = 37$, dataset size $D$
\Ensure Dataset of $(X, Y)$ pairs

\State Initialize 4 empty bins for ones-percentage ranges: $[0,25)$, $[25,50)$, $[50,75)$, $[75,100]$
\State $N_b \gets D/4$

\While{total examples in bins $< D/40$}
    \State Sample $X \in \{1, \ldots, M\}^{N}$ uniformly at random
    \State Calculate the percentage $p$ of one's in $Y$
    \If{size of the bin $p$ is in $< N_b$}
        \State Add $(X, Y)$ to the correct bin
    \EndIf

\EndWhile

\For{each bin}
    \While{size of bin $ < N_b$}
  
        \State Randomly sample $(X, Y)$ from bin
        \State Sample permutation $\pi$ over $[0, \ldots, N - 1]$
        \State $X^\ast \gets X[\pi]$, $Y^\ast \gets Y[\pi]$
        \State Add $(X^\ast, Y^\ast)$ to bin

    \EndWhile
\EndFor

\State Combine and shuffle all bins into final dataset
\State \Return Dataset

\end{algorithmic}
\end{algorithm}

\paragraph{Training details.} We trained $3$ models with $\beta\in \{0.1, 1, 10\}$ respectively, with Adam optimizer on cross-entropy loss and learning rate $0.01$. Each model has one layer, one attention head, embedding dimension $m=64$ and an MLP with width $4m$ and GeLU activation. The dataset size, batch size, training steps are $10000, 32, 20000$ respectively. 

We apply ANNA with number of hash tables $\ell \in \{1, 2, \ldots, 16\}$ and number of hash functions for each table $z\in \{1, 2, \ldots, 6\}$ on all the $3$ trained models, and $\beta=0.1$ has the best performance (error can be $0$). Since the implementation of ANNA is randomized, for each combination of $(\ell, z)$, we run $10$ times and report the averaged error over the $10$ runs. See \Cref{fig:match2} for plotted performance when $\beta=0.1$. In this setting, $\ell \geq 8, z=1$ can achieve $0$ error on the test set with $256$ test samples.

\subsection{Induction heads experiments}

\paragraph{Dataset generation.} We use the data generation algorithm from \cite{transformer-mpc}. Each sample $(X, Y)$ is of the form $X=(k, X')$ for $k\in \{0, 1\}$ for training samples and $k=1$ for test samples, $X'\in \Sigma^{N-1}$, $Y=(0, Y')$, where $Y'$ is the $k$-hop label for $X'$. Here the sequence length $N=100$ and alphabet size $|\Sigma|=4$. 

\paragraph{Training details.} We trained $3$ models with $\beta\in \{0.1, 1, 10\}$ respectively, with Adam optimizer on cross-entropy loss and learning rate $0.01$. Each model has $2$ layers, each layer with one attention head, embedding dimension $m=128$ and an MLP with width $4m$ and GeLU activation. We use online training: at each training step, sample fresh new data to train. The batch size and training steps are $32, 400000$ respectively. 

We apply ANNA on all the $3$ trained models. For the first layer, the number of hash tables $\ell$ is chosen from $\{32, 40, 48, \ldots, 96\}$ and $z$ is chosen from $\{1, 2, 3, 4\}$. For the second layer, the number of hash tables $\ell$ is chosen from $\{4, 8, 12, \ldots, 32\}$ and $z$ is chosen from $\{1, 2, 3, 4\}$. When evaluating the test error, we compute the error on all the tokens. Note that this is different from \cite{transformer-mpc}, where they only compute the error on the tokens whose induction head exists to avoid overestimating the performance when $k$ is large and has a large fraction of null outputs. In our setting, $k=1$ which doesn't have many null outputs, and it is important for the model to learn when to output the null token. 

We found $\beta=1$ has the best performance, so we report it in \Cref{fig:induction heads}. Again, the errors are averaged over $10$ runs for each combinations and taken the minimum over $z$'s. One can see that $32$ hash tables in the first layer and $4$ hash tables in the second layer already gives highly non-trivial performance: the error rate is $0.2$ over $100$ samples and each sample has $100$ token predictions, while random guess would give $0.75$ error rate. With more hash tables in the first layer, the error rate can go below $0.1$.

\end{document}